\documentclass{article}

\usepackage{microtype}
\usepackage{graphicx}
\usepackage{subcaption} % defines subfigure environment
\usepackage{booktabs} % for professional tables

\usepackage{arxiv}

\usepackage[utf8]{inputenc} % allow utf-8 input
\usepackage[T1]{fontenc}    % use 8-bit T1 fonts
\usepackage{hyperref}       % hyperlinks
\usepackage{url}            % simple URL typesetting
\usepackage{booktabs}       % professional-quality tables
\usepackage{amsfonts}       % blackboard math symbols
\usepackage{nicefrac}       % compact symbols for 1/2, etc.
\usepackage{microtype}      % microtypography
\usepackage{lipsum}		% Can be removed after putting your text content
\usepackage{graphicx}
\usepackage{natbib}
\usepackage{doi}

\usepackage{amsmath}
\usepackage{wrapfig}
\usepackage{amssymb}
\usepackage{mathtools}
\usepackage{amsthm}
\usepackage{xcolor}
\usepackage{algorithm}
\usepackage{algorithmic}
\usepackage{amsmath,amssymb}
\usepackage{caption}
\usepackage{aliascnt}
\usepackage[capitalize,noabbrev]{cleveref}

\theoremstyle{plain}
\newtheorem{theorem}{Theorem}[section]
\newaliascnt{proposition}{theorem}
\newtheorem{proposition}[proposition]{Proposition}
\aliascntresetthe{proposition}
\newaliascnt{challenge}{theorem}
\newtheorem{challenge}[challenge]{Challenge}
\aliascntresetthe{challenge}
\newaliascnt{question}{theorem}
\newtheorem{question}[question]{Question}
\aliascntresetthe{question}

\theoremstyle{definition}

\theoremstyle{remark}

\title{A Long-Short Flow-Map Perspective for Drifting Models}

\author{ \hspace{1mm}Zhiqi Li\\
	School of Interactive Computing\\
	Georgia Institute of Technology\\
	\texttt{zli3167@gatech.edu} \\
	\And
	Bo Zhu \\
	School of Interactive Computing\\
	Georgia Institute of Technology\\
	\texttt{bo.zhu@gatech.edu} \\
}

\renewcommand{\shorttitle}{A Long-Short Flow-Map Perspective for Drifting Models}

\hypersetup{
pdftitle={A Long-Short Flow-Map Perspective for Drifting Models},
pdfauthor={Zhiqi Li, Bo Zhu},
pdfkeywords={One-Step Generation, Flow Map Method, Drifting Model, Closed-Form Solution},
}

\begin{document}
\maketitle

\begin{abstract}
	This paper provides a reinterpretation of the Drifting Model~\cite{deng2026generative} through a semigroup-consistent long-short flow-map factorization. We show that a global transport process can be decomposed into a long-horizon flow map followed by a short-time terminal flow map admitting a closed-form optimal velocity representation, and that taking the terminal interval length to zero recovers exactly the drifting field together with a conservative impulse term required for flow-map consistency. Based on this perspective, we propose a new likelihood learning formulation that aligns the long-short flow-map decomposition with density evolution under transport. We validate the framework through both theoretical analysis and empirical evaluations on benchmark tests, and further provide a theoretical interpretation of the feature-space optimization while highlighting several open problems for future study.
\end{abstract}

% keywords can be removed
\keywords{One-Step Generation, Flow Map Method, Drifting Model, Closed-Form Solution}

\newcommand{\bo}[1]{\textcolor{blue}{[Bo: #1]}}
\newcommand{\zhiqi}[2]{\textcolor{blue}{[Zhiqi: #2]}}

\section{Introduction}

Flow-based generative modeling formulations such as Flow Matching~\cite{lipman2023flow, lipman2024flow} describe the evolution of a probability distribution through a time-dependent velocity field that transports a simple base distribution to the data distribution. While these continuous-time methods provide stable regression objectives and strong empirical performance, sample generation typically requires numerically integrating the learned dynamics over tens to hundreds of steps. This sampling cost has motivated the development of few-step and one-step generative frameworks~\cite{frans2025shortcut, geng2025mean, geng2025improved, song2023consistency, zhou2025inductive}, which aim to learn direct transport maps that produce samples in a single evaluation.

Among these approaches, the Drifting Model~\cite{deng2026generative} was proposed as a deterministic one-step generator motivated from a distribution-matching perspective and has demonstrated competitive results. The method learns a deterministic mapping $f$ such that the pushforward of the noise distribution by $f$ matches the data distribution, enabling one-step sampling by $x_1=f(x_0)$. While the method is well justified from its original formulation, its connection to established transport and flow-map frameworks has not yet been explicitly characterized. In particular, it is of interest to understand how a one-step transport map relates to trajectory-consistent flow-map structures, and how dataset-level supervision arises within such a formulation without multi-step distillation or optimization.

%\begin{figure}[t]
%  \centering
%  \includegraphics[width=0.5\columnwidth]{images/drifting_main2.png}
%  \caption{\textbf{Overview of the Long-Short Flow Map framework.}
%  Leveraging flow-map trajectory consistency, we decompose the full map $\psi_{0\to 1}$ into a long map $\psi_{0\to 1-\Delta t}$ and a short map $\psi_{1-\Delta t\to 1}$. The short map is approximated using forward Euler or the trapezoidal rule and then computed via the closed-form solution in flow matching, providing dataset-level supervision for learning the long map $\psi_{0\to 1-\Delta t}$. Taking the limit $\Delta t \to 0$ yields a long-short flow-map derivation of the Drifting Model.}
%  \label{fig:teaser}
%\end{figure}

In this paper, we provide a flow-map-based interpretation and derivation of the Drifting Model. Our key message is that the Drifting Model can be derived from a semigroup-consistent long-short flow-map decomposition coupled with a terminal closed-form optimal velocity. Specifically, we factorize the global transport map according to the semigroup property of flow maps into a long-range, initial flow map and a short-range, terminal flow map, $\psi_{0\to1}=\psi_{1-\Delta t\to1}\circ\psi_{0\to1-\Delta t}$, where the terminal flow map is evaluated using the closed-form optimal velocity \cite{bertrand2025closed} near the final time. This construction yields direct dataset-level supervision for the terminal correction while preserving trajectory consistency for the remaining component. By analyzing the limit as $\Delta t \to 0$, we show that the resulting infinitesimal terminal map recovers exactly the drifting formulation, with the attraction term arising from the first-order expansion of the closed-form velocity and the impulse term appearing in the second-order expansion as a conservative correction required for flow-map consistency.

This long-short flow-map perspective offers a structured way to understand the Drifting Model and helps interpret several of its design choices, including the simultaneous presence of attraction and impulse components and the emergence of multiple feature representations. Building on this viewpoint, we further introduce a likelihood-learning formulation aligned with the induced density evolution under transport and propose a squared-kernel modification to improve theoretical consistency with the noise's form. We validate the resulting framework on both illustrative and benchmark tests and conclude with open questions in feature-space optimization.

\section{Background}

\begin{figure}[t]
    \centering
    \begin{subfigure}[t]{0.30\textwidth}
        \centering
        \includegraphics[width=\linewidth, trim=6pt 6pt 6pt 6pt, clip]{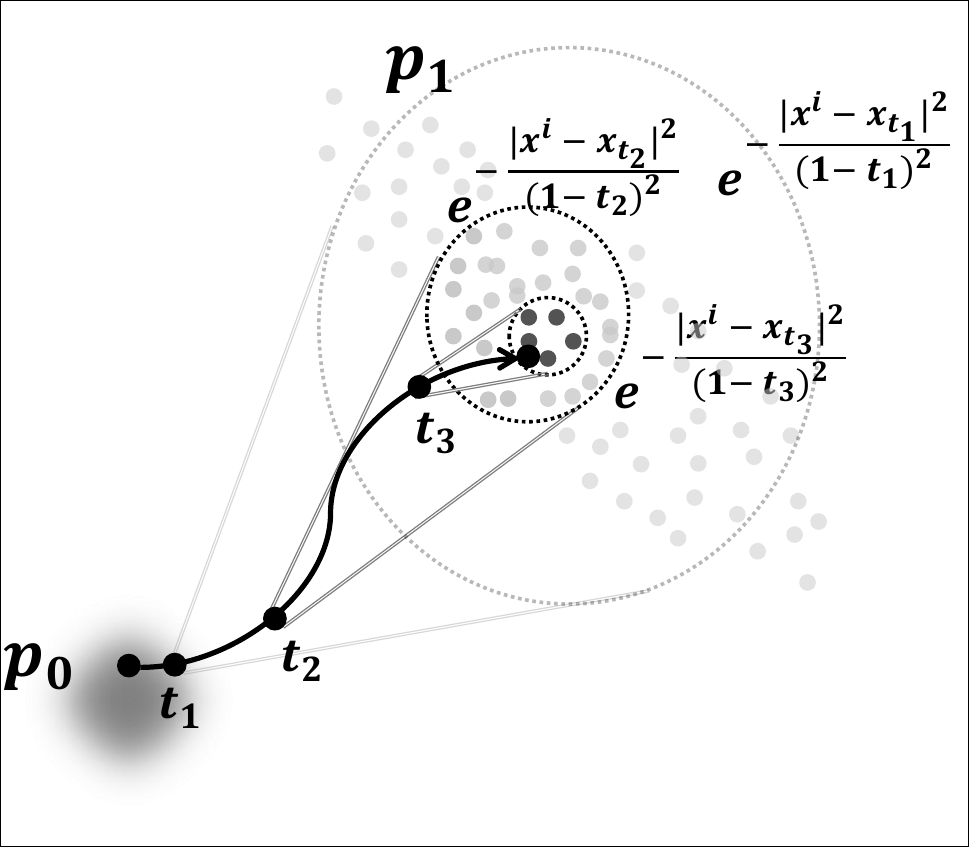}
        \caption{Closed-Form Flow Matching}
        \label{fig:b}
    \end{subfigure}\hfill
    \begin{subfigure}[t]{0.30\textwidth}
        \centering
        \includegraphics[width=\linewidth, trim=6pt 6pt 6pt 6pt, clip]{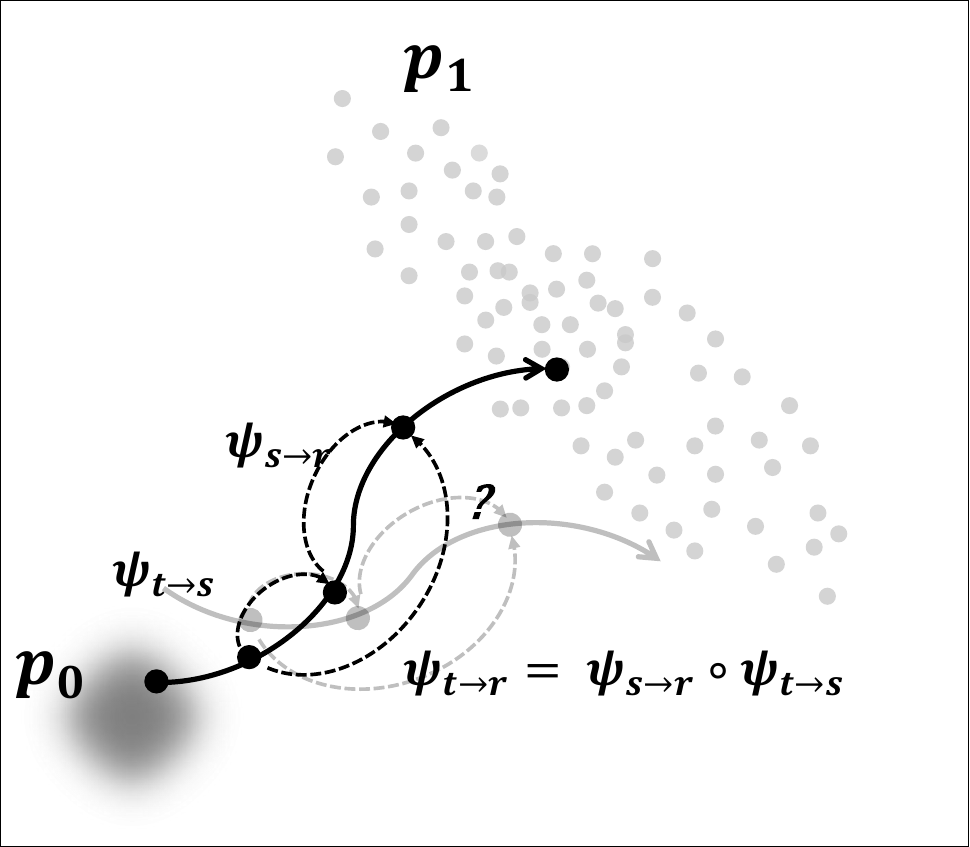}
        \caption{Flow-Map Method}
        \label{fig:a}
    \end{subfigure}\hfill
    \begin{subfigure}[t]{0.3\textwidth}
        \centering
        \includegraphics[width=\linewidth, trim=6pt 6pt 6pt 6pt, clip]{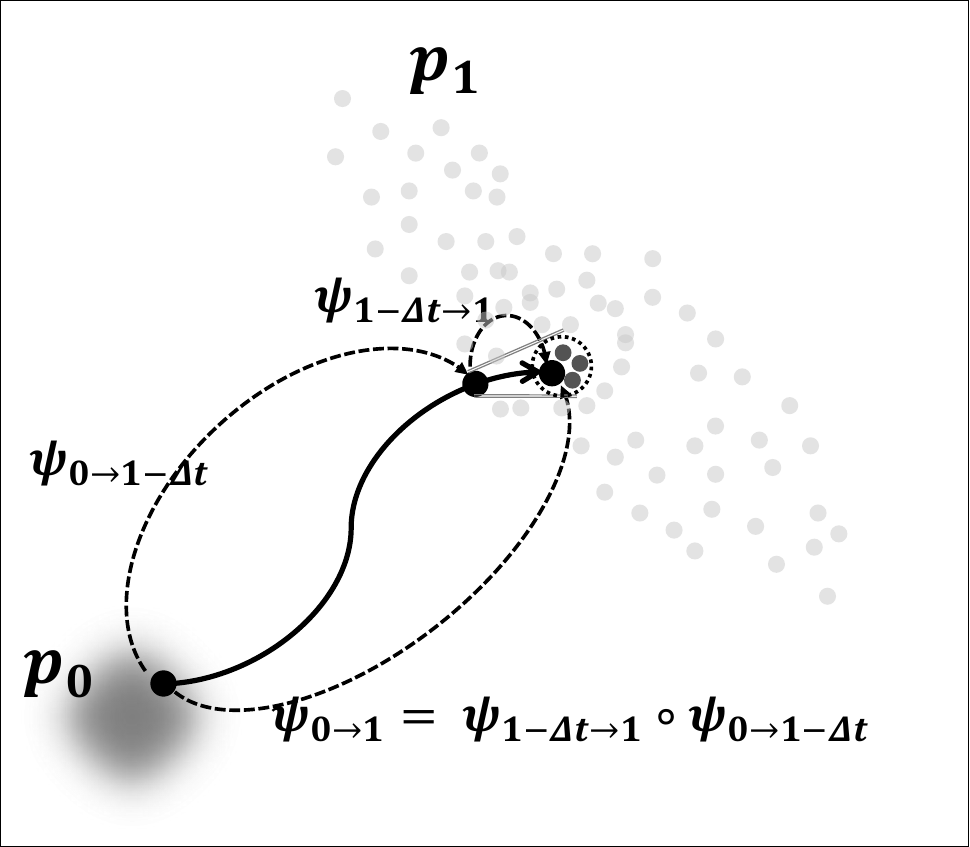}
        \caption{Long-Short Flow-Map Method}
        \label{fig:c}
    \end{subfigure}
    \caption{(a) Closed-form Flow Matching (\autoref{sec:close_form}) requires kernel-weighted aggregation over data points, which becomes prohibitively expensive when the kernel is diffuse (far from $t=1$). (b) Flow-map training (\autoref{sec:flow_map}) enforces trajectory consistency but is difficult to supervise directly from data, so spurious consistent paths (e.g., the semi-transparent segment) can still satisfy the constraint. (c) Our Long-Short Flow Map method (\autoref{sec:long_short_flow_map}) applies a closed form estimator on the short step $\psi_{1-\Delta t\to 1}$ as $\Delta t\to 0$ (\autoref{sec:first_order} first order, \autoref{sec:second_order} second order), which provides dataset grounded supervision for learning the long step $\psi_{0\to 1-\Delta t}$ and thereby addresses the challenges of both closed form Flow Matching and flow map methods.}
    \label{fig:three}
    \vspace{-1.5ex}
\end{figure}

\subsection{Closed-Form Flow Matching}\label{sec:close_form}
Given a dataset $\mathcal{D} = \{ x^i \in \mathcal{X}\}_{i=1}^n$ drawn from an unknown data distribution $p_{\mathrm{data}}$ on space $\mathcal{X}\subset \mathbb{R}^d$, Flow Matching aims to learn a continuous-time velocity field $u(x,t)$, $t \in [0,1]$, that transports a base distribution $p_0$, typically a Gaussian distribution $\mathcal{N}(0,\sigma^2)$, to the target distribution $p_1 = p_{\mathrm{data}}$ along a continuous path of intermediate distributions $(p_t)_{t\in[0,1]}$.  Once $u(x,t)$ is learned, samples from $p_1$ can be generated by integrating the ordinary differential equation
\begin{equation}\label{eq:ODE_sampling}
\frac{d x_t}{d t} = u(x_t, t),
\quad x_0 \sim p_0 .
\end{equation}
The family of distributions $(p_t)_{t\in[0,1]}$ is induced by the velocity field $u(x,t)$, and their relationship is governed by the continuity equation
\begin{equation}
    \begin{aligned}
    \frac{\partial}{\partial t} p_t(x) + \nabla \cdot \big( p_t(x)u_t(x) \big) = 0
    \end{aligned}
\end{equation}

A natural training objective for training $u_t^\theta(x)$ is $\mathcal{L}^{FM}(\theta) = \mathbb{E}_{t, x\sim p_t(x)}\|u_t^\theta(x) - u_t(x)\|^2$, which directly matches the model velocity to the reference velocity field.  However, this objective cannot be optimized in practice, since both $u_t(x)$ and the marginal distribution $p_t(x)$ are hard to be calculated directly from the dataset.  To incorporate supervision from data, Flow Matching introduces conditional velocities $u_t(x|x_1) = \frac{x_1-x}{1-t}$ and conditional flows $\psi_t(x| x_1)=t(x_1-x)+x$ for arbitrary $x_1 \in \mathcal{X}$. These conditional quantities induce a conditional distribution  $p_{t|1}(\cdot | x_1)=(\psi_t(\cdot| x_1))_\sharp p_0$, and the marginal velocity field and distribution can then be recovered by marginalization $u_t(x) = \mathbb{E}_{x_1\sim p_{1|t}(x_1|x)}[u_t(x | x_1)]$ and $p_t(x) = \mathbb{E}_{x_1\sim p_{1|t}(x_1|x)}[p_{t|1}(x | x_1)]$respectively.  Based on these constructions, Flow Matching defines the conditional surrogate objective 
\begin{equation}\label{eq:FlowMatchingLoss}
    \begin{aligned}
        \mathcal{L}_c^{FM}(\theta) = \mathbb{E}_{t,x_1\sim p_{data}, x\sim p_{t|1}(x|x_1)}\|u_t^\theta(x) - u_t(x|x_1)\|^2,        
    \end{aligned}
\end{equation}
which admits supervision from data samples.  It has been shown that $\nabla_\theta \mathcal{L}_c^{FM}(\theta) = \nabla_\theta \mathcal{L}^{FM}(\theta)$
and therefore $\mathcal{L}_c^{\mathrm{FM}}$ serves as a valid surrogate for optimizing $\mathcal{L}^{\mathrm{FM}}$.

In \cite{bertrand2025closed} and earlier work, the authors show that the Flow Matching objective in \autoref{eq:FlowMatchingLoss} admits a closed-form optimal solution, which can be summarized as the following property.
\begin{theorem}[Closed-Form Solution of Flow Matching \cite{bertrand2025closed}]\label{prop:close_form1}
\autoref{eq:FlowMatchingLoss} admits an optimal solution
\begin{equation}
u_t^*(x)
= \mathbb{E}_{x_1\sim p_{1|t}(x_1|x)}\!\big[u_t(x| x_1)\big]
= \frac{\mathbb{E}_{x_1\sim p_{1|t}(x_1|x)}[x_1]-x}{1-t},
\end{equation}
which further admits a closed-form expression (see \autoref{sec:proof_close_form1} for proof.)
\begin{equation}\label{eq:close_form1}
    \begin{aligned}
        u_t^*(x) &= \frac{\mathbb{E}_{x_1\sim p_1}[\frac{x_1-x}{1-t}k_{t}(x_1,x)]}{\mathbb{E}_{x_1\sim p_1}[k_{t}(x_1,x)]}, \qquad k_{t}(y,x) &= e^{-\frac{\|ty-x\|^2}{2(1-t)^2}}
    \end{aligned}
\end{equation}

\end{theorem}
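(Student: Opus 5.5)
The plan has two parts: identify the minimizer of \autoref{eq:FlowMatchingLoss} as a conditional expectation, then make that expectation explicit using the Gaussian form of $p_{t|1}$.

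\textbf{Optimality of the conditional expectation.} Fix $t$ and rewrite the expectation over $(x_1,x)$ by disintegrating the joint law $p_1(x_1)\,p_{t|1}(x|x_1)$ as $p_t(x)\,p_{1|t}(x_1|x)$. The loss becomes
\[
\mathbb{E}_{x\sim p_t}\,\mathbb{E}_{x_1\sim p_{1|t}(\cdot|x)}\big\|u_t^\theta(x)-u_t(x|x_1)\big\|^2 .
\]
For each fixed $x$, the inner term is a quadratic in the single vector $u_t^\theta(x)$. Its pointwise minimizer is the conditional mean $\mathbb{E}_{x_1\sim p_{1|t}}[u_t(x|x_1)]$. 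This follows from the bias--variance decomposition: the cross term vanishes and the conditional variance does not depend on $\theta$. Since $u_t(x|x_1)=(x_1-x)/(1-t)$ is affine in $x_1$, the expectation passes inside, giving the second equality $\big(\mathbb{E}[x_1|x]-x\big)/(1-t)$. I would assume the model class is rich enough to realize this pointwise minimizer; otherwise ``optimal'' means optimal over all measurable fields.

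\textbf{Closed form via Bayes' rule.} Compute $p_{1|t}(x_1|x)=p_{t|1}(x|x_1)\,p_1(x_1)/p_t(x)$, where $p_t(x)=\mathbb{E}_{x_1\sim p_1}[p_{t|1}(x|x_1)]$. The conditional $p_{t|1}$ is the pushforward of $p_0=\mathcal N(0,\sigma^2 I)$ under $x_0\mapsto (1-t)x_0+tx_1$ (the convention matching the stated kernel). It is therefore the Gaussian $\mathcal N\big(tx_1,(1-t)^2\sigma^2 I\big)$, with density
\[
C_t\exp\!\Big(-\frac{\|tx_1-x\|^2}{2(1-t)^2\sigma^2}\Big),
\]
where the normalizing constant $C_t$ is independent of $x_1$. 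Substituting into $\mathbb{E}_{p_{1|t}}[(x_1-x)/(1-t)]$ turns it into a ratio of expectations under $p_1$. The factor $C_t$ and the denominator's $x$-dependence cancel between numerator and denominator, which yields exactly \autoref{eq:close_form1} with $k_t$. This assumes $\sigma=1$; otherwise $\sigma^2$ is absorbed into the exponent.

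\textbf{Main obstacle.} The only delicate step is notational consistency. The paper writes the conditional flow as $\psi_t(x|x_1)=t(x_1-x)+x$, i.e.\ $(1-t)x+tx_1$, and I would verify that this gives mean $tx_1$ and scale $(1-t)$, consistent with $k_t(y,x)$. I would also check that the Jacobian factor $(1-t)^{-d}$ is $x_1$-independent so that it cancels, and note that the argument is valid for $t<1$. Beyond this, one needs Fubini/integrability to justify the disintegration, which requires finite second moments of $p_1$.
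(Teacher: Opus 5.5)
Your proposal is correct and follows essentially the same route as the paper: identify the $\ell_2$ minimizer as the conditional expectation, then apply Bayes' rule with the Gaussian conditional $p_{t|1}(x|x_1)=(1-t)^{-d}p_0\big((x-tx_1)/(1-t)\big)$, so that the $x_1$-independent normalizing constants cancel in the ratio. Your explicit tracking of the Jacobian factor $(1-t)^{-d}$, the $\tfrac12$ in the Gaussian exponent, and the implicit assumption $\sigma=1$ is slightly more careful than the paper's written computation, which drops the factor $(1-t)^{-d}$ and the $\tfrac12$ in its intermediate steps before arriving at the same kernel $k_t$.
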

In practice, this result is mainly used for analysis rather than directly computing the Flow Matching solution. We argue that the main obstacle to using the closed form as a solver arises away from $t=1$, leading to the following challenge.

%\begin{challenge}[Challenge for Closed-Form Computation]
%Directly using the closed-form optimal velocity $u_t^*(x)$ is impractical for training and sampling, due to the time-dependent concentration of the kernel $k_t$. When $t$ is close to $1$, the factor $(1-t)^{-2}$ in the exponent makes $k_t$ sharply concentrated, so the expectation is dominated by a small neighborhood of samples $x_1$ near $x$, and the computation is effectively local. In contrast, when $t$ is far from $1$, the kernel becomes much less concentrated, and accurately evaluating the closed form requires aggregating contributions from a large portion of the data distribution, which is computationally prohibitive. Consequently, the closed-form velocity is only tractable near $t=1$ and becomes impractical at earlier timesteps.
%\end{challenge}
\begin{challenge}[Challenge for Closed-Form Computation]
Directly using the closed-form optimal velocity $u_t^*(x)$ is impractical for training and sampling due to time-dependent concentration of kernel $k_t$. When $t$ is close to $1$, the factor $(1-t)^{-2}$ in the exponent makes $k_t$ sharply concentrated, so the expectation is dominated by samples $x_1$ near $x$ and the computation is effectively local. In contrast, when $t$ is far from $1$, the kernel is less concentrated, and evaluating the closed form requires aggregating contributions from a large portion of the data distribution, which is computationally prohibitive. Consequently, the closed-form velocity is tractable only near $t=1$ and impractical at earlier timesteps.
\end{challenge}

\subsection{One-Step Flow Map}\label{sec:flow_map}
Flow Matching suffers from slow generation because sampling with $u(x_t,t)$ amounts to numerically discretizing the evolution in \autoref{eq:ODE_sampling}, which typically requires many steps. Recent one-step generation methods instead learn a {flow map}, namely a family of mappings $\psi_{t\to r}:\mathcal{X}\to\mathcal{X}$ such that $\psi_{0\to t}(x_0)=x_t$ for any trajectory $\{x_t\}_{t=0}^1$ that solves the ODE initialized at $x_0$, and $\psi_{t\to r}=\psi_{0\to r}\circ \psi_{0\to t}^{-1}$ for any $t,r\in[0,1]$. The flow map satisfies
\begin{equation}\label{eq:evolv_flow_map}
\frac{\partial}{\partial t}\psi_{0\to t} = u_t\circ \psi_{0\to t},\qquad \psi_{0\to 0}=\mathrm{Id}_{\mathcal{X}}.
\end{equation}
The marginal path can be written as a pushforward $p_t = (\psi_{0\to t})_\sharp p_0$. Once $\psi_{t\to r}$ is learned, flow-map methods enable one-step generation via $x_1=\psi_{0\to 1}(x_0)$, and also few-step generation, e.g., $x_{0.5}=\psi_{0\to 0.5}(x_0)$ followed by $x_1=\psi_{0.5\to 1}(x_{0.5})$ for two steps.

Flow maps are characterized by a {trajectory consistency} property, also known as the {semigroup property}, which is the key requirement when learning $\psi_{t\to r}$. 
\begin{proposition}[Trajectory Consistency]\label{prop:trajector_consistency}
For any $0\le t \le s \le r \le 1$, the flow map satisfies
\begin{equation}
\psi_{t\to r} = \psi_{s\to r}\circ \psi_{t\to s}.
\end{equation}
\end{proposition}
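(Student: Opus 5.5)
The plan is to read the identity directly off the definition of the flow map given above, namely $\psi_{t\to r}=\psi_{0\to r}\circ\psi_{0\to t}^{-1}$. The only thing to justify is that $\psi_{0\to s}$ is invertible, so that the inverse appearing in the definition makes sense. I will get this from well-posedness of the ODE \autoref{eq:ODE_sampling}.

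\textbf{Step 1 (well-posedness).} Assume $u_t$ is regular enough for the Cauchy--Lipschitz theorem, for instance locally Lipschitz in $x$ uniformly in $t$ with no finite-time blow-up. Then for every $x_0$ there is a unique trajectory on $[0,1]$. Moreover, for each $s$ the map $\psi_{0\to s}$ is a bijection of $\mathcal{X}$. The inverse is obtained by solving the ODE backward from time $s$ to time $0$, and uniqueness of backward solutions gives injectivity. This is the only step needing care: the paper states the semigroup property for generic $u$, so I will state the regularity assumption explicitly. It is also what makes $\psi_{0\to t}^{-1}$ meaningful in the definition.

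\textbf{Step 2 (algebra).} Fix $0\le t\le s\le r\le 1$ and substitute the definition:
\begin{equation*}
\psi_{s\to r}\circ\psi_{t\to s}=\psi_{0\to r}\circ\psi_{0\to s}^{-1}\circ\psi_{0\to s}\circ\psi_{0\to t}^{-1}=\psi_{0\to r}\circ\psi_{0\to t}^{-1}=\psi_{t\to r}.
\end{equation*}
The cancellation $\psi_{0\to s}^{-1}\circ\psi_{0\to s}=\mathrm{Id}_{\mathcal{X}}$ uses the bijectivity from Step 1.

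\textbf{Step 3 (trajectory view).} As a consistency check, and as an alternative proof if one prefers to define $\psi_{t\to r}$ as "solve the ODE from time $t$ to time $r$", take any point $x$. Let $x_\tau$ be the unique solution with $x_t=x$. Then $\psi_{t\to s}(x)=x_s$. By uniqueness, the solution restarted at time $s$ from $x_s$ is the same curve, so $\psi_{s\to r}(x_s)=x_r=\psi_{t\to r}(x)$. This also shows the two definitions agree.

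The ordering $t\le s\le r$ is in fact not needed once invertibility holds.
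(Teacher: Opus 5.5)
Your proposal is correct and matches the paper. The paper gives no separate proof because the identity follows immediately from its definition $\psi_{t\to r}=\psi_{0\to r}\circ\psi_{0\to t}^{-1}$: your Step 2 cancellation is exactly that argument, and your Step 1 only makes explicit the well-posedness and invertibility the paper takes for granted. One small remark: in the paper's own setting $u_t^*$ can blow up as $t\to 1$ (e.g.\ for an empirical $p_1$), so your uniform-Lipschitz assumption may fail at the endpoint and $\psi_{0\to 1}$ need not be invertible; there your Step 3 argument, which needs only forward uniqueness, is the version that still goes through.
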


To enforce trajectory consistency in learning, one can impose the following trajectory-consistency loss
\begin{equation}\label{eq:consistency_flow}
\mathcal{L}^{TC}(\theta)
=\mathbb{E}_{t,s,r,x_t\sim p_{t|1}(x|x_1), x_1\sim p_1}\Big[\|
\psi^\theta_{t\to r}(x_t)
-\psi^\theta_{s\to r}\!\big(\psi^\theta_{t\to s}(x_t)\big)
\|_2^2
\Big],
\end{equation}
%where $x_t=(1-t)x_0+tx_1\sim p_{t|1}(x_t|x_1)$, $x_1\sim p_{\text{data}}$, and $x_0\sim p_0$.
A central difficulty of optimizing \autoref{eq:consistency_flow} is the lack of supervision from the dataset. Unlike the velocity field $u_t(x)$ in Flow Matching, flow maps do not admit an easily computable "conditional flow map" from data pairs (see \autoref{thm:non_existence_conditioanl}), which prevents us from writing a simple conditional objective analogous to the standard conditional Flow Matching loss. 

To mitigate this issue, prior work has explored two broad strategies. \emph{Progressive extension} methods, exemplified by Shortcut \cite{frans2025shortcut}, SplitMeanFlow \cite{guo2025splitmeanflow}, and Flow Map Matching \cite{boffi2025flow}, start from the instantaneous limit $\lim_{s\to t}\frac{\psi_{t\to s}(x)-x}{s-t}=u_t(x)$ and progressively extend short-horizon supervision to longer horizons using the semigroup constraint in \autoref{prop:trajector_consistency}. \emph{Continuous-based methods} take the $s\to t$ limit of \autoref{prop:trajector_consistency} and derive a continuous equation of the form $\partial_t \psi_{t\to r}(x) + (u_t\cdot \nabla)\psi_{t\to r}(x)=0$, which brings the instantaneous velocity $u_t$ into the learning of long-horizon maps $\psi_{t\to r}$; since $u_t$ admits dataset supervision through its conditional form $u_t(x|x_1)$ in Flow Matching, this provides a way to inject data-driven supervision into flow-map learning, as exemplified by MeanFlow \cite{geng2025mean, geng2025improved}, which learns an averaged velocity field $u_{t\to r}(x) = \frac{\psi_{t\to r}(x) -x}{r-t}$ to parameterize the flow map. Despite these efforts, introducing reliable dataset-level supervision for long-range flow maps remains challenging in practice, as summarized below.

\begin{challenge}[Challenge for Flow-Map Methods]
A key challenge in flow-map learning is how to introduce {dataset-level supervision} for {long-range} maps $\psi_{t\to r}$ with large $r-t$. Progressive-extension methods do not directly use dataset supervision for long-horizon maps, and instead rely on indirect signals propagated from short-horizon maps through the semigroup constraint. Continuous-based methods inject supervision via the instantaneous conditional velocity in the limiting continuous equation, but they typically require differentiating through the map, which may hurt training stability and substantially increase computational cost.
\end{challenge}

\section{Long-Short Flow-Map Perspective}\label{sec:long_short_flow_map}

\begin{wrapfigure}{l}{0.5\columnwidth}
  \centering
  %\vspace{-0.8em}
  \includegraphics[width=1.0\linewidth, trim=6pt 6pt 6pt 6pt, clip]{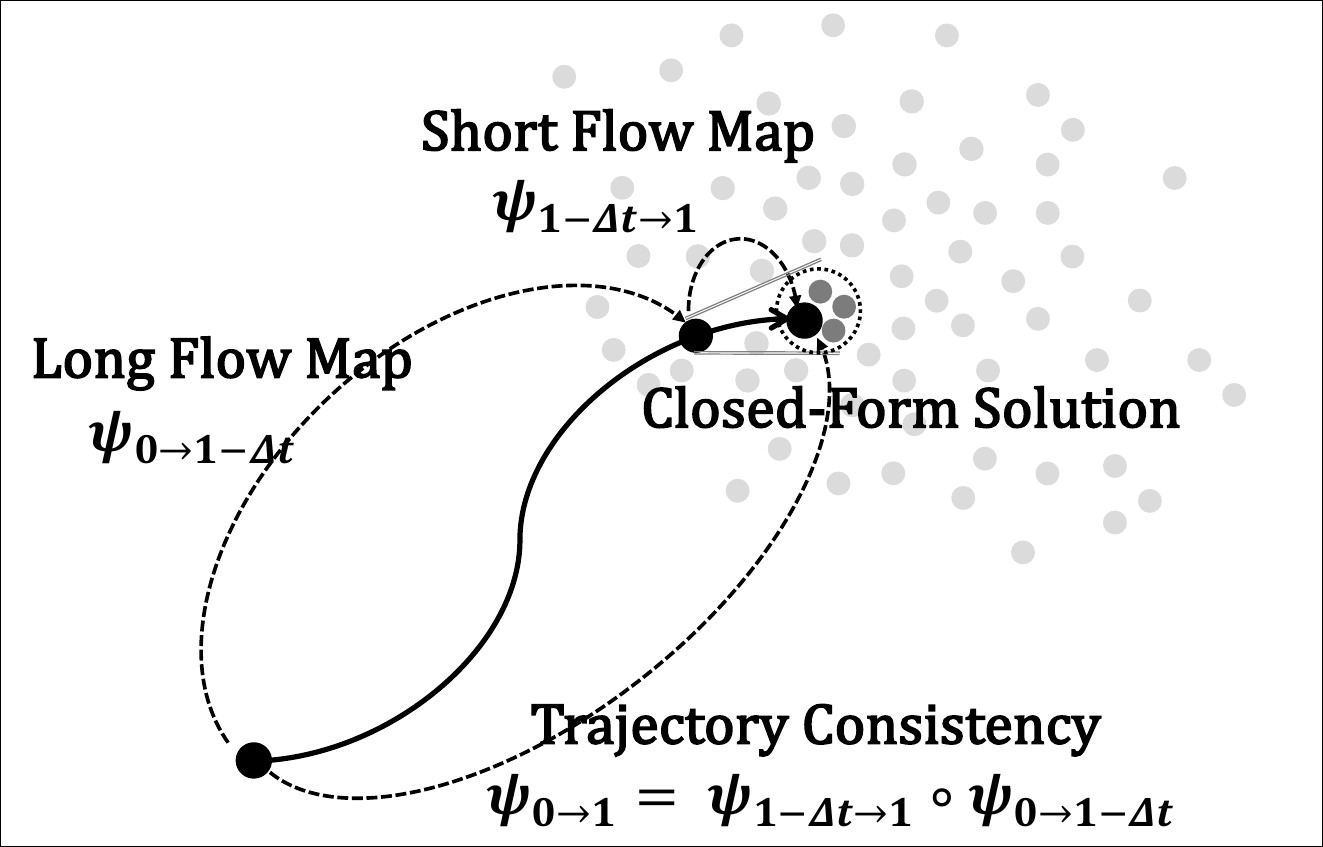}
  \caption{\textbf{Overview of the Long-Short Flow Map framework.}
  Leveraging flow-map trajectory consistency, we decompose the full map $\psi_{0\to 1}$ into a long map $\psi_{0\to 1-\Delta t}$ and a short map $\psi_{1-\Delta t\to 1}$. The short map is approximated using forward Euler or the trapezoidal rule and then computed via the closed-form solution in flow matching, providing dataset-level supervision for learning the long map $\psi_{0\to 1-\Delta t}$. Taking the limit $\Delta t \to 0$ yields a long-short flow-map derivation of the Drifting Model.}
  \label{fig:teaser}
  %\vspace{-1.0em}
\end{wrapfigure}

As discussed above, both the closed-form Flow Matching and flow-map approaches face practical challenges. The closed-form optimal velocity becomes impractical to evaluate when $t$ is far from the terminal time $1$, while flow-map learning struggles to obtain a dataset-level supervision when enforcing trajectory consistency. 
%Our key insight is that these two approaches can be combined via a long-short decomposition of flow maps, so that each mitigates the other's limitations. As in flow-map methods, we aim to learn $\{\psi_{t\to r}\}$. Using the trajectory-consistency property in \autoref{prop:trajector_consistency}, we start from the composition identity $\psi_{0\to 1} = \psi_{1-\Delta t \to 1}\circ \psi_{0\to 1-\Delta t}$. We interpret $\psi_{1-\Delta t\to 1}$ as a short flow map (a small step near the endpoint) and $\psi_{0\to 1-\Delta t}$ as a long flow map (a large transport from noise toward data), and refer to this factorization as the \textbf{long-short flow-map decomposition} (as shown in \autoref{fig:three}).  
Our key insight is that these two approaches can be combined via a long-short decomposition of flow maps, so that each mitigates the other's limitations. As in flow-map methods, we aim to learn ${\psi_{t\to r}}$. Using the trajectory-consistency property in \autoref{prop:trajector_consistency}, we start from the composition identity $\psi_{0\to 1} = \psi_{1-\Delta t \to 1}\circ \psi_{0\to 1-\Delta t}$. We use the short flow map $\psi_{1-\Delta t\to 1}$ for a small terminal step near the endpoint, and the long flow map $\psi_{0\to 1-\Delta t}$ for the main large-step transport from noise toward data. We refer to this factorization as the \textbf{long-short flow-map decomposition} (as shown in \autoref{fig:teaser}). The main idea is to use the flow-matching closed-form solution to approximate the short terminal map $\psi_{1-\Delta t\to 1}$, where the closed-form solution is accurate and computationally tractable, thereby providing a dataset-tied supervisory signal for learning the remaining long map $\psi_{0\to 1-\Delta t}$ through trajectory consistency. This coupling simultaneously alleviates the two challenges we mentioned above, effectively unifying the one-step flow map with the closed-form flow matching. We detail the derivation below.

%At a high level, the Drifting Model and Flow Map methods seem to pursue different objectives: the former performs distribution matching, whereas the latter directly learns a flow-map function. In this section, we show that the Drifting Model can be interpreted as an organic combination of the Flow Map framework and the close-form solution of Flow Matching, and we provide a derivation from the Flow Map perspective.

%As in Flow Map, we aim to learn the family of maps $\psi_{t\to r}$.  Leveraging the trajectory consistency property in \autoref{prop:trajector_consistency}, we consider the composition identity $\psi_{0\to 1} = \psi_{1-\Delta t \to 1}\circ \psi_{0\to 1-\Delta t}$.  As discussed earlier, a key challenge of closed-form solution of Flow Matching is that it is impractical when $t$ is far from $1$, whereas Flow Map learning suffers from the lack of direct supervision from the target dataset. Our main idea is to use the Flow Matching close-form solution to instantiate $\psi_{0\to 1-\Delta t}$, thereby injecting dataset supervision into the map learning. This simultaneously alleviates both issues and naturally yields the Drifting Model formulation, effectively unifying Flow Map learning with the Flow Matching closed form. We detail the derivation below.

\subsection{First-Order Approximation}\label{sec:first_order}

\begin{wrapfigure}{r}{0.4\columnwidth}
  \centering
  \vspace{-1.8em}
  \includegraphics[width=1.0\linewidth, trim=2pt 4pt 2pt 4pt, clip]{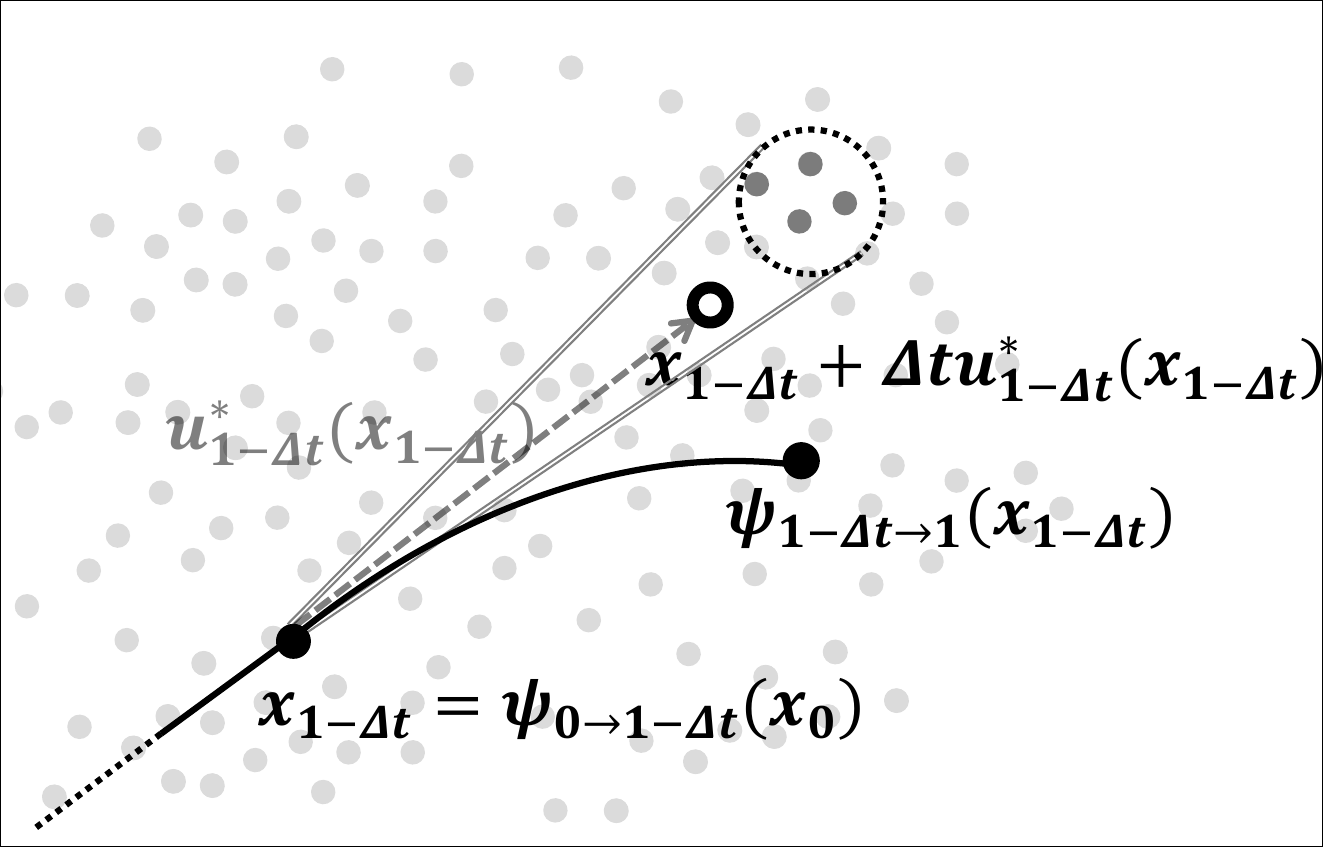}
  %\caption{\textbf{First-Order Approximation}}
  %\label{fig:first}
  \vspace{-1.0em}
\end{wrapfigure}

For $\psi^{\theta}_{0\to 1}(x_0) = \psi^{\theta}_{1-\Delta t\to 1}(\psi^{\theta}_{0\to 1-\Delta t}(x_0))$, let $x^\theta_{1-\Delta t}=\psi^{\theta}_{0\to 1-\Delta t}(x_0)$.  Using a \textbf{first-order forward Euler approximation} for the short interval $[1-\Delta t,1]$ via the instantaneous velocity field $u_{t}(\cdot)$, we obtain $\psi^{\theta}_{1-\Delta t\to 1}(x^\theta_{1-\Delta t})
        \approx
        x^\theta_{1-\Delta t} + \Delta tu_{1-\Delta t}(x^\theta_{1-\Delta t})$.  Substituting this approximation back into the consistency equation yields
\begin{equation}\label{eq:forward_euler_approx}
    \begin{aligned}
\psi^{\theta}_{0\to 1}(x_0)
\approx
\psi^{\theta}_{0\to 1-\Delta t}(x_0) + \Delta t u_{1-\Delta t}(x^\theta_{1-\Delta t}).        
    \end{aligned}
\end{equation}
Therefore, to learn the terminal flow map $\psi^{\theta}_{0\to 1}(x_0)$, we can consider an objective that enforces the above relation
\begin{equation}
    \begin{aligned}
\mathcal{L}(\theta) = \mathbb{E}_{x_0\sim p_0}\Big[
\|\psi^{\theta}_{0\to 1}(x_0) -sg\big(
\psi^{\theta}_{0\to 1-\Delta t}(x_0) + \Delta t u_{1-\Delta t}(x^\theta_{1-\Delta t})
\big)\|^2 \Big],        
    \end{aligned}
\end{equation}
We mollify the kernel $k_t(y,x)$ as 
\begin{equation}\label{eq:mollified_kernel}
\begin{aligned}
    k^\epsilon_t(y,x) = e^{-\frac{\|ty-x\|^2}{2(1-t)^2+\epsilon}}
\end{aligned}    
\end{equation} 
to avoid singular behavior near $t=1$, and now adopt the closed-form solution in \autoref{eq:close_form1} for the instantaneous velocity at $t=1-\Delta t$ to obtain 
\begin{equation}\label{eq:forward_loss_discrete}
    \begin{aligned}
\mathcal{L}(\theta) &= \mathbb{E}_{x_0\sim p_0}\Big[
\|\psi^{\theta}_{0\to 1}(x_0) -sg\big(
\psi^{\theta}_{0\to 1-\Delta t}(x_0)+ \Delta t u_{1-\Delta t}(x^\theta_{1-\Delta t})
\big)\|^2 \Big]\\
&=\mathbb{E}_{x_0\sim p_0}\Big[
\|\psi^{\theta}_{0\to 1}(x_0) -sg\big(
\psi^{\theta}_{0\to 1-\Delta t}(x_0)+ \Delta t \frac{\mathbb{E}_{x_1\sim p_1}[\frac{x_1-x^\theta_{1-\Delta t}}{1-({1-\Delta t})}k^\epsilon_{1-\Delta t}(x_1,x^\theta_{1-\Delta t})]}{\mathbb{E}_{x_1\sim p_1}[k^\epsilon_{1-\Delta t}(x_1,x^\theta_{1-\Delta t})]}
\big)\|^2 \Big]\\
&=\mathbb{E}_{x_0\sim p_0}\Big[
\|\psi^{\theta}_{0\to 1}(x_0) -sg\big(
\psi^{\theta}_{0\to 1-\Delta t}(x_0) + \frac{\mathbb{E}_{x_1\sim p_1}[(x_1-\psi^{\theta}_{0\to 1-\Delta t}(x_0))k^\epsilon_{1-\Delta t}(x_1,\psi^{\theta}_{0\to 1-\Delta t}(x_0))]}{\mathbb{E}_{x_1\sim p_1}[k^\epsilon_{1-\Delta t}(x_1,\psi^{\theta}_{0\to 1-\Delta t}(x_0))]}
\big)\|^2 \Big]
    \end{aligned}
\end{equation}
Then let $\Delta t\to 0$, we have
\begin{equation}\label{eq:forward_euler_loss}
    \begin{aligned}
\mathcal{L}(\theta) =\mathbb{E}_{x_0\sim p_0}\Big[
\|\psi^{\theta}_{0\to 1}(x_0) -sg\big(
\psi^{\theta}_{0\to 1}(x_0)  + \frac{\mathbb{E}_{x_1\sim p_1}[(x_1-\psi^{\theta}_{0\to 1}(x_0))k^\epsilon_1(x_1,\psi^{\theta}_{0\to 1}(x_0))]}{\mathbb{E}_{x_1\sim p_1}[k^\epsilon_1(x_1,\psi^{\theta}_{0\to 1}(x_0))]}
\big)\|^2 \Big]
    \end{aligned}
\end{equation}
where $k^\epsilon_1(y,x)=e^{-\frac{\|y-x\|^2}{\epsilon}}$.  This loss involves only a single learnable map $\psi_{0\to 1}^\theta$, and therefore can be used to directly train $\psi_{0\to 1}^\theta$.  During training, the expectations in \autoref{eq:forward_euler_loss} are estimated by Monte Carlo using samples within each mini-batch.  Once learned, it naturally yields a one-step generator by sampling $x_0\sim p_0$ and computing $\psi_{0\to 1}^\theta(x_0)$.

In this loss, the residual term $ \mathcal{R}(\theta)\triangleq
\frac{\mathbb{E}_{x_1\sim p_1}\!\big[(x_1-\psi^{\theta}_{0\to 1}(x_0))\,k^\epsilon_1(x_1,\psi^{\theta}_{0\to 1}(x_0))\big]}
{\mathbb{E}_{x_1\sim p_1}\!\big[k^\epsilon_1(x_1,\psi^{\theta}_{0\to 1}(x_0))\big]} $is induced by the closed-form solution associated with an infinitesimally short terminal map $\psi^\theta_{1-\Delta t\to 1}$ as $\Delta t\to 0$, and it serves as an effective supervisory signal for learning the long map $\psi^\theta_{0\to 1-\Delta t}$.   According to our derivation, when $\psi^\theta_{0\to 1}(x_0)$ matches the desired transport, the model output and the $\mathrm{sg}(\cdot)$ target coincide, and $\mathcal{R}(\theta)$ attains its optimum. Moreover, $\mathcal{R}(\theta)$ admits a concrete interpretation: it is a regression-optimal, kernel-weighted conditional expectation that quantifies the discrepancy between the generated point $\psi^\theta_{0\to 1}(x_0)$ and the target distribution $p_1$, and the loss minimizes this discrepancy in the least-squares sense.

%This formulation reveals that, in effect, we have implicitly defined a drift field for the loss \autoref{eq:drifting_loss}
%\begin{equation}
%    \begin{aligned}
%        V_{p,q}(x) &= V_p^+(x)\\ 
%        V_p^+(x) &=\frac{\mathbb{E}_p[k^\epsilon_1(x,y^+)(y^+-x)]}{\mathbb{E}_p[k^\epsilon_1(x,y^+)]}
%     \end{aligned}
%\end{equation}

\subsection{Second-Order Approximation}\label{sec:second_order}
\begin{wrapfigure}{r}{0.5\columnwidth}
  \centering
  \vspace{-1.8em}
  \includegraphics[width=1.0\linewidth, trim=4pt 4pt 4pt 4pt, clip]{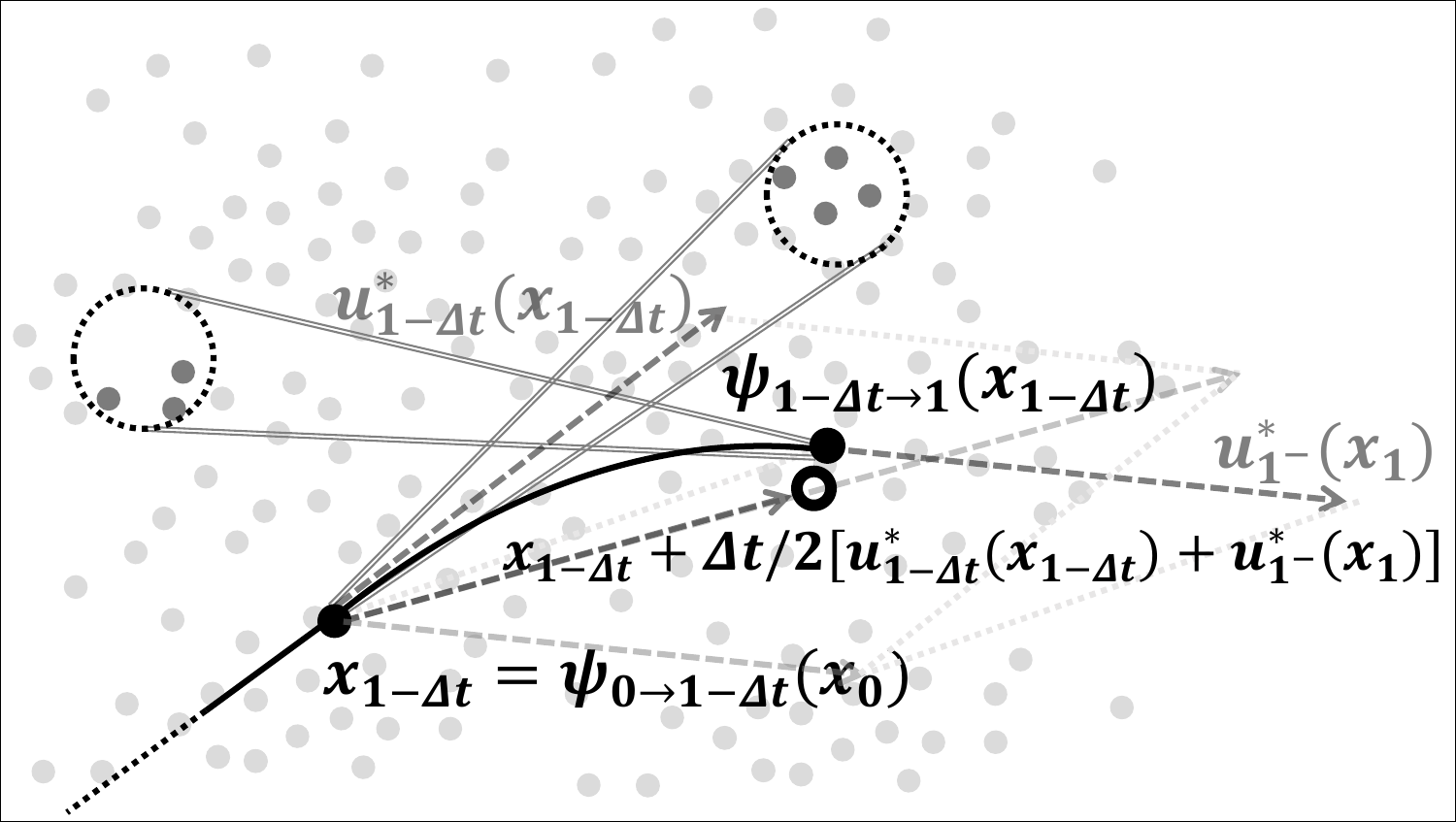}
  %\caption{\textbf{First-Order Approximation}}
  %\label{fig:first}
  \vspace{-6.0em}
\end{wrapfigure}

%\cite{deng2026generative} argue that using only $V^{+}$ leads to noticeably worse performance in their experiments. The underlying reason is that the segment $[1-\Delta t,1]$ in \autoref{eq:forward_euler_approx} is effectively approximated by a first-order forward Euler step. 
In the discussion above, we approximated the short flow map using a first-order forward Euler method and then applied the closed-form solution. To obtain a more accurate estimation, here we instead use a second-order trapezoidal rule to approximate the evolution over $[1-\Delta t,1]$.
\begin{equation}\label{eq:trapezoidal_rule}
    \begin{aligned}
        \psi^{\theta}_{0\to 1}(x_0) \approx
\psi^{\theta}_{0\to 1-\Delta t}(x_0) &+ \frac{\Delta t}{2} (u_{1-\Delta t}(\psi^{\theta}_{0\to 1-\Delta t}(x_0))\\
&+u_{1^-}(\psi^{\theta}_{0\to 1}(x_0))).
    \end{aligned}
\end{equation}
As in \autoref{eq:forward_loss_discrete}, we use the Flow Matching closed-form solution for the velocity on the interval $[1-\Delta t,1]$. The main technical issue is that we require a closed-form expression for the endpoint velocity $u_{1^-} = \lim_{t\to 1^-} u_t$, which is given by the following theorem.
\begin{theorem}[Closed-Form Solution of the Endpoint Velocity]\label{thm:close_form2}For Flow Matching, under mild smoothness assumptions on the underlying distributions, the optimal velocity satisfies $\lim_{t\to 1^-} u_t^*(x)=x$. Consequently, the left-limit endpoint velocity $u_{1^-}^*(x)\triangleq \lim_{t\to 1^-} u_t^*(x)$ admits the closed form for any $t\in [0,1)$ 
\begin{equation}   \label{eq:close_form2}
\begin{aligned}         
u^*_{1^-}(x)          = \mathbb{E}_{x_t \sim p_{t|1}(x_t|x)}[ \frac{x-x_t}{1-t}] = \frac{\mathbb{E}_{x_t\sim p_t}[\frac{x-x_t}{1-t}k_t(x,x_t)]}{\mathbb{E}_{x_t\sim p_t}[k_t(x,x_t)]}     \end{aligned} 
\end{equation}
see \autoref{sec:proof_close_form2} for proof.
\end{theorem}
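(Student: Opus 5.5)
The plan has three steps: rewrite the optimal velocity as a conditional expectation over the independent coupling, take $t\to1^-$, and then re-express the limit through the forward conditional $p_{t|1}$ and a kernel ratio. Throughout I use the conditional path of \autoref{sec:close_form}, $x_t=\psi_t(x_0|x_1)=(1-t)x_0+tx_1$ with $x_0\sim p_0$ independent of $x_1\sim p_1$. I take $p_0$ standard Gaussian (mean zero), which is the normalization implicit in $k_t$.

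\emph{Step 1 (the limit).} By \autoref{prop:close_form1},
\[
u_t^*(x)=\frac{\mathbb{E}[x_1\mid x_t=x]-x}{1-t}.
\]
Substituting $x=(1-t)x_0+tx_1$ inside the conditional expectation gives $x_1-x=(1-t)(x_1-x_0)$. Hence
\[
u_t^*(x)=\mathbb{E}[x_1-x_0\mid x_t=x]=\mathbb{E}[x_1\mid x_t=x]-\mathbb{E}[x_0\mid x_t=x].
\]
As $t\to1^-$, the conditional law of $x_1$ given $x_t=x$ concentrates at $x$. This is because its density is proportional to $p_1(x_1)\,k_t(x_1,x)$, and $k_t$ becomes a Gaussian of width $(1-t)/t$ around $x/t$. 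So $\mathbb{E}[x_1\mid x_t=x]\to x$.

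For the second term I use Bayes: conditioning on $x_t=x$ is asymptotically the same as conditioning on $x_1=x$. Since $x_0$ is independent of $x_1$, $\mathbb{E}[x_0\mid x_t=x]\to\mathbb{E}[x_0]=0$. Together these give $\lim_{t\to1^-}u_t^*(x)=x$.

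\emph{Step 2 (first equality).} I now condition on the endpoint $x_1=x$ instead. Then $x_t=(1-t)x_0+tx$ with $x_0\sim p_0$, so
\[
\frac{x-x_t}{1-t}=x-x_0 .
\]
Taking expectation under $p_{t|1}(\cdot|x)$ gives $x-\mathbb{E}[x_0]=x$. This holds for every $t\in[0,1)$, so it equals the limit from Step 1.

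\emph{Step 3 (kernel form).} The forward conditional is Gaussian:
\[
p_{t|1}(x_t|x)\propto \exp\!\Big(-\tfrac{\|x_t-tx\|^2}{2(1-t)^2}\Big)=k_t(x,x_t),
\]
with a normalizing constant independent of $x_t$. So the expectation under $p_{t|1}(\cdot|x)$ is a ratio of $k_t$-weighted integrals, and the constant cancels between numerator and denominator. To write the base measure as $p_t$, as in \autoref{eq:close_form2}, I would do an importance reweighting against $p_t$, which is what makes the expression Monte Carlo computable from generated samples $x_t\sim p_t$. In this reweighting the factor $1/p_t(x_t)$ must be absorbed or argued away. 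I would handle it by self-normalization, mirroring the Bayes rewriting used in \autoref{sec:proof_close_form1}.

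\emph{Main obstacle.} The hardest step is the rigorous limit in Step 1. It requires interchanging the limit with the ratio of integrals, via dominated convergence. This needs $p_1$ continuous and positive at $x$ with suitable tails; these are the ``mild smoothness assumptions'' in the statement. It also needs the fact that the posterior of $x_0$ given $x_t=x$ tends to the prior $p_0$. I would prove that last fact by writing the posterior density as proportional to
\[
p_0(x_0)\,p_1\!\Big(\frac{x-(1-t)x_0}{t}\Big)
\]
and showing that the second factor tends to $p_1(x)$ locally uniformly, with domination supplied by the Gaussian factor $p_0(x_0)$.

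Separately, the change of base measure to $p_t$ in Step 3 needs care. Whether that identity holds exactly or only up to the self-normalization must be checked against the precise convention in the paper's proof.
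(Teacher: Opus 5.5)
Your Steps 1 and 2 are sound and follow the paper's own route. You express $u_t^*$ through $\mathbb{E}[X_0\mid X_t=x]$, pass to $\mathbb{E}[X_0\mid X_1=x]=\mathbb{E}[X_0]=0$, and use $\mathbb{E}[X_t\mid X_1=x]=tx$. Your dominated-convergence sketch for the posterior of $x_0$ is more careful than the paper's, which simply asserts the limit.

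The genuine gap is Step 3, and it cannot be closed. Self-normalization only cancels factors that are constant in the integration variable. The factor $1/p_t(x_t)$ is not, so it cannot be ``absorbed''. In fact the $p_t$-weighted ratio can be computed exactly. Write $\sigma=1-t$, let $g_\sigma$ be the $\mathcal{N}(0,\sigma^2 I)$ density, and use $(y-m)\,g_\sigma(y-m)=\sigma^2\nabla_m g_\sigma(y-m)$ with $m=tx$. Then
\[
\frac{\mathbb{E}_{x_t\sim p_t}\big[\frac{x-x_t}{1-t}k_t(x,x_t)\big]}{\mathbb{E}_{x_t\sim p_t}\big[k_t(x,x_t)\big]}
= x-(1-t)\,\nabla\log\big(p_t*g_{1-t}\big)(tx),
\]
which is not $x$ in general. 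Take $d=1$, $p_0=p_1=\mathcal{N}(0,1)$ and $t=\tfrac12$. Then $p_t=\mathcal{N}(0,\tfrac12)$, so $p_t*g_{1/2}=\mathcal{N}(0,\tfrac34)$, and the right-hand side equals $\tfrac43x\neq x$. So the second equality ``for any $t\in[0,1)$'' is false as stated. Two weaker statements do hold:
(i) the exact identity with Lebesgue base measure, $x=\frac{\int \frac{x-x_t}{1-t}k_t(x,x_t)\,dx_t}{\int k_t(x,x_t)\,dx_t}$, or equivalently the $p_t$-expectations with importance weights $1/p_t(x_t)$; your Gaussian observation in Step 3 already gives this;
(ii) convergence of the $p_t$-weighted ratio to $x$ as $t\to1^-$, with error $O(1-t)$ under mild regularity (e.g.\ $p_1$ of class $C^1$ and positive near $x$).
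Statement (ii) covers the regime $t=1-\Delta t$, $\Delta t\to0$, in which the paper actually uses the formula.

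Mirroring \autoref{sec:proof_close_form1} does not rescue the exact claim. There, $p_{t|1}(x|x_1)=\int p_{t|1,0}(x|x_1,x_0)\,p_0(x_0)\,dx_0$ is legitimate because $X_0$ is independent of $X_1$. The paper's proof of this theorem makes the analogous move for $p_{1|t}(x|x_t)$, integrating against the prior $p_0(x_0)$ rather than the posterior $p_{0|t}(x_0|x_t)$. Since $X_0$ is not independent of $X_t$, that step is invalid, and it is exactly where the stray factor $p_t(x_t)$ enters. The correct relation is $p_{1|t}(x|x_t)\,p_t(x_t)=p_{t|1}(x_t|x)\,p_1(x)\propto k_t(x,x_t)\,p_1(x)$, which contains no $p_t$ at all. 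Your suspicion that the base-measure change ``needs care'' was right. The correct resolution is to weaken the statement to (i) or (ii), not to argue the factor away.
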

In the above formulation, $p_t$ does not admit a closed-form expression. We therefore approximate it using samples pushed forward by the learned long flow map, i.e., $x_t \approx \psi_{0\to t}^{\theta}(x_0)$ with $x_0\sim p_0$.  Moreover, although the theorem states $\lim_{t\to 1^-} u_t^*(x)=x$, directly setting $u_{1^-}(x)=x$ is insufficient: it does not cancel the $\Delta t$ terms (which is crucial in the derivation of \autoref{eq:forward_loss_discrete}) and it cannot reflect the discrepancy between $p_t$ and $p_1$. We therefore adopt the Flow Matching closed-form solution of the instantaneous velocity $u_{1-\Delta t}$ and $u_{1^-}$ at $t=1-\Delta t$ and $t=1$
%, and let $x_{1-\Delta t}^\theta = \psi_{0\to 1-\Delta t}^\theta(x_0)$ and $x_{1}^\theta = \psi_{0\to 1}^\theta(x_0)$
\begin{equation}
    \begin{aligned}
\mathcal{L}(\theta) &= \mathbb{E}_{x_0\sim p_0}\Big[
\|\psi^{\theta}_{0\to 1}(x_0) -sg\big(
\psi^{\theta}_{0\to 1-\Delta t}(x_0) + \frac{1}{2}\Delta t u_{1-\Delta t}(x^\theta_{1-\Delta t})+\frac{1}{2}\Delta tu_{1^-}(x_1^\theta)
\big)\|^2 \Big]\\
&=\mathbb{E}_{x_0\sim p_0}\Big[
\|\psi^{\theta}_{0\to 1}(x_0) -sg\big(
\psi^{\theta}_{0\to 1-\Delta t}(x_0) + \frac{1}{2}\Delta t \frac{\mathbb{E}_{x_1\sim p_1}[\frac{x_1-\psi^{\theta}_{0\to 1-\Delta t}(x_0)}{1-({1-\Delta t})}k^\epsilon_{1-\Delta t}(x_1,\psi^{\theta}_{0\to 1-\Delta t}(x_0))]}{\mathbb{E}_{x_1\sim p_1}[k^\epsilon_{1-\Delta t}(x_1,\psi^{\theta}_{0\to 1-\Delta t}(x_0))]}
\\
&+\frac{1}{2}\Delta t \frac{\mathbb{E}_{x^\theta_{1-\Delta t}\sim p^\theta_{1-\Delta t}}[\frac{\psi^{\theta}_{0\to 1}(x_0)-x_{1-\Delta t}^\theta}{1-(1-\Delta t)}k^\epsilon_{1-\Delta t}(\psi^{\theta}_{0\to 1}(x_0),x_{1-\Delta t}^\theta)]}{\mathbb{E}_{x^\theta_{1-\Delta t}\sim p^\theta_{1-\Delta t}}[k^\epsilon_{1-\Delta t}(\psi^{\theta}_{0\to 1}(x_0),x_{1-\Delta t}^\theta)]}\big)\|^2 \Big]\\
&=\mathbb{E}_{x_0\sim p_0}\Big[
\|\psi^{\theta}_{0\to 1}(x_0) -sg\big(
\psi^{\theta}_{0\to 1-\Delta t}(x_0)+ \frac{1}{2}\frac{\mathbb{E}_{x_1\sim p_1}[(x_1-\psi^{\theta}_{0\to 1-\Delta t}(x_0))k^\epsilon_{1-\Delta t}(x_1,\psi^{\theta}_{0\to 1-\Delta t}(x_0))]}{\mathbb{E}_{x_1\sim p_1}[k^\epsilon_{1-\Delta t}(x_1,\psi^{\theta}_{0\to 1-\Delta t}(x_0))]}\\
&+\frac{1}{2}\frac{\mathbb{E}_{x^\theta_{1-\Delta t}\sim p^\theta_{1-\Delta t}}[(\psi_{0 \to 1}^\theta(x_0)-x_{1-\Delta t}^\theta)k^\epsilon_{1-\Delta t}(\psi_{0 \to 1}^\theta(x_0),x_{1-\Delta t}^\theta)]}{\mathbb{E}_{x^\theta_{1-\Delta t}\sim p^\theta_{1-\Delta t}}[k^\epsilon_{1-\Delta t}(\psi_{0 \to 1}^\theta(x_0),x_{1-\Delta t}^\theta)]}\big)\|^2 \Big]
    \end{aligned}
\end{equation}
where we denote the distribution of $x^\theta_{1-\Delta t}= \psi^\theta_{0\to 1-\Delta t}(x'_0),x'_0\sim p_0$ as $x^\theta_{1-\Delta t}\sim p_{1-\Delta t}^\theta$. Then let $\Delta t\to 0$, we have
\begin{equation}\label{eq:second_order_loss}
    \begin{aligned}
        \mathcal{L}(\theta) &= \mathbb{E}_{x_0\sim p_0}\Big[
\|\psi^{\theta}_{0\to 1}(x_0) -sg\big(
\psi^{\theta}_{0\to 1}(x_0) + \frac{1}{2}\frac{\mathbb{E}_{x_1\sim p_1}[(x_1-\psi^{\theta}_{0\to 1}(x_0))k^\epsilon_{1}(x_1,\psi^{\theta}_{0\to 1}(x_0))]}{\mathbb{E}_{x_1\sim p_1}[k^\epsilon_{1}(x_1,\psi^{\theta}_{0\to 1}(x_0))]}\\
&+\frac{1}{2}\frac{\mathbb{E}_{x^\theta_{1} = \psi_{0\to 1}^\theta(x'_0),x'_0\sim p_0}[(\psi_{0 \to 1}^\theta(x_0)-x_{1}^\theta)k^\epsilon_{1}(\psi_{0 \to 1}^\theta(x_0),x_{1}^\theta)]}{\mathbb{E}_{x^\theta_{1} = \psi_{0\to 1}^\theta(x'_0),x'_0\sim p_0}[k^\epsilon_{1}(\psi_{0 \to 1}^\theta(x_0),x_{1}^\theta)]}\big)\|^2 \Big]
    \end{aligned}
\end{equation}
In this loss, the trapezoidal-rule approximation of the short flow map provides a more accurate supervisory signal for learning the long flow map. Compared to the first-order Euler-based objective in \autoref{eq:forward_euler_loss}, the key difference lies in the residual term: it is no longer a point-to-distribution regression target between point $\psi_{0\to 1}^\theta(x_0)$ and distribution $p_1$, but rather a discrepancy between two distributions $p_1^\theta$ and $p_1$ estimated from samples $\psi_{0\to 1}^\theta(x_0)$. As a result, when the flow map is optimal, the residual does not merely attain a regression-optimal value; it becomes exactly zero.

%This formulation reveals that, in effect, we have implicitly defined a drift field
%\begin{equation}
%    \begin{aligned}
%        V_{p,q}(x) &= V_p^+(x) - V_q^+(x)\\ 
%        V_p^+(x) &=\frac{\mathbb{E}_p[k_1(x,y^+)(y^+-x)]}{\mathbb{E}_p[k_1(x,y^+)]} \\
%V_q^-(x) &=\frac{\mathbb{E}_q[k_1(x,y^-)(y^-x)]}%{\mathbb{E}_q[k_1(x,y^-)]}
%     \end{aligned}
%\end{equation}
%Here, $y^- \sim q = p_1^\theta$ and $y^+ \sim p = p_1$. Up to this point, we have fully derived the Drifting Model formulation. In summary, we obtain the following conclusions:

\subsection{Connection with Drifting Model}
%\bo{Introduce Drifting Model first, and then show the connection.}
%In the derivation above, we leverage the long-short flow-map decomposition to combine flow-map learning with the flow-matching closed-form solution: the closed form on the short terminal segment provides a dataset-tied supervisory signal, which in turn enables learning the long segment and yields a one-step map $\psi^\theta_{0\to 1}$. The resulting objectives in \autoref{eq:forward_euler_loss} and \autoref{eq:second_order_loss} are exactly the loss used in the Drifting Model \cite{deng2026generative}. This provides a principled derivation of the Drifting Model and offers explanations for several of its key design choices. We first summarize the Drifting Model.
In the derivation above, we use the long-short flow-map decomposition to combine flow-map learning with the Flow-Matching closed-form solution: the closed-form solution on the short flow map provides a dataset-tied supervisory signal, enabling learning the long flow map and yielding a one-step map $\psi^\theta_{0\to 1}$. From this perspective, the objectives in \autoref{eq:forward_euler_loss} and \autoref{eq:second_order_loss} recover the losses used in the Drifting Model \cite{deng2026generative}. This connection provides a principled derivation of the Drifting Model and enables intuitive explanations of its key design choices. We first summarize the Drifting Model and then elaborate on its connection to our derivation by showing how its training objectives and design choices naturally arise from the long-short flow-map viewpoint.

\paragraph{Drifting Model}
The Drifting Model \cite{deng2026generative} directly learns a one-step mapping $f^\theta:\mathcal{X}\to\mathcal{X}$ that pushes forward noise to data, i.e., $f^\theta(\epsilon)\sim q_\theta$ with $\epsilon\sim p_0$, and aims to match $q_\theta$ to $p_1=p_{\text{data}}$. The main idea is to make the pushforward distribution progressively closer to the target distribution during training. Specifically, the method introduces a drift field $V_{p_1,q_\theta}:\mathcal{X}\to\mathcal{X}$ that corrects samples from $q_\theta$ toward $p_1$ and ideally satisfies $x+V_{p,q_\theta}(x)\sim p_1$ when $x\sim q_\theta$. Training then encourages $f^\theta(\epsilon)$ to agree with its drift-corrected counterpart:
\begin{equation}\label{eq:drifting_loss}
\mathcal{L}(\theta)
= \mathbb{E}_{\epsilon\sim p_0}\Big[\big\|f^\theta(\epsilon)-
\mathrm{sg}\!\big(f^\theta(\epsilon)+V_{p_1,q_\theta}(f^\theta(\epsilon))\big)\big\|_2^2\Big],
\end{equation}
where $q_\theta$ denotes the distribution induced by $f^\theta(\epsilon)$.

For $V_{p,q}$ that incorporates supervision from the target data distribution, the Drifting Model adopts a kernel-based construction with an attraction-repulsion structure:
\begin{equation}
\begin{aligned}
V_{p,q}(x)&= V_p^+(x)-V_q^-(x),\\
V_p^+(x)&=\frac{\mathbb{E}_{y\sim p}[k(x,y)(y-x)]}{\mathbb{E}_{y\sim p}[k(x,y)]},\\
V_q^-(x)&=\frac{\mathbb{E}_{y\sim q}[k(x,y)(y-x)]}{\mathbb{E}_{y\sim q}[k(x,y)]},    
\end{aligned}
\end{equation}
where $k(x,y)$ is a similarity kernel (implemented via a Laplacian kernel $e^{-\|y-x\|/\epsilon}$ in practice). Ideally, this definition satisfies a consistency condition: $V_{p,q}(x)=0$ whenever $p=q$, so that no correction is applied when the two distributions coincide.

%In practice, the expectations above are approximated by Monte Carlo using mini-batches. At each iteration, $n$ noise vectors $\epsilon^1,\ldots,\epsilon^n\sim p_0$ and $m$ data samples $x^1,\ldots,x^m\sim p_{\text{data}}$ are sampled. For any input $x$, the drift field is estimated as
%\begin{equation}
%\resizebox{\linewidth}{!}{$
%\label{eq:drift_mc}
%V_{p,q}(x)
%=
%\frac{\sum_{j=1}^m k(x,x^j)(x^j-x)}{\sum_{j=1}^m k(x,x^j)}
%-
%\frac{\sum_{i=1}^n k(x,f^\theta(\epsilon^i))(f^\theta(\epsilon^i)-x)}{\sum_{i=1}^n k(x,f^\theta(\epsilon^i))}.
%$}
%\end{equation}
%Substituting \autoref{eq:drift_mc} into the objective yields the empirical training loss
%\begin{equation}
%\mathcal{L}(\theta)
%=
%\sum_{k=1}^n
%\Big\|
%f^\theta(\epsilon_k)
%-
%\mathrm{sg}\Big(
%f^\theta(\epsilon_k) + %V_{p,q_\theta}\big(f^\theta(\epsilon^k)\big)
%\Big)
%\Big\|_2^2,
%\end{equation}

\paragraph{Connection of Loss Functions}
We observe that the loss in \autoref{eq:forward_euler_loss} obtained by approximating the short terminal evolution via a first-order forward Euler step coincides with the Drifting Model objective that uses the attraction-only drift field $V_{p,q}(x)=V_p^+(x)$ for the drifting loss in \autoref{eq:drifting_loss}, where
\begin{equation}
    \begin{aligned}
        f^\theta \coloneqq \psi_{0\to 1}^\theta,\qquad V_{p_1,p_1^\theta} = V_{p_1}^+,\qquad V_{p_1}^+(x) = \frac{\mathbb{E}_{x_1\sim p_1}[(x_1-x)k_1(x_1,x)]}{\mathbb{E}_{x_1\sim p_1}[k_1(x_1,x)]} 
    \end{aligned}
\end{equation}
In contrast, the loss in \autoref{eq:second_order_loss} derived from a second-order trapezoidal approximation over the same short interval matches the Drifting Model objective with the attraction-repulsion drift field $V_{p,q}(x)=V_p^+(x)-V_q^-(x)$, where
\begin{equation}
    \begin{aligned}
        f^\theta \coloneqq \psi_{0\to 1}^\theta,\quad V_{p_1,p_1^\theta} = V_{p_1}^+-V_{p_1^\theta}^-,\quad V_{p_1}^+(x) = \frac{\mathbb{E}_{x_1\sim p_1}[(x_1-x)k_1(x_1,x)]}{\mathbb{E}_{x_1\sim p_1}[k_1(x_1,x)]}, \quad V_{p_1^\theta}^-(x) = \frac{\mathbb{E}_{x_1\sim p^\theta_1}[(x_1-x)k_1(x,x_1)]}{\mathbb{E}_{x_1\sim p^\theta_1}[k_1(x,x_1)]} 
    \end{aligned}
\end{equation}

\paragraph{Discussion on Design Choices} The connection between the loss functions yields the following insights regarding several design choices of the Drifting Model, which we will discuss briefly below.
\begin{enumerate}
    \item Interpretation. The Drifting Model can be viewed as learning a flow map via trajectory consistency, and then selecting the segment $s \to r$ in \autoref{prop:trajector_consistency} to be a very short interval near the terminal time, i.e., $1-\Delta t \to 1$. Over this short interval, it directly applies the closed-form solution of Flow Matching to obtain supervision from the dataset.
   \item  Why $V^+ - V^-$ emerges and why it helps. If the estimation over $1-\Delta t \to 1$ uses a first-order forward Euler method, one recovers the original Drifting Model form that involves only $V^+$. In contrast, if a second-order trapezoidal rule is used, the derivation naturally yields the balanced attraction–repulsion form $V^+ - V^-$. Since the learned flow map is generally not optimal, the terminal interval $1-\Delta t \to 1$ remains affected by approximation error even as $\Delta t \to 0$. Therefore, the order of the terminal-interval estimator provides a principled explanation for why the symmetric correction $V^+ - V^-$ tends to outperform the one-sided variant $V^+$, as well as other imbalanced choices such as $3V^+ - V^-$.
    \item Kernel Selection. As revealed by the proofs of \autoref{sec:proof_close_form1} and \autoref{sec:proof_close_form2}, the kernel $k_t(x,y)$ adopted in \autoref{eq:forward_euler_loss} and \autoref{eq:second_order_loss} is tightly coupled with the form of the initial noise distribution $p_0$. In particular, when we choose Gaussian noise, the resulting kernel naturally becomes the squared-exponential Gaussian kernel $e^{-|y-x|^2/\epsilon}$. Importantly, our framework does not restrict the choice of $p_0$, and therefore does not constrain the kernel family in \autoref{eq:forward_euler_loss} and \autoref{eq:second_order_loss} either. For example, by choosing $p_0$ to follow a Laplace distribution, we recover the Laplacian kernel $e^{-|y-x|/\epsilon}$ used in the original Drifting Model \cite{deng2026generative}. More broadly, relating kernel design to the choice of initial noise indicates a connection between these two components. This connection hints that recent advances in initial-noise optimization, such as blue-noise sampling strategies \cite{huang2024blue}, could be incorporated to refine the kernel selections in \autoref{eq:forward_euler_loss} and \autoref{eq:second_order_loss}. A systematic investigation of this direction remains open.

\end{enumerate}
In \autoref{sec:CFG}, we discuss classifier-free guidance (CFG), which \cite{deng2026generative} identifies as a key distinction between Drifting Model and distribution moment matching \cite{li2015generative, dziugaite2015training} (see the related-work discussion in \autoref{sec:related_work}).

\section{Application: Likelihood Learning}
Here we leverage the long-short flow-map perspective to derive a new objective for learning data likelihoods.

\subsection{Background}
For one-step generative methods based on flow maps, one can in principle compute the likelihood using the learned map $\psi_{t\to r}(x)$ via the change-of-variables formula $\log p_r(x_r) = \log p_t(x_t)-\log|\det \frac{\partial \psi_{t\to r}(x)}{\partial x}|_{x=x_t}$, where $x_r=\psi_{t\to r}(x_t)$.  However, this requires evaluating the Jacobian determinant of the flow map, which does not scale to high-dimensional data.  To address this issue, \cite{ai2025joint} proposes to jointly learn, together with the flow map $\psi^\theta_{t\to r}(x)$, an auxiliary model $D^{\theta}_{t\to r}(x)$ that predicts the average likelihood change from time $t$ to $r$, namely $D_{t\to r}(x) = \frac{\log p_r(\psi_{t\to r}(x)) - \log p_t(x)}{r-t}$ . Concretely, the evolution of the log-likelihood quantity $\log p_r(\psi_{t\to r}(x))$ satisfies
\begin{equation}\label{eq:likelihood_evolv}
    \begin{aligned}
        \frac{\partial \log p_r(\psi_{t\to r}(x))}{\partial r} = -\nabla\cdot u_r(y)|_{y = \psi_{t\to r}(x)}
    \end{aligned}
\end{equation}
and $D_{t\to r}(x)$ satisfies a trajectory consistency property $(r-t)D_{t\to r}(x_t) = (r-s) D_{s\to r}(x_s) + (s-t) D_{t\to s}(x_t)$, where $x_s = \psi_{t\to s}(x_t)$ for arbitrary $t< s< r$.  Using this trajectory consistency and the evolution \autoref{eq:likelihood_evolv} as a boundary condition $D_{t\to t}(x) = -\nabla\cdot u_t(x)$, \cite{ai2025joint} learns $D_{t\to r}(x)$ jointly with the flow map $\psi_{t\to r}$.  With the learned $D^{\theta}_{t\to r}(x)$, for any generated sample $x_1 = \psi_{0\to 1}(x_0)$, one can efficiently obtain its likelihood $\log p_1(x_1) = \log p_0(x_0) + D^{\theta}_{0\to 1}(x_0)$.

\subsection{Drifting Model for Likelihood Learning}

We aim to derive an analogous likelihood-learning objective for the Drifting Model under our flow-map perspective.  Different from \cite{ai2025joint}, our goal is to learn the Eulerian log-likelihood change $G_{t\to r}(x)=\log p_r(x)-\log p_t(x)$.
When combined with the flow map $\psi_{t\to r}$, the composed quantity $G_{t\to r}(\psi_{t\to r}(x))$ can also play a Lagrangian role.  In the Eulerian view, $\log p_t(x)$ satisfies the following evolution equation
\begin{equation}
    \begin{aligned}
        \frac{\partial}{\partial t}\log p_t(x) &=-\nabla\log p_t(x)\cdot u_t(x) -\nabla\cdot u_t(x)
    \end{aligned}
\end{equation}
Because $G_{t\to r}(x) = \log p_r(x)-\log p_t(x)$, it satisfies the evolution equation $\frac{\partial}{\partial r} G_{t\to r}(x) =-\nabla\log p_r(x)\cdot u_r(x) -\nabla\cdot u_r(x)$ and the consistency relation $G_{t\to r}(x) = G_{s\to r}(x) + G_{t\to s}(x)$ for arbitrary $t< s< r$.  Following the same idea as before, we apply this relation on $[0,1]$ with a split point $1-\Delta t$, namely $G_{0\to 1}(x) = G_{0\to 1-\Delta t}(x) + G_{1-\Delta t\to 1}(x)$ and approximate $G_{1-\Delta t\to 1}(x)$ by the trapezoidal rule. For the learned flow map $\psi_{t\to r}^\theta(x)$ and the learned likelihood change $G_{t\to r}^\theta(x)$, we enforce that they satisfy
\begin{equation}
    \begin{aligned}
        G^\theta_{0\to 1}(x) &\approx G^\theta_{0\to 1-\Delta t}(x) - \frac{\Delta t}{2}(\nabla \log p^\theta_{1-\Delta t}(x)u_{1-\Delta t}(x) + \nabla \log p^\theta_{1}(x)u_{1^-}(x)+\nabla\cdot u_{1^-}(x) + \nabla\cdot u_{1-\Delta t}(x))
    \end{aligned}
\end{equation} 
where $p_t^\theta(x) = p_0(x)e^{G_{0\to t}^\theta(x)} $.  Here, we need to compute the divergence terms $\nabla \cdot u_{1-\Delta t}$ and $\nabla \cdot u_{1^-}$, whose evaluation is provided by the following theorem.

\begin{theorem}[Closed-Form Solution for Velocity Divergence]\label{thm:close_form_divergence}
For the closed-form solution of the velocity field $u^*_t(x)$ in \autoref{eq:close_form1} and the endpoint velocity $u^*_{1^-}(x)$ in \autoref{eq:close_form2} with the mollified kernel $k_t^\epsilon$ in \autoref{eq:mollified_kernel}, their divergences can be also computed in closed form as
\begin{equation}
    \begin{aligned}
        \nabla \cdot u^*_t(x) &= \frac{1}{1-t}[\frac{2t}{2(1-t)^2+\epsilon}(\frac{\mathbb{E}_{x_1\sim p_1}[\|x_1\|^2k^\epsilon_{t}(x_1,x)]}{\mathbb{E}_{x_1\sim p_1}[k^\epsilon_{t}(x_1,x)]}-\|\frac{\mathbb{E}_{x_1\sim p_1}[x_1k^\epsilon_{t}(x_1,x)]}{\mathbb{E}_{x_1\sim p_1}[k^\epsilon_{t}(x_1,x)]}\|^2)-d]\\
        \nabla\cdot u^*_{1^-}(x) &= \frac{1}{1-t}[d-\frac{2t^2}{2(1-t)^2+\epsilon}(\frac{\mathbb{E}_{x_t\sim p_t}[\|x_t\|^2k^\epsilon_{t}(x,x_t)]}{\mathbb{E}_{x_t\sim p_t}[k^\epsilon_{t}(x,x_t)]}-\|\frac{\mathbb{E}_{x_t\sim p_t}[x_tk^\epsilon_{t}(x,x_t)]}{\mathbb{E}_{x_t\sim p_t}[k^\epsilon_{t}(x,x_t)]}\|^2)]
    \end{aligned}
\end{equation}
See \autoref{sec:proof_close_form_divergence} for the proof.
\end{theorem}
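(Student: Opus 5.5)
The plan is a direct computation. Both closed forms in \autoref{eq:close_form1} and \autoref{eq:close_form2}, with $k_t$ replaced by $k_t^\epsilon$, have the same structure: an affine function of $x$ plus or minus a kernel-weighted mean divided by $1-t$. So it suffices to compute the divergence of a self-normalized kernel average, $m(x)=\mathbb{E}[Y k(Y,x)]/\mathbb{E}[k(Y,x)]$, and then specialize twice. Throughout write $c=2(1-t)^2+\epsilon$. The mollification $\epsilon>0$ keeps $c$ bounded away from zero, so every quantity is finite even as $t\to1$.

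\textbf{Step 1 (first identity).} Write $u^*_t(x)=\frac{1}{1-t}\bigl(m_1(x)-x\bigr)$ with
\[
m_1(x)=\frac{\mathbb{E}_{x_1\sim p_1}[x_1k^\epsilon_t(x_1,x)]}{\mathbb{E}_{x_1\sim p_1}[k^\epsilon_t(x_1,x)]}.
\]
Since $\nabla\cdot x=d$, this gives $\nabla\cdot u^*_t=\frac{1}{1-t}(\nabla\cdot m_1-d)$. The kernel gradient is
\[
\nabla_x k^\epsilon_t(y,x)=\frac{2(ty-x)}{c}\,k^\epsilon_t(y,x).
\]
Apply the quotient rule componentwise and sum over coordinates:
\[
\nabla\cdot m_1=\frac{\mathbb{E}[y\cdot\nabla_xk]}{\mathbb{E}[k]}-m_1\cdot\frac{\mathbb{E}[\nabla_xk]}{\mathbb{E}[k]}=\frac{2}{c}\Bigl(\frac{\mathbb{E}[y\cdot(ty-x)k]}{\mathbb{E}[k]}-m_1\cdot(tm_1-x)\Bigr).
\]
The $x\cdot m_1$ terms cancel, and what remains is $\frac{2t}{c}\bigl(\mathbb{E}[\|y\|^2k]/\mathbb{E}[k]-\|m_1\|^2\bigr)$. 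This is the kernel-weighted variance of $x_1$, and it yields exactly the first formula.

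\textbf{Step 2 (second identity).} Use the right-hand representation from \autoref{thm:close_form2}, $u^*_{1^-}(x)=\frac{1}{1-t}\bigl(x-n(x)\bigr)$, where $n$ is the $k_t^\epsilon(x,x_t)$-weighted mean of $x_t\sim p_t$ and $t$ is held fixed. Then $\nabla\cdot u^*_{1^-}=\frac{1}{1-t}(d-\nabla\cdot n)$. Now $x$ sits in the other slot of the kernel, so its gradient reads
\[
\nabla_x k^\epsilon_t(x,x_t)=-\frac{2t(tx-x_t)}{c}\,k^\epsilon_t(x,x_t).
\]
The same quotient-rule computation again cancels the cross terms $t\,x\cdot n$ and leaves a multiple of the weighted variance $\mathbb{E}[\|x_t\|^2k]/\mathbb{E}[k]-\|n\|^2$.

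\textbf{Main obstacle: bookkeeping the powers of $t$.} This is where I expect the real care to be needed. Under the literal convention $k_t(y,x)=e^{-\|ty-x\|^2/c}$ evaluated at $(x,x_t)$, my computation produces the prefactor $2t/c$, not $2t^2/c$. I would therefore first re-derive \autoref{eq:close_form2} in \autoref{sec:proof_close_form2}, to pin down which argument carries the factor $t$ and whether $x_t$ is rescaled there. I would then reconcile the constant with the stated $2t^2/c$. That comparison must also settle whether $x_t$ or $x_t/t$ is the variable whose variance appears.

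\textbf{Analytic justification.} Differentiating under the expectations is legitimate because, for fixed $x$ and $\epsilon>0$, the Gaussian kernel and its $x$-derivative are bounded by $C(1+\|y\|)e^{-\|ty-x\|^2/c}$. This bound is integrable whenever $p_1$ and $p_t$ have second moments, which is the assumed mild regularity. The denominators are strictly positive, so the quotient rule applies. In the Monte Carlo setting, with empirical measures, everything is a finite sum and no justification is needed.
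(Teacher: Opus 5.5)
Your approach is the paper's own. Step 1 reproduces the appendix proof essentially line for line: the quotient rule, the gradient $\nabla_x k^\epsilon_t(y,x)=\frac{2(ty-x)}{c}k^\epsilon_t$, and the cancellation of the $x\cdot m_1$ terms. Step 2 is set up the same way. What needs correcting is how you handle your ``obstacle''. Your prefactor $2t/c$ is correct. The reconciliation with $2t^2/c$ that you plan cannot succeed, because the second displayed formula of the statement is wrong.

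Re-deriving \autoref{eq:close_form2} changes nothing. The proof of \autoref{thm:close_form2} produces the weight $\exp\bigl(-\tfrac12\|(x_t-tx)/(1-t)\|^2\bigr)$. That is exactly $k_t(x,x_t)$, with the factor $t$ on $x$ and $x_t$ unrescaled. The paper's proof of this theorem uses your gradient $-\frac{2t(tx-x_t)}{c}k^\epsilon_t$ in its penultimate line, then writes $\frac{2t^2}{c}$ in the last line. That is an arithmetic slip: the second factor of $t$, coming from $tx$, only multiplies the cross terms $t\,x\cdot n$. Those cancel, so it never reaches $\mathbb{E}[\|x_t\|^2k^\epsilon_t]$. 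Rescaling does not rescue it either: expressing the variance through $x_t/t$ gives the prefactor $2t^3/c$, not $2t^2/c$.

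A short check settles the question. Replace $p_t$ by $\mathcal{N}(0,\sigma^2 I)$ (the identity is pure calculus, valid for any base measure). The $k^\epsilon_t(x,\cdot)$-tilted measure is then Gaussian with precision $\lambda=\sigma^{-2}+2/c$, mean $n(x)=\frac{2t}{c\lambda}\,x$, and covariance trace $d/\lambda$. Hence $\nabla\cdot n=\frac{2t}{c}\cdot\frac{d}{\lambda}$ exactly, which rules out $2t^2/c$ for every $t\in(0,1)$. So you should finish your proof with
\[
\nabla\cdot u^*_{1^-}(x)=\frac{1}{1-t}\Bigl[d-\frac{2t}{2(1-t)^2+\epsilon}\Bigl(\frac{\mathbb{E}[\|x_t\|^2k^\epsilon_t(x,x_t)]}{\mathbb{E}[k^\epsilon_t(x,x_t)]}-\Bigl\|\frac{\mathbb{E}[x_t k^\epsilon_t(x,x_t)]}{\mathbb{E}[k^\epsilon_t(x,x_t)]}\Bigr\|^2\Bigr)\Bigr],
\]
with all expectations over $x_t\sim p_t$. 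The slip is harmless downstream. At $t=1-\Delta t$ the coefficients $(1-\Delta t)$ and $(1-\Delta t)^2$ both tend to $1$, so the limiting loss \autoref{eq:likelihood_loss} is unaffected.
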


Therefore, $G^\theta_{0\to 1}(x)$ can be calculated as
\begin{equation}
    \begin{aligned}
        G^\theta_{0\to 1}(x) &= G^\theta_{0\to 1-\Delta t}(x) - \frac{\Delta t}{2}(\nabla \log p_{1-\Delta t}^\theta (x)\cdot u_{1-\Delta t}(x) +\nabla \log p^\theta_1(x)\cdot u_{1^-}(x)+ \nabla\cdot u_{1^-}(x) + \nabla\cdot u_{1-\Delta t}(x))\\
        &= G^\theta_{0\to 1-\Delta t}(x) -\frac{1}{2}(\nabla \log p^\theta_{1-\Delta t}(x)\frac{\mathbb{E}_{x_1\sim p_1}[(x_1-x)k_{1-\Delta t}^\epsilon(x_1,x)]}{\mathbb{E}_{x_1\sim p_1}[k_{1-\Delta t}^\epsilon(x_1,x)]}\\&+\nabla \log p^\theta_1(x)\cdot \frac{\mathbb{E}_{x_{1-\Delta t}\sim p_{1-\Delta t}}[(x-x_{1-\Delta t}) k_{1-\Delta t}^\epsilon(x,x_{1-\Delta t})]}{\mathbb{E}_{x_{1-\Delta t}\sim p_{1-\Delta t}}[k_{1-\Delta t}^\epsilon(x,x_{1-\Delta t})]})\\
        &- \frac{(1-\Delta t)}{2\Delta t^2+\epsilon}(\frac{\mathbb{E}_{x_1\sim p_1}[\|x_1\|^2k^\epsilon_{1-\Delta t}(x_1,x)]}{\mathbb{E}_{x_1\sim p_1}[k^\epsilon_{1-\Delta t}(x_1,x)]}-\|\frac{\mathbb{E}_{x_1\sim p_1}[x_1k^\epsilon_{1-\Delta t}(x_1,x)]}{\mathbb{E}_{x_1\sim p_1}[k^\epsilon_{1-\Delta t}(x_1,x)]}\|^2)\\
        &+\frac{(1-\Delta t)^2}{2\Delta t^2+\epsilon}(\frac{\mathbb{E}_{x_{1-\Delta t}\sim p_{1-\Delta t}}[\|x_{1-\Delta t}\|^2k^\epsilon_{1-\Delta t}(x,x_{1-\Delta t})]}{\mathbb{E}_{x_{1-\Delta t}\sim p_{1-\Delta t}}[k^\epsilon_{1-\Delta t}(x,x_{1-\Delta t})]}-\|\frac{\mathbb{E}_{x_{1-\Delta t}\sim p_{1-\Delta t}}[x_{1-\Delta t}k^\epsilon_{1-\Delta t}(x,x_{1-\Delta t})]}{\mathbb{E}_{x_{1-\Delta t}\sim p_{1-\Delta t}}[k^\epsilon_{1-\Delta t}(x,x_{1-\Delta t})]}\|^2)\\
    \end{aligned}
\end{equation}
In the above expression, since $p_{1-\Delta t}$ is unknown, we instead draw samples from $p^\theta_{1-\Delta t}$ induced by $G^\theta_{0\to 1-\Delta t}$; the sampling procedure is described in \autoref{sec:sampling}. Letting $\Delta t\to 0$, we obtatin the loss
\begin{equation}\label{eq:likelihood_loss}
    \begin{aligned}
        &\mathcal{L}(\theta) = \mathbb{E}_{x\sim p_{sample}}[\|
        G_{0\to 1}^\theta(x) - sg(G_{0\to 1}^\theta(x) - 
        \frac{1}{2}\nabla G_{0\to 1}^\theta(x)(\frac{\mathbb{E}_{x_1\sim p_1}[x_1k_{1}^\epsilon(x_1,x)]}{\mathbb{E}_{x_1\sim p_1}[k_{1}^\epsilon(x_1,x)]}-\frac{\mathbb{E}_{x_1\sim p^\theta_1}[x_1k_{1}^\epsilon(x,x_1)]}{\mathbb{E}_{x_1\sim p^\theta_1}[k_{1}^\epsilon(x,x_1)]})\\
        &-\frac{1}{\epsilon}(\frac{\mathbb{E}_{x_1\sim p_1}[\|x_1\|^2k^\epsilon_{1}(x_1,x)]}{\mathbb{E}_{x_1\sim p_1}[k^\epsilon_{1}(x_1,x)]}-\|\frac{\mathbb{E}_{x_1\sim p_1}[x_1k^\epsilon_{1}(x_1,x)]}{\mathbb{E}_{x_1\sim p_1}[k^\epsilon_{1}(x_1,x)]}\|^2-\frac{\mathbb{E}_{x_{1}\sim p_{1}^\theta}[\|x_{1}\|^2k^\epsilon_{1}(x,x_{1})]}{\mathbb{E}_{x_{1}\sim p_{1}^\theta}[k^\epsilon_{1}(x,x_{1})]}+\|\frac{\mathbb{E}_{x_{1}\sim p^\theta_{1}}[x_{1}k^\epsilon_{1}(x,x_{1})]}{\mathbb{E}_{x_{1}\sim p_{1}^\theta}[k^\epsilon_{1}(x,x_{1})]}\|^2))\|^2]
    \end{aligned}
\end{equation}
where $p_{\text{sample}}$ specifies the region on which we want to learn the field. For example, if we aim to learn $G_{0\to 1}^\theta(x)$ over a bounded set $S$, we can choose $p_{\text{sample}}$ to be the uniform distribution on $S$. If we instead focus on points near the data manifold, we can set $p_{\text{sample}}$ as a mixture of Gaussians centered at $\psi_{t\to r}^\theta(x_0)$ with $x_0\sim p_0$.

In addition, to ensure that the learned likelihood satisfies the normalization constraint, we introduce a normalization loss 
\begin{equation}\label{eq:likelihood_loss2}
    \begin{aligned}
       \mathcal{L}_{\mathrm{norm}}(\theta) = \mathbb{E}_{x \sim p_{sample}}\|\mathbb{E}_{x_1\sim p_1}[k^\epsilon_{1}(x_1,x)]- \mathbb{E}_{x_1\sim p_1^\theta}[k^\epsilon_{1}(x, x_1)]\|^2 
    \end{aligned}
\end{equation}
As a result, the final training objective is given by $\mathcal{L}(\theta) + \lambda \mathcal{L}_{\mathrm{norm}}(\theta)$ with hyperparameter $\lambda$.  The expectations in \autoref{eq:likelihood_loss} and \autoref{eq:likelihood_loss2} are approximated during training by mini-batch Monte Carlo estimates, as in the sample-generation losses in \autoref{eq:forward_euler_loss} and \autoref{eq:second_order_loss}.

\subsection{Sampling Process for  Lagrangian Likelihood Learning}\label{sec:sampling}
In the above formulation, we note that the expectation
$\mathbb{E}_{x_1 \sim p_1^\theta}[\cdot]$ is required. Although $p_1^\theta(x) = p_0(x)e^{D^\theta_{0\to 1}(x)}$,  the model $D^\theta_{0\to 1}(x_0)$ is not directly
amenable to sampling. To address this issue, we adopt importance sampling,
i.e.,
\begin{equation}
    \begin{aligned}
        \mathbb{E}_{x_1 \sim p_1^\theta}[\cdots]
        =
        \mathbb{E}_{x_1 \sim p_{\mathrm{ref}}}
        \left[
            \cdots            \frac{p_0(x_1)e^{D^\theta_{0\to 1}(x_1)}}{p_{\mathrm{ref}}(x_1)}
        \right].
    \end{aligned}
\end{equation}
Here, $p_{\mathrm{ref}}$ denotes a reference distribution from which we can
efficiently sample and whose probability density function is available.
Ideally, the support of $p_{\mathrm{ref}}$ should closely match that of
$p_1^\theta$.

In practice, we construct $p_{\mathrm{ref}}$ on a per-batch basis as a multimodal Gaussian mixture whose components are centered at the mini-batch samples $\{x_1^{(i)}\}_{i=1}^{B}$ drawn from the dataset. Specifically, we define
\begin{equation}
    \begin{aligned}
p_{\mathrm{ref}}(x)=\frac{1}{B}\sum_{i=1}^{B}\mathcal{N}\left(x;x_1^{(i)},\sigma^2 I\right).
    \end{aligned}
\end{equation}
where $B$ denotes the batch size and $\sigma$ controls the spread of each
Gaussian component.

\section{Open Problems in Feature-Space Optimization}\label{sec:feature_space}
For high-dimensional natural data, such as pixel-space generation, the Drifting Model may suffer from the curse of dimensionality. This issue arises because the training process is driven by the kernel $k_1^\epsilon(y,x)$, whose evaluation depends on pairwise distances. In high dimensions, standard distance metrics lose discriminative power. Distances concentrate and become nearly constant, which is a well-known manifestation of the dimensionality curse. As a result, the kernel weights become less informative and less stable.  A natural remedy is to evaluate the kernel in a feature space where neighborhood structure is more meaningful, for example, a lower-dimensional space, or a feature space with stronger anisotropy and effectively lower intrinsic dimension. In such spaces, pairwise distances better reflect semantic similarity and yield more stable kernel weights. Motivated by this, \cite{deng2026generative} proposes to compute the Drifting Model directly in feature space, leading to the following formulation:
\begin{equation}\label{eq:drifting_model_feature_space}
    \begin{aligned}
    \mathcal{L}(\theta) = \mathbb{E}_{\epsilon\sim p_0}[\|\phi_j(f^\theta(\epsilon))- sg(\phi_j(f^\theta(\epsilon)) +[V_{\phi_j(p_1),\phi_j(q_\theta)}(\phi_j(f^\theta(\epsilon))]))\|^2]  
    \end{aligned}
\end{equation}
where $\phi_j(\cdot)$ denotes the $j$-th feature mapping, and $\phi_j(q_\theta)$ and $\phi_j(p_1)$ denote the pushforward distribution of $q_\theta$ and $p_1$ under $\phi_j$, namely $\phi_j(f^\theta(x))$ with $x \sim p_0$ and $\phi_j(x)$ with $x \sim p_1$, respectively. Here, $p_1$ denotes the data distribution. The operator $V_{\phi_j(p),\phi_j(q_\theta)}(\phi_j(f^\theta(\epsilon)))$ is computed in the feature space $\mathcal{Z} = \phi_j(\mathcal{X})$. The collection $\{\phi_j\}_{j=1}^m$ is extracted from a pretrained, frozen encoder, and \cite{deng2026generative} uses the encoder of MAE \cite{he2022masked}. In their implementation, $m$ is on the order of several thousand.

We now interpret the Drifting Model in feature space and provide an intuitive explanation for why multiple feature maps are needed. Following the same decomposition of $\psi_{0\to 1}$ as before, we apply it in the feature space induced by $\phi_j(\cdot)$ and linearize $\phi_j$ and approximate $\psi^\theta_{1-\Delta t\to 1}(x^\theta_{1-\Delta t})$ using the trapezoidal rule as before:
\begin{equation}\label{eq:feature_discrete}
    \begin{aligned}
    \phi_j(\psi^\theta_{0\to 1}(x_0)) &= \phi_j(\psi^\theta_{0\to 1-\Delta t}(x_0) + \psi^\theta_{1-\Delta t\to 1}(x^\theta_{1-\Delta t})) \\
     &\approx \phi_j(\psi^\theta_{0\to 1-\Delta t}(x_0)) + \frac{\Delta t}{2}[J_{\phi_j}(x^\theta_{1-\Delta t})u_{1-\Delta t}(x^\theta_{1-\Delta t})+ J_{\phi_j}(x^\theta_{1})u_{1^-}(x^\theta_{1})]
    \end{aligned}
\end{equation}
where $J_{\phi_j}(x)=\frac{\partial \phi_j(x)}{\partial x}$ denotes the Jacobian of $\phi_j$. Substituting the expressions of $u_{1-\Delta t}$ and $u_{1^-}$ into the above and taking $\Delta t\to 0$ yields the limiting loss
\begin{equation}\label{eq:pricinple_feature_loss}
    \begin{aligned}
        \mathcal{L}(\theta) &= \mathbb{E}_{x_0\sim p_0}\Big[
\|\phi_j(\psi^{\theta}_{0\to 1}(x_0)) -sg\big(
\phi_j(\psi^{\theta}_{0\to 1}(x_0))+ \frac{1}{2}J_{\phi_j}(\psi^{\theta}_{0\to 1}(x_0))(\frac{\mathbb{E}_{x_1\sim p_1}[(x_1-\psi^{\theta}_{0\to 1}(x_0))k^\epsilon_{1}(x_1,\psi^{\theta}_{0\to 1}(x_0))]}{\mathbb{E}_{x_1\sim p_1}[k^\epsilon_{1}(x_1,\psi^{\theta}_{0\to 1}(x_0))]}\\
&+\frac{\mathbb{E}_{x^\theta_{1} = \psi_{0\to 1}^\theta(x'_0),x'_0\sim p_0}[(\psi_{0 \to 1}^\theta(x_0)-x_{1}^\theta)k^\epsilon_{1}(\psi_{0 \to 1}^\theta(x_0),x_{1}^\theta)]}{\mathbb{E}_{x^\theta_{1} = \psi_{0\to 1}^\theta(x'_0),x'_0\sim p_0}[k^\epsilon_{1}(\psi_{0 \to 1}^\theta(x_0),x_{1}^\theta)]})\big)\|^2 \Big]
    \end{aligned}
\end{equation}
This expression characterizes the principled optimization direction for feature space. However, $\mathcal{L}(\theta)$ still relies on kernels computed in the original space $\mathcal{X}$, which limits its ability to reduce the effective dimensionality via $\phi_j$.

\begin{challenge}[Challenge of Feature-Space Optimization]
Although $\mathcal{L}(\theta)$ in \autoref{eq:pricinple_feature_loss} provides a principled feature-space training objective, its kernel weights are still defined over $\mathcal{X}$ rather than $\phi_j(\mathcal{X})$. As a result, the neighborhood structure used for weighting remains high-dimensional, and the expected mitigation of the curse of dimensionality from operating in feature space is not realized.
\end{challenge}

In the derivations below, for convenience, we adopt the random-variable notation from Flow Matching.  From the proofs in \autoref{sec:proof_close_form1} and \autoref{sec:proof_close_form2}, we observe that the kernel weight $k_t^\epsilon$ in the closed-form optimal velocity comes from applying Flow Matching on $p_1$ with the linear conditional path $X_t=tX_1+(1-t)X_0$, which implies $\frac{X_t-tX_1}{1-t}=X_0\sim p_0$, where $X_1\sim p_1$, $X_0\sim p_0$, and $X_t\sim p_t$ are the random variables induced by the conditional probability path in Flow Matching. Therefore, if we want the kernel to be computed on $\phi_j(\mathcal{X})$, we should perform Flow Matching directly in that feature space.

Concretely, let $Z_1=\phi_j(X_1)\in\mathcal{Z}=\phi_j(\mathcal{X})$ and denote its induced data distribution by $p_1^{z}(z)=\int_{\mathcal{X}} \mathbf{1}(z=\phi_j(x))p_1(x)dx$.  We consider a Gaussian reference distribution $Z_0\sim p_0^{z}$ on $\mathcal{Z}$, and apply Flow Matching between $p_0^{z}$ and $p_1^{z}$ with the conditional path $Z_t=tZ_1+(1-t)Z_0$. Then, for any $z=\phi_j(x)$, the optimal velocity in $\mathcal{Z}$ is
\begin{equation}\label{eq:feature_space_close_form}
\begin{aligned}
u_{t}^{z,*}(z)
&=
\frac{\mathbb{E}_{z_1\sim p_1^{z}}\!\Big[\frac{z_1-z}{1-t}k_t(z_1,z)\Big]}
{\mathbb{E}_{z_1\sim p_1^{z}}\!\big[k_t(z_1,z)\big]}.\\
u_{1^-}^{z,*}(z)
&=
\frac{\mathbb{E}_{z_t\sim p_t^{z}}\!\Big[\frac{z-z_t}{1-t}k_t(z,z_t)\Big]}
{\mathbb{E}_{z_t\sim p_t^{z}}\!\big[k_t(z,z_t)\big]}.
\end{aligned}
\end{equation}
Here, $u_t^{z,*}$ and $u_{1^-}^{z,*}$ involves kernel evaluation in feature space, which matches our goal. Next, we relate $u_t^{z,*}$ to \autoref{eq:feature_discrete} and show how it can be incorporated into the estimator in \autoref{eq:feature_discrete} to obtain a feature-space Drifting Model.
\begin{theorem}[Feature-Data Space Optimal Velocity Relation]\label{thm:feature_data_optimal}
Let $X_1\sim p_1$ and define $Z_1=\phi_j(X_1)$. Denote by $u_t^{x,*}(x)$ the optimal Flow Matching velocity in data space associated with the conditional path $X_t=tX_1+(1-t)X_0$, and by $u_t^{z,*}(z)$ the optimal velocity in feature space associated with $Z_t=tZ_1+(1-t)Z_0$, where $X_0$ and $Z_0$ are Gaussian reference variables on $\mathcal{X}$ and $\mathcal{Z}$, respectively.  We define $p_{x|z}(x|z)$ as the conditional distribution induced by the inverse of the  deterministic map $z=\phi_j(x)$.  Then, for $z=\phi_j(x)$, the two velocities satisfy the following approximate relations:
\begin{equation}
\begin{aligned}
u_t^{z,*}(z) &\approx J_{\phi_j}(x)\frac{\mathbb{E}[X_1| Z_t=\phi_j(x)]-x}{1-t},\\
u_{1^-}^{z,*}(z) &\approx J_{\phi_j}(x)\frac{x-\mathbb{E}[X'_t|Z_1=\phi_j(x)]}{1-t},
\end{aligned}
\end{equation}
where $J_{\phi_j}(x)=\frac{\partial \phi_j(x)}{\partial x}$ is the Jacobian of $\phi_j$, and $X'_t$ is an auxiliary $\mathcal X$-valued random variable satisfying $\phi_j(X'_t)=Z_t$ almost surely.  The approximation $\approx$ above follows from the first-order Taylor expansion
$\phi_j(y)-\phi_j(x)=J_{\phi_j}(x)(y-x)+O(\|y-x\|^2)$. Only in the special case where $\phi_j(X_t)=Z_t$ holds (for instance, when $\phi_j$ is linear and the feature-space path $Z_t$ is induced by pushing forward the data-space path $X_t$), the optimal feature-space velocity can be written as 
\begin{equation}\label{eq:feature_space_close_form}
    \begin{aligned}
 u_t^{z,*}(z) &\approx \mathbb{E}_{x\sim p_{x|z}}[ J_{\phi_j}(x)u_t^{x,*}(x)]     \\
 u_{1^-}^{z,*}(z) &\approx \mathbb{E}_{x\sim p_{x|z}} [J_{\phi_j}(x)u_{1^-}^{x,*}(x)]
    \end{aligned}
\end{equation} 
see \autoref{sec:proof_feature_data_optimal} for proof.
 \end{theorem}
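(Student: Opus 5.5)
The plan is to start from the conditional-expectation form of the optimal feature-space velocity given in \autoref{prop:close_form1}, now applied in $\mathcal{Z}$ with the path $Z_t=tZ_1+(1-t)Z_0$. This gives
\[
u_t^{z,*}(z)=\frac{\mathbb{E}[Z_1\mid Z_t=z]-z}{1-t}.
\]
For the endpoint velocity we use the representation from \autoref{thm:close_form2}:
\[
u_{1^-}^{z,*}(z)=\frac{z-\mathbb{E}[Z_t\mid Z_1=z]}{1-t}.
\]
Both identities hold exactly, since they are just the earlier theorems instantiated on $(p_0^z,p_1^z)$. The remaining work is to rewrite numerators of the form $\mathbb{E}[\phi_j(\cdot)\mid\cdot]-\phi_j(x)$ in data-space terms.

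For the first relation, set $z=\phi_j(x)$ and substitute $Z_1=\phi_j(X_1)$:
\[
\mathbb{E}[Z_1\mid Z_t=z]-z=\mathbb{E}\big[\phi_j(X_1)-\phi_j(x)\,\big|\,Z_t=\phi_j(x)\big].
\]
Inside the conditional expectation, apply the first-order Taylor expansion $\phi_j(X_1)-\phi_j(x)=J_{\phi_j}(x)(X_1-x)+O(\|X_1-x\|^2)$. The Jacobian $J_{\phi_j}(x)$ is deterministic given the conditioning value, so it can be pulled out by linearity. This yields $J_{\phi_j}(x)\big(\mathbb{E}[X_1\mid Z_t=\phi_j(x)]-x\big)/(1-t)$, with the remainder dropped under $\approx$.

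The endpoint relation follows the same pattern. Introduce $X'_t$ with $\phi_j(X'_t)=Z_t$ almost surely; this can be done by selecting a measurable preimage, or by taking $X'_t$ distributed according to $p_{x|z}$. Then
\[
z-\mathbb{E}[Z_t\mid Z_1=z]=\mathbb{E}\big[\phi_j(x)-\phi_j(X'_t)\,\big|\,Z_1=\phi_j(x)\big]\approx J_{\phi_j}(x)\big(x-\mathbb{E}[X'_t\mid Z_1=\phi_j(x)]\big).
\]

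For the special case $\phi_j(X_t)=Z_t$, the event $\{Z_t=z\}$ equals $\{X_t\in\phi_j^{-1}(z)\}$. By the tower property we can condition first on $X_t$ and then average over $X_t\sim p_{x|z}$:
\[
\mathbb{E}[X_1\mid Z_t=z]=\mathbb{E}_{x'\sim p_{x|z}}\big[\mathbb{E}[X_1\mid X_t=x']\big].
\]
Apply the Taylor expansion around each preimage point $x'$ instead of the fixed $x$. We get $\phi_j(X_1)-\phi_j(x')\approx J_{\phi_j}(x')(X_1-x')$, and $\phi_j(x')=z$ is the same for all preimages. Therefore
\[
u_t^{z,*}(z)\approx\mathbb{E}_{x'\sim p_{x|z}}\Big[J_{\phi_j}(x')\,\frac{\mathbb{E}[X_1\mid X_t=x']-x'}{1-t}\Big]=\mathbb{E}_{x'\sim p_{x|z}}\big[J_{\phi_j}(x')\,u_t^{x,*}(x')\big],
\]
where the last step uses \autoref{prop:close_form1} in data space. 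The endpoint statement is handled identically using \autoref{thm:close_form2}, now with $X'_t=X_t$ and the conditioning event $\{Z_1=z\}$ disintegrated over $p_{x|z}$.

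The main obstacle is justifying the Taylor step. The remainder is $O(\mathbb{E}[\|X_1-x\|^2\mid\cdot])$, which is small only when the conditional law concentrates near $x$. That holds for $t$ near $1$ (or small $\epsilon$), which is exactly the regime of the short terminal map. I would state the result as formal up to this second-order term and note that it is exact when $\phi_j$ is linear.

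A secondary subtlety is the disintegration over $p_{x|z}$ when $\phi_j$ is non-injective. Here I would assume regularity conditions sufficient to make conditioning on the fiber $\phi_j^{-1}(z)$ well defined, for example $\phi_j$ a submersion so that the coarea formula applies.
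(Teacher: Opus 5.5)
Your proposal is correct and follows essentially the same route as the paper's proof. Both start from $u_t^{z,*}(z)=(\mathbb{E}[Z_1\mid Z_t=z]-z)/(1-t)$ and $u_{1^-}^{z,*}(z)=(z-\mathbb{E}[Z_t\mid Z_1=z])/(1-t)$. Both then rewrite the feature variables as $\phi_j(X_1)$ or $\phi_j(X'_t)$; you do this by direct substitution, while the paper uses an explicit tower-property computation. Both linearize $\phi_j$ about $x$, and in the special case $\phi_j(X_t)=Z_t$ both disintegrate over $p_{x|z}$ and linearize about each preimage point.

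One caveat concerns your closing remark. When $\phi_j$ is not injective, the law of $X_1$ given $Z_t=\phi_j(x)$ concentrates, as $t\to 1$, on the whole fiber $\phi_j^{-1}(\phi_j(x))$ rather than near $x$. So the remainder in the general relation need not become small in that regime. It does become small in the special case, where you expand about $X_t$ or $X_1$ themselves.
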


Note that, in general, we do {not} have
$u_t^{z,*}(z) = \mathbb{E}_{x\sim p_{x|z}}[ J_{\phi_j}(x)u_t^{x,*}(x)]$ and $u_{1^-}^{z,*}(z) = \mathbb{E}_{x\sim p_{x|z}} [J_{\phi_j}(x)u_{1^-}^{x,*}(x)]$.  The discrepancy comes from the nonlinearity of $\phi_j$ (e.g., the encoder of MAE model).  Nevertheless, deriving the feature-space Drifting objective in \autoref{eq:drifting_model_feature_space} requires precisely the approximation
$u_t^{z,*}(z)\approx \mathbb{E}_{x\sim p_{x|z}}[ J_{\phi_j}(x)u_t^{x,*}(x)]$ and $u_{1^-}^{z,*}(z) \approx \mathbb{E}_{x\sim p_{x|z}} [J_{\phi_j}(x)u_{1^-}^{x,*}(x)]$.
Under this assumption, taking the conditional expectation $\mathbb{E}_{x\sim p_{x| z}}$ on both sides of \autoref{eq:feature_discrete},  substituting \autoref{eq:feature_space_close_form} and substituting the closed-form expressions of $u_t^{z,*}(z)$ and $u_{1^-}^{z,*}(z)$, and letting $\Delta t\to 0$ yields
\begin{equation}
    \begin{aligned}
        \mathcal{L}(\theta) &= \mathbb{E}_{x_0\sim p_0}\mathbb{E}_{x^\theta \sim p_{x|z}(x^\theta|\phi_j(\psi^{\theta}_{0\to 1}(x_0)))}\Big[
\|\phi_j(x^\theta) -sg\big(
\phi_j(x^\theta)+ \frac{1}{2}(\frac{\mathbb{E}_{x_1\sim p_1}[(\phi_j(x_1)-\phi_j(x^\theta))k^\epsilon_{1}(\phi_j(x_1),\phi_j(x^\theta))]}{\mathbb{E}_{x_1\sim p_1}[k^\epsilon_{1}(\phi_j(x_1),\phi_j(x^\theta))]}\\
&+\frac{\mathbb{E}_{x^\theta_{1} = \psi_{0\to 1}^\theta(x'_0),x'_0\sim p_0}[(\phi_j(x^\theta)-\phi_j(x_{1}^\theta))k^\epsilon_{1}(\phi_j(x^\theta),\phi_j(x_{1}^\theta))]}{\mathbb{E}_{x^\theta_{1} = \psi_{0\to 1}^\theta(x'_0),x'_0\sim p_0}[k^\epsilon_{1}(\phi_j(x^\theta),\phi_j(x_{1}^\theta))]})\big)\|^2 \Big]
    \end{aligned}
\end{equation}
which recovers the feature-space Drifting Model loss in \autoref{eq:drifting_model_feature_space}.

The above derivation suggests the following two takeaways:
\begin{enumerate}
    \item \textbf{Dimensionality Considerations in Feature-Space Drifting.}
    Drifting in high-dimensional spaces $\mathcal{X}$ faces the challenge of the curse of dimensionality, which motivates the model to operate in a feature space $\mathcal{Z}$. However, the direct objective defined in the feature space (see \autoref{eq:pricinple_feature_loss}) still evaluates kernels in the original space $\mathcal{X}$ and, as a result, may not fully alleviate the associated high-dimensional effects.

    %\item \textbf{Feature-space kernels rely on an incorrect assumption on $\phi_j$.}
    \item \textbf{Linearity Assumption for Feature-Space Velocity Approximation.}
    Evaluating kernels in feature space requires introducing Flow Matching directly in the feature domain. To make the corresponding feature-space closed-form velocity (\autoref{eq:feature_space_close_form}) compatible with the velocity term in \autoref{eq:feature_discrete} (e.g., $J_{\phi_j}(x^\theta_{1-\Delta t})u_{1-\Delta t}(x^\theta_{1-\Delta t})$), the derivation typically adopts a linearity assumption on $\phi_j$, leading to a simplified relationship between feature-space and data-space dynamics (e.g., $u_t^{z,*}(z)\approx \mathbb{E}_{x\sim p_{x|z}}[J_{\phi_j}(x)u_t^{x,*}(x)]$ in \autoref{eq:feature_space_close_form})). In practice, however, feature mappings $\phi_j$ are generally nonlinear (e.g., MAE encoders), and \cite{deng2026generative} mitigates the impact of this approximation by enlarging the number of feature representations considered.
\end{enumerate}

%Consequently, the optimization direction induced by feature-space Drifting Model is only an approximation to the data-space objective, which provides an intuitive explanation for why many features (often thousands) are needed in practice: multiple feature maps jointly form a more faithful surrogate direction for optimizing $\psi_{0\to 1}^\theta$. Our experiments in \autoref{sec:result_image_generation} further support this view, showing that using only a small number of features leads to poor optimization even when the features are information-preserving.

The reliance on a linearity assumption for $\phi_j$ also provides an intuitive explanation for why many features (often thousands) are used in practice \cite{deng2026generative}: multiple feature maps jointly form a faithful surrogate direction for optimizing $\psi_{0\to 1}^\theta$. Our experiments in \autoref{sec:result_image_generation} further support this view, showing that using only a small number of features leads to weaker optimization performance even when the features are information-preserving.

This observation also leaves an open question:
\begin{question}
How can we better design $\phi_j$, or modify the feature-space objective, so that feature-space optimization more faithfully matches the original-space objective?
\end{question}

\section{Experiment}
We conduct experiments in two settings. (1) 2D examples for likelihood learning: we validate our likelihood learning framework against analytically available ground truth. (2) Image generation: we implement the method on CelebA-HQ dataset, study the effects of learning rate and batch size, and verify the key claims of feature-space optimization. Implementation-wise, our derivation differs from Drifting Model primarily in the kernel choice; we further evaluate the flow-map-induced Gaussian kernel on CelebA-HQ and demonstrate its practicality.

\subsection{2D Examples for Likelihood Learning}

We consider three 2D point datasets, Spiral, Checkerboard, and Two Moons (\autoref{fig:2D_example_1}), each with 50K samples. We learn the generative Long-Short Flow Map using \autoref{eq:second_order_loss} with an 8-layer MLP of width 128; the resulting samples are shown in \autoref{fig:2D_example_2}.  To validate our likelihood learning formulation (\autoref{eq:likelihood_loss}), we use the same MLP architecture and compare two training views, Eulerian and Lagrangian. In the Eulerian view, we set $p_{\text{sample}}$ to a uniform distribution over the domain to learn the global density. In the Lagrangian view, we sample $p_{\text{sample}}=\psi^\theta_{t\to r}(x_0)$,$x_0\sim p_0$, where $\psi^\theta_{t\to r}$ is the learned flow map, focusing density learning around the data manifold. The Eulerian and Lagrangian results are reported in \autoref{fig:2D_example_3} and \autoref{fig:2D_example_4}. Overall, our Long-Short Flow-Map likelihood recovers the data distribution well.  All runs use a batch size of 4096 and a learning rate of $10^{-4}$. We train for 600K, 50K, and 150K steps on Spiral, Checkerboard, and Two Moons, respectively, and set $\epsilon$ to 0.5, 0.2, and 0.5 accordingly.

\subsection{Image Generation}\label{sec:result_image_generation}

\begin{figure}[t]
    \centering
    \includegraphics[width=\textwidth]{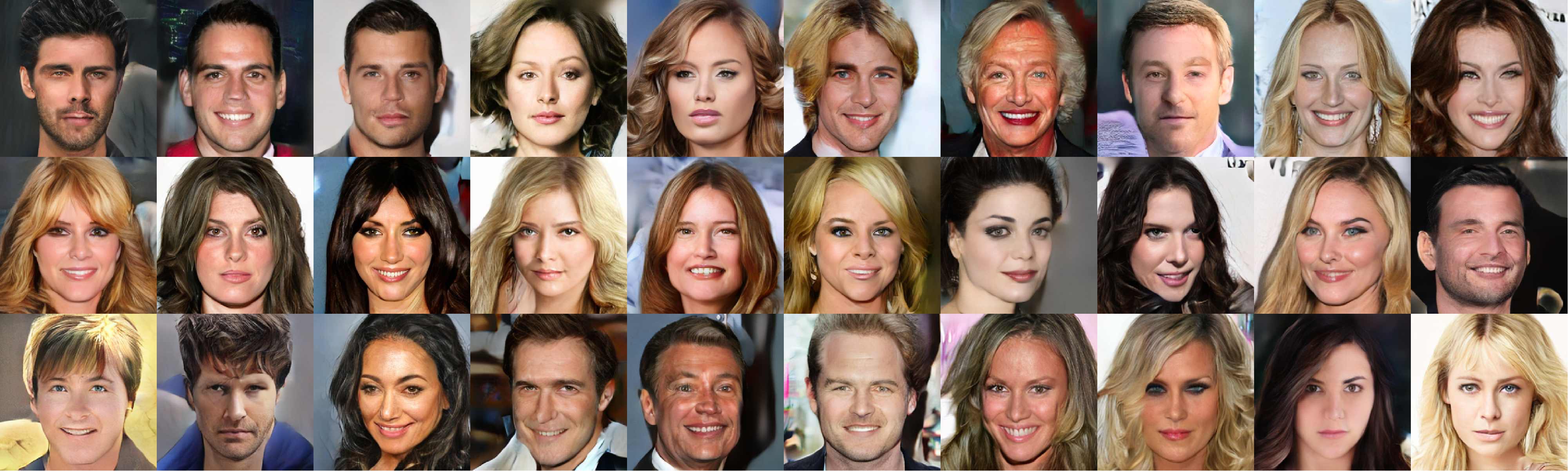}
    \caption{Unconditional latent generation results on CelebA-HQ. We train for 100K steps using feature-space optimization with a pretrained MAE as the feature extractor and achieve an FID of 14.71. For comparison, the corresponding MeanFlow model reaches an FID of 12.4 after 400K training steps. Full results are provided in \autoref{tab:bs_steps_fid}.}
    \label{fig:celeba_results}
    \vspace{-1.5ex}
\end{figure}

We evaluate the Long-Short Flow-Map method/Drifting Model on unconditional image generation using CelebA-HQ \cite{karras2017progressive}, which contains 30,000 high-quality face images. Following \cite{deng2026generative}, we perform latent space generation with a DiT-B/2 backbone \cite{peebles2023scalable} and a pretrained VAE from Stable Diffusion~\cite{rombach2022high}, and optimize the objective in feature space. Concretely, we extract latent features using a pretrained MAE encoder \cite{he2022masked}. Detailed architectures, hyperparameters, and training settings are provided in \autoref{sec:training}. We first sweep the learning rate and batch size as in \autoref{sec:image_generation_result} to study their impact on sample quality, and the results are shown in \autoref{fig:image_generation1}, \ref{fig:image_generation2} and \ref{fig:image_generation3}.

We further test the claim in \autoref{sec:feature_space} that feature-space optimization is an imperfect proxy for the original objective, and therefore benefits from using many features to better recover the true update direction. To conduct this ablation study, we compare the default setting in \cite{deng2026generative}, which uses thousands of features, with a reduced setting that uses only four features obtained by taking the full feature maps from the four-stage MAE encoder outputs at different resolutions and treating each full feature map as a single feature. We also include a baseline without feature-space optimization, as shown in \autoref{fig:image_generation6} and \autoref{fig:drifting_ablation}.  Without feature-space optimization, training fails completely. Using only four features enables learning, but introduces visible distortions compared with using thousands of features, even though these four features collectively retain the full latent information.

Finally, our derivation of the Long-Short Flow-Map method differs from the original Drifting Model primarily in the kernel form. The Drifting Model adopts a Laplacian kernel $\exp(-\|y-x\|/\epsilon)$ for $k_t(\cdot,\cdot)$, whereas our flow-map derivation with the initial Gaussian noise yields a Gaussian kernel, $\exp(-\|y-x\|^2/\epsilon)$. As shown in \autoref{fig:image_generation4} and \autoref{fig:image_generation5}, the Gaussian kernel remains effective on CelebA-HQ and achieves results comparable to, and in some settings even better than, the original kernel. Moreover, \citet{deng2026generative} reports ImageNet experiments with batch sizes of at least 4096, which raises concerns about the practical applicability of the Drifting Model. In contrast, we find that with appropriate learning-rate tuning, the long-short flow-map method/Drifting Model can still produce strong samples on CelebA-HQ with a standard batch size of 64 (see \autoref{fig:bs64_25k}), suggesting that the method may be more practical than previously implied.

\section{Related Work}\label{sec:related_work}
\paragraph{Flow Map Methods.}
Diffusion and flow matching models typically require many sampling steps, which motivates recent interest in one step or few step generation. Consistency Models \cite{song2023consistency, song2023improved, geng2025consistency, lu2025simplifying} are an early representative that explicitly learns long range mappings by enforcing consistency across time. The flow map viewpoint was later formalized in \cite{boffi2025flow}, where the core objective is to learn time indexed transport maps that satisfy trajectory consistency, also known as the semigroup property. Building on this perspective, a line of methods such as Shortcut Models \cite{frans2025shortcut} and SplitMeanFlow \cite{guo2025splitmeanflow} progressively distill an instantaneous velocity field into long horizon mappings. In parallel, MeanFlow \cite{geng2025mean}, $\alpha-$Flow \cite{zhang2025alphaflow}, and subsequent variants \cite{li2025functional, li2026trajectory, geng2025improved, lu2026one} learn via a continuity equation induced by the mean velocity defined by the flow map, providing an alternative route to train long distance transport without explicitly integrating many small steps. 
The long-short flow-map perspective was also proposed for the simulation of flow dynamics in physical space (e.g., \cite{yin2021characteristic,yin2023characteristic,nave2010gradient,zhou2024eulerian,deng2023fluid}), where adaptive flow maps are used to reduce numerical dissipation in transport and fluid simulation. Prior work in this literature suggests that a single-scale flow map can be suboptimal, and that combining a long-range map for accumulated transport with a short-range map for local updates can better balance long-term fidelity and local accuracy (e.g., see \cite{zhou2024eulerian}).

\paragraph{Closed-Form}  Different from prior work that mainly optimizes the Flow Matching regression objective, recent studies analyze the induced velocity field via the analytic closed-form solution \cite{bertrand2025closed}. Since evaluating such velocities requires expensive kernel-weighted aggregation over the dataset, this line of work is used mostly for theory, including stability analysis \cite{sprague2024stable}, memorization and generalization \cite{gao2024flow, bertrand2025closed, li2026kinetic}, and inference-time dynamics and acceleration \cite{wan2024elucidating}.

\paragraph{Distribution Moment Matching.}
Moment matching aligns model and data distributions by minimizing kernel discrepancies under the Maximum Mean Discrepancy (MMD) criterion \cite{li2015generative, dziugaite2015training}. In GANs, this idea has been widely adopted, for example in MMD GANs and related variants \cite{li2017mmd}, which match distribution moments using Gaussian or Laplace kernels. Closely related, Coulomb GAN \cite{unterthiner2017coulomb} models learning via a Coulomb-style potential field and employs a Plummer-type kernel, which has been shown to lead to the optimal solution. Recently, moment matching has regained attention in one-step generation: inductive moment matching extends MMD-style objectives to one- or few-step diffusion by constructing positive and negative samples along the sampling path \cite{zhou2025inductive}. Drifting Model \cite{deng2026generative} is also kernel-based, but instead formulates distribution alignment through a drifting field that measures the discrepancy between the current and target distributions and explicitly governs sample updates during training. Compared to standard MMD objectives, it provides a more flexible framework via a normalized kernel formulation, feature-space optimization, and extensions that support classifier-free guidance.

%\paragraph{Long-Short Flow Map}
%Our Long-Short Flow Map for one-step generation is inspired by fluid simulation methods in computational physics \cite{yin2021characteristic,yin2023characteristic,nave2010gradient} and computer graphics \cite{zhou2024eulerian,deng2023fluid}. In transport and incompressible flow simulation, a common idea is to explicitly maintain a long-range flow map (or characteristic map) from the initial time to the current time, and represent transported quantities as the composition of this map with the initial field, which helps reduce numerical dissipation compared with standard additive Eulerian updates. In computational physics, this line of work is often called characteristic mapping (CM), while in computer graphics it is commonly studied as flow-map-based simulation. Prior work has maintained and evolved flow maps using grid  \cite{sun2025leapfrog}, particle \cite{li2024lagrangian}, hybrid grid-particle \cite{zhou2024eulerian, li2024particle}, and neural representations \cite{deng2023fluid}.  These methods also suggest that a single-scale flow map is often suboptimal in practice. A more effective strategy is to decompose the full flow map into a long-range map that accumulates historical transport and a short-range map that captures local evolution near the current time  \cite{yin2021characteristic}. The complete flow map is then obtained by composition, combining long-term geometric fidelity with accurate local updates.

\section{Conclusion}

In this paper, we present the long-short flow-map perspective of the Drifting Model. This interpretation helps explain several of its key design choices. Building on this understanding, we derive a likelihood-based learning formulation and provide an in-depth discussion of feature-space optimization.

Several questions remain open. First, conditional generation with classifier-free guidance would benefit from a more principled formulation of CFG within this framework, together with systematic empirical evaluation. Second, the design of the feature-space encoder and the associated objective is not yet fully understood; in particular, it remains unclear how to construct feature representations that yield faithful optimization directions with fewer or more structured features. Looking ahead, this long-short flow-map perspective may extend beyond image generation to settings such as function generation and generation on manifolds. Finally, our analysis reveals an explicit connection between the noise distribution and the induced kernel form, suggesting a potential link between kernel design and recent noise optimization strategies \cite{huang2024blue}. We expect that further exploration of this connection could motivate new methodological developments.

\bibliographystyle{unsrtnat}
\bibliography{example_paper,3D_generation}  

\newpage
\appendix
\onecolumn
\section{Missing Proofs and Theorems}
\subsection{Proof of \autoref{prop:close_form1}}\label{sec:proof_close_form1}
\paragraph{\autoref{prop:close_form1} Closed-Form Solution of Flow Matching} \autoref{eq:FlowMatchingLoss} admits an optimal solution
\begin{equation}
u_t^*(x)
= \mathbb{E}_{x_1\sim p_{1|t}(x_1|x)}\!\big[u_t(x| x_1)\big]
= \frac{\mathbb{E}_{x_1\sim p_{1|t}(x_1|x)}[x_1]-x}{1-t},
\end{equation}
which further admits a closed-form expression
\begin{equation}
    \begin{aligned}
        u_t^*(x) &= \frac{\mathbb{E}_{x_1\sim p_1}[\frac{x_1-x}{1-t}k_{t}(x_1,x)]}{\mathbb{E}_{x_1\sim p_1}[k_{t}(x_1,x)]}, \qquad
        k_{t}(y,x) &= e^{-\frac{\|ty-x\|^2}{2(1-t)^2}}
    \end{aligned}
\end{equation}
\begin{proof}
    Because the objective in \autoref{eq:FlowMatchingLoss} is an $l_2$ regression, the minimizer is given by the conditional expectation namely, $u_t^*(x)=\mathbb{E}[u_t(x| x_1)| x] = \mathbb{E}_{x_1\sim p_{1|t}(x_1|x)}[u_t(x|x_1)]$.  Because $u_t(x|x_1) = \frac{x_1-x}{1-t}$, we have $u_t^*(x) = \mathbb{E}_{x_1\sim p_{1|t}(x_1|x)}[\frac{x_1-x}{1-t}] = \frac{\mathbb{E}_{x_1\sim p_{1|t}(x_1|x)}[x_1]-x}{1-t}$.

    Expanding the expectation $\mathbb{E}_{x_1\sim p_{1|t}(x_1|x)}[\frac{x_1-x}{1-t}]$, we obtain
    \begin{equation}
        \begin{aligned}
            \mathbb{E}_{x_1\sim p_{1|t}(x_1|x)}[\frac{x_1-x}{1-t}] &= \int \frac{x_1-x}{1-t} p_{1|t}(x_1|x) dx_1\\
            &= \int \frac{x_1-x}{1-t} \frac{p_{t|1}(x|x_1)p_1(x_1)}{\int p_{t|1}(x|x_1)p_1(x_1)dx} dx_1\\
            &= \frac{\int \frac{x_1-x}{1-t} p_{t|1}(x|x_1)p_1(x_1) dx_1}{\int p_{t|1}(x|x_1)p_1(x_1)dx}\\
        \end{aligned}
    \end{equation}
    We next compute $\int p_{t|1}(x| x_1)p_1(x_1) dx_1$ and $\int \frac{x_1-x}{1-t} p_{t|1}(x| x_1)p_1(x_1) dx_1$.  For $\int \frac{x_1-x}{1-t} p_{t|1}(x| x_1)p_1(x_1) dx_1$, we have
    \begin{equation}
        \begin{aligned}
            \int \frac{x_1-x}{1-t} p_{t|1}(x|x_1)p_1(x_1)dx &= \int \int \frac{x_1-x}{1-t} p_{t|1,0}(x|x_1,x_0)p_0(x_0)dx_0p_1(x_1)dx\\
            &= \int \int \frac{x_1-x}{1-t} \delta _{x = tx_1 + (1-t)x_0} p_0(x_0)dx_0p_1(x_1)dx\\
            &= \int \int \frac{x_1-x}{1-t} \delta _{x = tx_1 + (1-t)x_0} p_0(\frac{x-tx_1}{1-t})dx_0p_1(x_1)dx\\
            &= \int \frac{x_1-x}{1-t} p_0(\frac{x-tx_1}{1-t})p_1(x_1)dx\\
            &= \int \frac{x_1-x}{1-t} \frac{1}{(2\pi)^\frac{d}{2}}e^{-\|\frac{x-tx_1}{1-t}\|^2}p_1(x_1)dx\\
            &=  \frac{1}{(2\pi)^\frac{d}{2}}\mathbb{E}_{x_1\sim p_1}[\frac{x_1-x}{1-t} k_t(x_1,x)]
        \end{aligned}
    \end{equation}
    Similarly, we have $\int  p_{t|1}(x|x_1)p_1(x_1)dx = \frac{1}{(2\pi)^\frac{d}{2}}\mathbb{E}_{x_1\sim p_1}[k_t(x_1,x)]$.  Therefore, we obtain:
    \begin{equation}
        \begin{aligned}
            u_t^*(x) &=\frac{\mathbb{E}_{x_1\sim p_1}[\frac{x_1-x}{1-t}k_{t}(x_1,x)]}{\mathbb{E}_{x_1\sim p_1}[k_{t}(x_1,x)]}
        \end{aligned}
    \end{equation}
\end{proof}
\subsection{Non-Existence of Conditional Flow Maps}
\begin{theorem}[Non-Existence of Conditional Flow Maps \cite{li2026trajectory}]\label{thm:non_existence_conditioanl}
     There exists no conditional flow maps $\psi_{t\to r}(x | x_1)$ that simultaneously (1) is consistent with the conditional velocity $u(x|x_1)$ under \autoref{eq:evolv_flow_map}, and (2) satisfies the consistency relation $\psi_{t\to r}(x)=\mathbb{E}_{x_{1}\sim p_{1|t}(x_1|x)}[\psi_{t\to r}(x|x_1)]$ with marginal flow maps. As a result, a self-consistent conditional flow map does not exist. 
\end{theorem}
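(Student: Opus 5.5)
We argue by contradiction, taking a single concrete pair of endpoint distributions as the counterexample. The simplest choice is $p_0=\mathcal N(0,I)$ on $\mathbb R^d$ together with a two-point data distribution $p_1=\tfrac12\delta_{a}+\tfrac12\delta_{b}$ with $a\neq b$. If one insists on a density, a narrow Gaussian mixture works equally well.

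\textbf{Step 1: condition (1) forces the straight-line conditional map.} For fixed $x_1$, the conditional velocity $u_t(x|x_1)=\frac{x_1-x}{1-t}$ is an affine ODE, so it has a unique flow. Solving $\dot x=\frac{x_1-x}{1-t}$ from time $t$ to time $r$ gives
\[
\psi_{t\to r}(x|x_1)=x+\frac{r-t}{1-t}\,(x_1-x).
\]
So the conditional flow map is determined uniquely by the conditional velocity through \autoref{eq:evolv_flow_map}.

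\textbf{Step 2: condition (2) forces a specific marginal map.} Substituting Step 1 into condition (2) gives
\[
\psi_{t\to r}(x)=x+\frac{r-t}{1-t}\bigl(\mathbb E[X_1|X_t=x]-x\bigr)=x+(r-t)\,u^*_t(x),
\]
using \autoref{prop:close_form1}. Hence condition (2) says that the marginal flow map is a single forward-Euler step of the marginal velocity over the whole interval $[t,r]$.

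\textbf{Step 3: this map violates the semigroup property.} The true marginal flow map must satisfy \autoref{prop:trajector_consistency}, and its $r$-derivative must equal $u^*_r\circ\psi_{t\to r}$ by \autoref{eq:evolv_flow_map}. The map from Step 2 has constant $r$-derivative $u^*_t(x)$. Consistency would therefore require $u^*_r(x+(r-t)u^*_t(x))=u^*_t(x)$ for all $r$, which means that marginal trajectories are straight lines.

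In the two-point example this fails. Take $x$ on the perpendicular bisector of $a$ and $b$ at time $t=0$, where $u^*_0(x)=\tfrac{a+b}{2}-x$. Along this straight line the posterior stays balanced by symmetry, so $\mathbb E[X_1|X_r]$ remains $\tfrac{a+b}{2}$. The trajectory would then reach $\tfrac{a+b}{2}$ at $r=1$, which lies outside the support $\{a,b\}$ of $p_1$. This contradicts $(\psi_{0\to1})_\sharp p_0=p_1$, which the true flow map must satisfy.

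A cleaner way to finish is a measure argument. By symmetry the bisector hyperplane has positive Gaussian mass near the points mapped to $\tfrac{a+b}{2}$. More simply, $\psi_{0\to1}(x)=x+u_0^*(x)=\mathbb E[X_1|X_0=x]$. Since $X_0$ is independent of $X_1$, this equals $\mathbb E[X_1]=\tfrac{a+b}{2}$ for every $x$. Its pushforward is therefore the Dirac mass $\delta_{(a+b)/2}\neq p_1$, which is immediate.

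Because of this, I will actually use the pair $t=0$, $r=1$. Condition (2) then gives $\psi_{0\to1}(x)=\mathbb E[X_1]$, a constant map. Any marginal flow map satisfying \autoref{eq:evolv_flow_map} is a diffeomorphism and pushes $p_0$ to $p_1$, whereas a constant map does neither for any nondegenerate $p_1$.

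\textbf{Main obstacle.} The delicate points are making sure that condition (1) pins down the conditional map uniquely, which holds by uniqueness for the affine ODE, and that conditioning at time $t=0$ is legitimate. At $t=0$ the posterior $p_{1|0}$ equals $p_1$ because $X_0$ and $X_1$ are independent. If one objects to the endpoint $r=1$, where the conditional velocity is singular, the same argument works for $r<1$. For $t=0$ the map becomes $x\mapsto(1-r)x+r\,\mathbb E[X_1]$, which pushes $p_0$ to a Gaussian. The true $p_r$, however, is a non-Gaussian mixture, for example with two modes for $r$ close to $1$, so the two cannot agree.
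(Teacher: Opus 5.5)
Your final argument is correct and is essentially the paper's proof. At $t=0$ the posterior $p_{1|0}$ equals $p_1$, so condition (2) with the explicit conditional map forces $\psi_{0\to r}(x)=x+r(\mathbb{E}[X_1]-x)$, hence a constant $\psi_{0\to 1}$ whose pushforward is a Dirac mass; the paper differs only in getting this map by differentiating $\mathbb{E}_{x_1\sim p_{1|0}}[\psi_{0\to r}(x|x_1)]$ in $r$ rather than solving the conditional ODE in closed form. You should delete the bisector digression in Step 3, though: a hyperplane is $p_0$-null, so sending it to $\tfrac{a+b}{2}$ contradicts nothing (the true flow does exactly this when $\|a\|=\|b\|$, which your balanced-posterior claim also silently requires).
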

\begin{proof}
First, we denote the mappings $\psi_{t\to r}(x)$ obtained from (1) and (2) as $\psi^{(1)}_{t\to r}(x)$ and $\psi^{(2)}_{t\to r}(x)$, respectively. Specifically, $\psi^{(1)}_{t\to r}(x)=\psi_{0\to r}(\psi_{0\to t}^{-1}(x))$, and $\psi^{(2)}_{t\to r}(x)=\mathbb{E}_{x_1\sim p_t(x_1|x)}[\psi_{t\to r}(x|x_1)]$. It suffices to show that $\psi^{(1)}_{t\to r}(x)\neq \psi^{(2)}_{t\to r}(x)$. To this end, it is sufficient to prove that $\frac{d}{dt}\psi^{(1)}_{t\to r}(x)\neq \frac{d}{dt}\psi^{(2)}_{t\to r}(x)$ at $t=0$.  

\begin{equation}
    \begin{aligned}
        \frac{d}{dr}\psi^{(2)}_{t\to r}(x)&= \frac{d}{dr}\int_{x_1} \psi_{t\to r}(x|x_1)p_{1|t}(x_1|x)dx_1\\
        &=\int_{x_1} \frac{d}{dr}\psi_{t\to r}(x|x_1)p_{1|t}(x_1|x)dx_1\\
        &=\int_{x_1} u_{r}(\psi_{t\to r}(x|x_1)|x_1)\frac{p_1(x_1)p_{t|1}(x|x_1)}{p_t(x)}dx_1\\
        &=\int_{x_1} u_{r}(\psi_{t\to r}(x|x_1)|x_1)p_1(x_1)dx_1\\    
        &=\int_{x_1}(x_1-x)p_1(x_1)dx_1\\    
        &= \mathbb{E}_{x_1\sim p_{1}(x_1)}[x_1] - x
    \end{aligned}
\end{equation}
Consequently, if $\psi^{(1)}_{t\to r}(x) = \psi^{(2)}_{t\to r}(x)$ we must have $\frac{d}{dr}\psi_{0\to r}(x) = \mathbb{E}_{x_1\sim p_{1}(x_1)}[x_1] - x$, $\psi_{0\to r}(x) =(\mathbb{E}_{x_1\sim p_{1}(x_1)}[x_1]-x) r + x$, which implies $p_{1} = (\psi_{0\to 1})_\sharp p_0 = \delta_{\mathbb{E}_{x_1\sim p_{1}(x_1)}[x_1]}$, where $\delta$ denotes the Dirac distribution at a single point, which contradicts the actual setting $p_1 = p_{data}$.
\end{proof}
\subsection{Proof of \autoref{thm:close_form2}}\label{sec:proof_close_form2}
\paragraph{\autoref{thm:close_form2} Closed-Form Solution of the Endpoint Velocity} For Flow Matching, under mild smoothness assumptions on the underlying distributions, the optimal velocity satisfies $\lim_{t\to 1^-} u_t^*(x)=x$. Consequently, the left-limit endpoint velocity $u_{1^-}^*(x)\triangleq \lim_{t\to 1^-} u_t^*(x)$ admits the closed form for any $t\in [0,1)$
\begin{equation}
    \begin{aligned}
        u^*_{1^-}(x) 
        = \mathbb{E}_{x_t \sim p_{t|1}(x_t|x)}[ \frac{x-x_t}{1-t}] = \frac{\mathbb{E}_{x_t\sim p_t}[\frac{x-x_t}{1-t}k_t(x,x_t)]}{\mathbb{E}_{x_t\sim p_t}[k_t(x,x_t)]}
    \end{aligned}
\end{equation}
\begin{proof}
In the derivations below, for convenience, we adopt the random-variable notation from Flow Matching and use the conditional path $X_t = tX_1 + (1-t)X_0$, where $X_1\sim p_1$, $X_0\sim p_0$, and $X_t\sim p_t$ are the random variables induced by this conditional probability path. 
For sufficiently smooth distributions, the limit $\lim_{\epsilon\to 0^+}\mathbb{E}[X_0 | X_{1-\epsilon}=x] = \mathbb{E}[X_0 | X_1=x]$ exists. Therefore,
\begin{equation}
    \begin{aligned}
        \lim_{t\to 1^-} u_t(x) 
        &= \lim_{\epsilon\to 0^+}\frac{x- \mathbb{E}[X_0 | X_{1-\epsilon}=x]}{1-(1-\epsilon)}\\
        &= x-\mathbb{E}[X_0 | X_{1}=x] \\
        &\overset{\textcircled{1}}{=} x-\mathbb{E}[X_0] \\
        &= x,
    \end{aligned}
\end{equation}
where \textcircled{1} uses the independence of $X_0$ and $X_1$.  We next show that $\mathbb{E}[X_t| X_1=x]=tx$.
\begin{equation}
    \begin{aligned}
        \mathbb{E}[X_t|X_1=x] &= \mathbb{E}[tX_1 + (1-t)X_0|X_1=x] \\
        &= tx + (1-t)\mathbb{E}[X_0|X_1=x]\\
        &= tx + (1-t)\mathbb{E}[X_0]\\
        &= tx
    \end{aligned}
\end{equation}    
Consequently, the endpoint velocity admits the representation
\begin{equation}
    \begin{aligned}
        u_{1^-}(·x) &= \lim_{t\to 1} u_t(x) \\
        &= x\\
        &= \frac{x-\mathbb{E}[X_t|X_1=x]}{1-t}\\
        &= \mathbb{E}_{x_t \sim p_{t|1}(x_t|x)}[ \frac{x-x_t}{1-t}]
    \end{aligned}
\end{equation}
Similar to \autoref{sec:proof_close_form1}, we now derive a closed-form expression for $u_1(x)$:
\begin{equation}
    \begin{aligned}
        u_{1^-}(x) &= \mathbb{E}_{x_t \sim p_{t|1}(x_t|x)}[ \frac{x-x_t}{1-t}]\\
        &= \int \frac{x-x_t}{1-t} p_{t|1}(x_t|x) d x_t\\
        &= \int \frac{x-x_t}{1-t} \frac{p_{1|t}(x|x_t)p_t(x_t)}{\int _{x_t}p_{1|t}(x|x_t)p_t(x_t) dx_t} d x_t\\
        &=  \frac{\int \frac{x-x_t}{1-t} p_{1|t}(x|x_t)p_t(x_t)d x_t}{\int p_{1|t}(x|x_t)p_t(x_t) dx_t} 
    \end{aligned}
\end{equation}
Next, we compute $\int p_{1|t}(x|x_t)p_t(x_t) dx_t$ and $\int \frac{x-x_t}{1-t} p_{1|t}(x|x_t)p_t(x_t)d x_t$.  For $\int \frac{x-x_t}{1-t} p_{1|t}(x|x_t)p_t(x_t)d x_t$, we have
\begin{equation}
    \begin{aligned}
        \int\frac{x-x_t}{1-t}p_{1|t}(x|x_t)p_t(x_t) dx_t &=  \int_{x_t}\int_{x_0}\frac{x-x_t}{1-t}p_{1|t,0}(x|x_t,x_0)p_0(x_0)dx_0p_t(x_t) dx_t\\
        &= \int\frac{x-x_t}{1-t}\int\delta(x=\frac{x_t -(1-t)x_0}{t})\frac{1}{(2\pi)^\frac{d}{2}}e^{-\frac{1}{2}x_0^2}dx_0p_t(x_t) dx_t\\
        &= \int\frac{x-x_t}{1-t}\frac{1}{(2\pi)^\frac{d}{2}}e^{-\frac{1}{2}\|\frac{x_t-tx}{1-t}\|^2}p_t(x_t) dx_t
    \end{aligned}
\end{equation}
Similarly, we have $\int p(x|x_t)p(x_t) dx_t = \int \frac{1}{(2\pi)^\frac{d}{2}}e^{-\frac{1}{2}\|\frac{x_t-tx}{1-t}\|^2}p(x_t) dx_t$.  Therefore, we have
\begin{equation}
    \begin{aligned}
        u_{1^-}(x) = \frac{\mathbb{E}_{x_t\sim p_t}[\frac{x-x_t}{1-t}k_t(x,x_t)]}{\mathbb{E}_{x_t\sim p_t}[k_t(x,x_t)]}
    \end{aligned}
\end{equation}
\end{proof}

\subsection{Proof of \autoref{thm:close_form_divergence}}\label{sec:proof_close_form_divergence}
\paragraph{\autoref{thm:close_form_divergence} Closed-Form Solution for Velocity Divergence}
For the closed-form solution of the velocity field $u^*_t(x)$ in \autoref{eq:close_form1} and the endpoint velocity $u^*_{1^-}(x)$ in \autoref{eq:close_form2} with the mollified kernel $k_t^\epsilon$ in \autoref{eq:mollified_kernel}, their divergences can be also computed in closed form as
\begin{equation}
    \begin{aligned}
        \nabla \cdot u^*_t(x) &= \frac{1}{1-t}[\frac{2t}{2(1-t)^2+\epsilon}(\frac{\mathbb{E}_{x_1\sim p_1}[\|x_1\|^2k^\epsilon_{t}(x_1,x)]}{\mathbb{E}_{x_1\sim p_1}[k^\epsilon_{t}(x_1,x)]}\\
        &-\|\frac{\mathbb{E}_{x_1\sim p_1}[x_1k^\epsilon_{t}(x_1,x)]}{\mathbb{E}_{x_1\sim p_1}[k^\epsilon_{t}(x_1,x)]}\|^2)-d]\\
        \nabla\cdot u^*_{1^-}(x) &= \frac{1}{1-t}[d-\frac{2t^2}{2(1-t)^2+\epsilon}(\frac{\mathbb{E}_{x_t\sim p_t}[\|x_t\|^2k^\epsilon_{t}(x,x_t)]}{\mathbb{E}_{x_t\sim p_t}[k^\epsilon_{t}(x,x_t)]}\\
        &-\|\frac{\mathbb{E}_{x_t\sim p_t}[x_tk^\epsilon_{t}(x,x_t)]}{\mathbb{E}_{x_t\sim p_t}[k^\epsilon_{t}(x,x_t)]}\|^2)]
    \end{aligned}
\end{equation}
\begin{proof}
    First, we compute the partial derivative of $k^\epsilon_t(y,x)$:
    \begin{equation}
        \begin{aligned}
           \frac{\partial k^\epsilon_t(y,x)}{\partial y} =  -\frac{2t(ty-x)}{2(1-t)^2+\epsilon} e^{-\frac{\|ty-x\|^2}{2(1-t)^2+\epsilon}}\\
           \frac{\partial k^\epsilon_t(y,x)}{\partial x} =  \frac{2(ty-x)}{2(1-t)^2+\epsilon} e^{-\frac{\|ty-x\|^2}{2(1-t)^2+\epsilon}}\\
        \end{aligned}
    \end{equation}
    Then, we calculate $\nabla \cdot u^*_t(x)$ and $\nabla\cdot u^*_1(x)$, respectively.  For $\nabla \cdot u^*_t(x)$, we have
    \begin{equation}
        \begin{aligned}
            &\nabla\cdot u_t^*(x)\\
            &= \nabla\cdot \Big(\frac{\mathbb{E}_{x_1\sim p_1}[\frac{x_1-x}{1-t}k_t^\epsilon(x_1,x)]}{\mathbb{E}_{x_1\sim p_1}[k_t^\epsilon(x_1,x)]}\Big)\\
            &= \frac{\nabla\cdot \Big(\frac{\mathbb{E}_{x_1\sim p_1}[x_1k_t^\epsilon(x_1,x)]}{\mathbb{E}_{x_1\sim p_1}[k_t^\epsilon(x_1,x)]}\Big) -d}{1-t}\\
            &= \frac{1}{1-t}[\frac{\nabla\cdot(\mathbb{E}_{x_1\sim p_1}[x_1k_t^\epsilon(x_1,x)])\mathbb{E}_{x_1\sim p_1}[k_t^\epsilon(x_1,x)]-\mathbb{E}_{x_1\sim p_1}[x_1k_t^\epsilon(x_1,x)]\cdot\nabla(\mathbb{E}_{x_1\sim p_1}[k_t^\epsilon(x_1,x)])}{(\mathbb{E}_{x_1\sim p_1}[k_t^\epsilon(x_1,x)])^2} -d]\\
            &= \frac{1}{1-t}[\frac{\mathbb{E}_{x_1\sim p_1}[x_1 \cdot \nabla k_t^\epsilon(x_1,x)]\mathbb{E}_{x_1\sim p_1}[k_t^\epsilon(x_1,x)]-\mathbb{E}_{x_1\sim p_1}[x_1k_t^\epsilon(x_1,x)]\cdot\mathbb{E}_{x_1\sim p_1}[\nabla k_t^\epsilon(x_1,x)]}{(\mathbb{E}_{x_1\sim p_1}[k_t^\epsilon(x_1,x)])^2} -d]\\            
            &= \frac{1}{1-t}[\frac{\mathbb{E}_{x_1\sim p_1}[x_1\cdot\frac{2(tx_1-x)}{2(1-t)^2+\epsilon}k_t^\epsilon(x_1,x)]\mathbb{E}_{x_1\sim p_1}[k_t^\epsilon(x_1,x)]-\mathbb{E}_{x_1\sim p_1}[x_1k_t^\epsilon(x_1,x)]\cdot\mathbb{E}_{x_1\sim p_1}[\frac{2(tx_1-x)}{2(1-t)^2+\epsilon}k_t^\epsilon(x_1,x)]}{(\mathbb{E}_{x_1\sim p_1}[k_t^\epsilon(x_1,x)])^2} -d]\\        
            &= \frac{1}{1-t}[\frac{2t}{2(1-t)^2+\epsilon}\frac{\mathbb{E}_{x_1\sim p_1}[x_1\cdot x_1 k_t^\epsilon(x_1,x)]\mathbb{E}_{x_1\sim p_1}[k_t^\epsilon(x_1,x)]-\mathbb{E}_{x_1\sim p_1}[x_1k_t^\epsilon(x_1,x)]\cdot\mathbb{E}_{x_1\sim p_1}[x_1 k_t^\epsilon(x_1,x)]}{(\mathbb{E}_{x_1\sim p_1}[k_t^\epsilon(x_1,x)])^2} -d]\\    
            &= \frac{1}{1-t}[\frac{2t}{2(1-t)^2+\epsilon}(\frac{\mathbb{E}_{x_1\sim p_1}[\|x_1\|^2k^\epsilon_{t}(x_1,x)]}{\mathbb{E}_{x_1\sim p_1}[k^\epsilon_{t}(x_1,x)]}-\|\frac{\mathbb{E}_{x_1\sim p_1}[x_1k^\epsilon_{t}(x_1,x)]}{\mathbb{E}_{x_1\sim p_1}[k^\epsilon_{t}(x_1,x)]}\|^2)-d]
        \end{aligned}
    \end{equation}
    For $\nabla \cdot u^*_{1^-}(x)$, we have
    \begin{equation}
        \begin{aligned}
            &\nabla \cdot u^*_{1^-}(x)\\
            &= \nabla \cdot\Big(\frac{\mathbb{E}_{x_t\sim p_t}[\frac{x-x_t}{1-t}k_t^\epsilon(x,x_t)]}{\mathbb{E}_{x_t\sim p_t}[k_t^\epsilon(x,x_t)]}\Big)\\
            &=   \frac{1}{1-t}[d -  \frac{\nabla\cdot(\mathbb{E}_{x_t\sim p_t}[x_t k_t^\epsilon(x,x_t)])\mathbb{E}_{x_t\sim p_t}[k_t^\epsilon(x,x_t)]-\mathbb{E}_{x_t\sim p_t}[x_t k_t^\epsilon(x,x_t)]\cdot\nabla(\mathbb{E}_{x_t\sim p_t}[k_t^\epsilon(x,x_t)])}{(\mathbb{E}_{x_t\sim p_t}[k_t^\epsilon(x,x_t)])^2}]\\
            &=   \frac{1}{1-t}[d -  \frac{\mathbb{E}_{x_t\sim p_t}[x_t \cdot \nabla k_t^\epsilon(x,x_t)]\mathbb{E}_{x_t\sim p_t}[k_t^\epsilon(x,x_t)]-\mathbb{E}_{x_t\sim p_t}[x_t k_t^\epsilon(x,x_t)]\cdot(\mathbb{E}_{x_t\sim p_t}[\nabla k_t^\epsilon(x,x_t)])}{(\mathbb{E}_{x_t\sim p_t}[k_t^\epsilon(x,x_t)])^2}]\\
            &=   \frac{1}{1-t}[d -  \frac{\mathbb{E}_{x_t\sim p_t}[x_t \cdot \frac{2(tx-x_t)}{2(1-t)^2+\epsilon} k_t^\epsilon(x,x_t)]\mathbb{E}_{x_t\sim p_t}[k_t^\epsilon(x,x_t)]-\mathbb{E}_{x_t\sim p_t}[x_t k_t^\epsilon(x,x_t)]\cdot(\mathbb{E}_{x_t\sim p_t}[\frac{2(tx-x_t)}{2(1-t)^2+\epsilon} k_t^\epsilon(x,x_t)])}{(\mathbb{E}_{x_t\sim p_t}[k_t^\epsilon(x,x_t)])^2}]\\
            &=   \frac{1}{1-t}[d -  \frac{\mathbb{E}_{x_t\sim p_t}[x_t \cdot \frac{-2t(tx-x_t)}{2(1-t)^2+\epsilon} k_t^\epsilon(x,x_t)]\mathbb{E}_{x_t\sim p_t}[k_t^\epsilon(x,x_t)]-\mathbb{E}_{x_t\sim p_t}[x_t k_t^\epsilon(x,x_t)]\cdot(\mathbb{E}_{x_t\sim p_t}[\frac{-2t(tx-x_t)}{2(1-t)^2+\epsilon} k_t^\epsilon(x,x_t)])}{(\mathbb{E}_{x_t\sim p_t}[k_t^\epsilon(x,x_t)])^2}]\\
            &=\frac{1}{1-t}[d-\frac{2t^2}{2(1-t)^2+\epsilon}(\frac{\mathbb{E}_{x_t\sim p_t}[\|x_t\|^2k^\epsilon_{t}(x,x_t)]}{\mathbb{E}_{x_t\sim p_t}[k^\epsilon_{t}(x,x_t)]}-\|\frac{\mathbb{E}_{x_t\sim p_t}[x_tk^\epsilon_{t}(x,x_t)]}{\mathbb{E}_{x_t\sim p_t}[k^\epsilon_{t}(x,x_t)]}\|^2)]
        \end{aligned}
    \end{equation}
\end{proof}

\subsection{Proof of \autoref{thm:feature_data_optimal}}\label{sec:proof_feature_data_optimal}
\paragraph{\autoref{thm:feature_data_optimal} Feature-Data Space Optimal Velocity Relation} Let $X_1\sim p_1$ and define $Z_1=\phi_j(X_1)$. Denote by $u_t^{x,*}(x)$ the optimal Flow Matching velocity in data space associated with the conditional path $X_t=tX_1+(1-t)X_0$, and by $u_t^{z,*}(z)$ the optimal velocity in feature space associated with $Z_t=tZ_1+(1-t)Z_0$, where $X_0$ and $Z_0$ are Gaussian reference variables on $\mathcal{X}$ and $\mathcal{Z}$, respectively.  We define $p_{x|z}(x|z)$ as the conditional distribution induced by the inverse of the deterministic map $z=\phi_j(x)$.  Then, for $z=\phi_j(x)$, the two velocities satisfy the following approximate relations:
\begin{equation}
\begin{aligned}
u_t^{z,*}(z) &\approx J_{\phi_j}(x)\frac{\mathbb{E}[X_1| Z_t=\phi_j(x)]-x}{1-t},\\
u_{1^-}^{z,*}(z) &\approx J_{\phi_j}(x)\frac{x-\mathbb{E}[X'_t|Z_1=\phi_j(x)]}{1-t},
\end{aligned}
\end{equation}
where $J_{\phi_j}(x)=\frac{\partial \phi_j(x)}{\partial x}$ is the Jacobian of $\phi_j$, and $X'_t$ is an auxiliary $\mathcal X$-valued random variable satisfying $\phi_j(X'_t)=Z_t$ almost surely.  The approximation $\approx$ above follows from the first-order Taylor expansion
$\phi_j(y)-\phi_j(x)=J_{\phi_j}(x)(y-x)+O(\|y-x\|^2)$. Only in the special case where $\phi_j(X_t)=Z_t$ holds (for instance, when $\phi_j$ is linear and the feature-space path $Z_t$ is induced by pushing forward the data-space path $X_t$), the optimal feature-space velocity can be written as 
\begin{equation}
    \begin{aligned}
 u_t^{z,*}(z) &\approx \mathbb{E}_{x\sim p_{x|z}}[ J_{\phi_j}(x)u_t^{x,*}(x)]     \\
 u_{1^-}^{z,*}(z) &\approx \mathbb{E}_{x\sim p_{x|z}} [J_{\phi_j}(x)u_{1^-}^{x,*}(x)]
    \end{aligned}
\end{equation}

\begin{proof}
    Let $z=\phi_j(x)$. The optimal velocity $u_t^{z,*}(z)$ can be computed with first-order approximation:
\begin{equation}
    \begin{aligned}
        u_t^{z,*}(z)&= \frac{\mathbb{E}[Z_1  - z|Z_t = z]}{1-t}\\
        &\overset{\textcircled{1}}{=} \frac{\mathbb{E}[\phi_j(X_1)  - \phi_j(x)|Z_t = z]}{1-t}\\
        &\approx \frac{\mathbb{E}[J_{\phi_j}(x)(X_1-x)|Z_t = z]}{1-t}\\
        &= \frac{J_{\phi_j}(x)(\mathbb{E}[X_1|Z_t = z]-x)}{1-t}
    \end{aligned}
\end{equation}
where $\textcircled{1}$ holds because 
\begin{equation}
    \begin{aligned}
        \mathbb{E}[Z_1 |Z_t = z] &=\int z_1 p_{Z_1|Z_t}(z_1|z) dz_1 \\
        &= \int \int \phi(x_1) p_{x|z}(x_1|z_1)dx_1 p_{Z_1|Z_t}(z_1|z) dz_1\\
        &= \int \phi(x_1) \int p_{x|z}(x_1|z_1)p_{Z_1|Z_t}(z_1|z) dz_1 dx_1 \\
        &= \int \phi(x_1) \int p_{X_1|Z_1,Z_t}(x_1|z_1,z)p_{Z_1|Z_t}(z_1|z) dz_1 dx_1 \\
        &= \int \phi(x_1) p_{X_1|Z_t}(x_1|z) dx_1 \\      &=  \mathbb{E}[\phi_j(X_1) |Z_t = z]
    \end{aligned}
\end{equation}
When $\phi_j(X_t) = Z_t$ holds, we have
\begin{equation}
    \begin{aligned}
        u_t^{z,*}(z)&=\int \frac{z_1-z}{1-t} p_{Z_1|Z_t}(z_1|z) dz_1 \\
        &=\int \frac{z_1-z}{1-t} \int p_{Z_1|Z_t,X_t}(z_1|z, x)p_{x|z}(x|z)dx dz_1 \\
        &\overset{Z_t = \phi_j(X_t)}{=}\int\int  \frac{z_1-\phi_j(x)}{1-t} p_{Z_1|X_t}(z_1|x)p_{x|z}(x|z)dx dz_1 \\
        &=\int  \frac{\int z_1 p_{Z_1|X_t}(z_1|x) dz_1-\phi_j(x)}{1-t} p_{x|z}(x|z)dx\\
        &\overset{\textcircled{1}}{=}\int  \frac{\int \phi_j(x_1) p_{X_1|X_t}(x_1|x)dx_1-\phi_j(x)}{1-t} p_{x|z}(x|z)dx\\
        &=\int \int\frac{\phi_j(x_1)-\phi_j(x) }{1-t}p_{X_1|X_t}(x_1|x)dx_1 p_{x|z}(x|z)dx\\
        &\approx \int \int J_{\phi_j}(x) \frac{x_1-x}{1-t}p_{X_1|X_t}(x_1|x)dx_1 p_{x|z}(x|z)dx\\
        & =  \int J_{\phi_j}(x) u_t^{x,*}(x) p_{x|z}(x|z)dx\\
        & =  \mathbb{E}_{x\sim p_{x|z}}[ J_{\phi_j}(x)u_t^{x,*}(x)]
    \end{aligned}
\end{equation}
where $\textcircled{1}$ holds because 
\begin{equation}
    \begin{aligned}
        \int z_1 p_{Z_1|X_t}(z_1|x) dz_1&=\int \int\phi_j(x_1) p_{Z_1|X_t,X_1}(z_1|x,x_1)p_{X_1|X_t}(x_1|x)dx_1dz_1\\
        &=\int \int\phi_j(x_1) \delta_{z_1 =\phi_j(x_1)}p_{X_1|X_t}(x_1|x)dx_1dz_1\\
        &=\int \phi_j(x_1) \int\delta_{z_1 =\phi_j(x_1)}dz_1p_{X_1|X_t}(x_1|x)dx_1\\
        &=\int \phi_j(x_1) p_{X_1|X_t}(x_1|x)dx_1\\
    \end{aligned}
\end{equation}
For $u_1^{z,*}(z)$, when there exists an auxiliary random variable $X'_t$ such that $\phi_j(X'_t)=Z_t$ almost surely, we have
\begin{equation}
    \begin{aligned}
        u_1^{z,*}(z) &= \frac{\mathbb{E}[z-Z_t|Z_1=z]}{1-t}\\
        &\overset{\textcircled{1}}{=} \frac{\mathbb{E}[\phi_j(x)-\phi_j(X'_t)|Z_1=z]}{1-t}\\
        &\approx \frac{\mathbb{E}[J_{\phi_j}(x-X'_t)|Z_1=z]}{1-t}\\
        &= J_{\phi_j}(x)\frac{x-\mathbb{E}[X'_t|Z_1=\phi_j(x)]}{1-t}\\
    \end{aligned}
\end{equation}
where $\textcircled{1}$ holds because 
\begin{equation}
    \begin{aligned}
        \mathbb{E}[Z_t |Z_1 = z] &=\int z_t p_{Z_t|Z_1}(z_t|z) dz_t \\
        &= \int \int \phi_j(x'_t) p_{X_t'|Z_t}(x'_t|z_t)dx'_t p_{Z_t|Z_1}(z_t|z) dz_t\\
        &= \int \phi_j(x'_t)(\int  p_{X_t'|Z_t}(x'_t|z_t) p_{Z_t|Z_1}(z_t|z) dz_t)dx'_t\\
        &= \int \phi_j(x'_t)(\int  p_{X'_t|Z_t,Z_1}(x'_t|z_t,z) p_{Z_t|Z_1}(z_t|z) dz_t)dx'_t\\
        &= \int \phi_j(x'_t)p_{X_t'|Z_1}(x_t'|z)dx'_t\\
        &= \mathbb{E}[\phi_j(X'_t) |Z_1 = z] 
    \end{aligned}
\end{equation}
When $\phi_j(X_t) = Z_t$, $X_t$ can be used as the $X_t'$, namely $\phi_j(X_t) = Z_t$.
\begin{equation}
    \begin{aligned}
         u_1^{z,*}(z) &= \int \frac{z-z_t}{1-t}p_{Z_t|Z_1}(z_t|z)dz_t\\
         &= \int \frac{z-z_t}{1-t}\int p_{Z_t|Z_1,X_1}(z_t|z,x)p_{x|z}(x|z)dxdz_t\\
         &= \int \int \frac{\phi_j(x)-z_t}{1-t}p_{Z_t| X_1}(z_t|x)p_{x|z}(x|z)dxdz_t\\
         &= \int \frac{\phi_j(x)-\int z_t p_{Z_t| X_1}(z_t|x)dz_t}{1-t}p_{x|z}(x|z)dx\\
        &\overset{\textcircled{1}}{=} \int \frac{\phi_j(x)-\int \phi_j(x_t) p_{X_t|X_1}(x_t|x)dx_t}{1-t}p_{x|z}(x|z)dx\\
        & = \int \int \frac{\phi_j(x)-\phi_j(x_t)}{1-t}p_{x|z}(x|z)p_{X_t|X_1}(x_t|x)dx_t dx\\
        &\approx  \int \int J_{\phi_j}(x)\frac{x-x_t}{1-t}p_{x|z}(x|z)p_{X_t|X_1}(x_t|x)dx_t dx\\
        & = \int  J_{\phi_j}(x)\int \frac{x-x_t}{1-t}p_{X_t|X_1}(x_t|x)dx_t p_{x|z}(x|z) dx\\
        & = \int  J_{\phi_j}(x) u_1^{x,*}(x) p_{x|z}(x|z) dx\\
        & = \mathbb{E}_{x\sim p_{x|z}}[J_{\phi_j}(x) u_1^{x,*}(x)]
    \end{aligned}
\end{equation}
where $\textcircled{1}$ holds because 
\begin{equation}
    \begin{aligned}
        \int z_t p_{Z_t| X_1}(z_t|x)dz_t&=\int\int \phi_j(x_t) p_{Z_t| X_1, X_t}(z_t|x,x_t)p_{X_t|X_1}(x_t|x)dx_tdz_t\\
        &\overset{Z_t = \phi_j(X_t)}{=}\int\int \phi_j(x_t) \delta_{z_t = \phi_j(x_t)}p_{X_t|X_1}(x_t|x)dx_tdz_t\\
        &=\int \phi_j(x_t) \int\delta_{z_t = \phi_j(x_t)}dz_tp_{X_t|X_1}(x_t|x)dx_t\\
        &=\int \phi_j(x_t) p_{X_t|X_1}(x_t|x)dx_t\\
    \end{aligned}
\end{equation}
\end{proof}

\section{Training and Results}

\subsection{Training Details}\label{sec:training}
For the image generation task, we use DiT-B/2 as the backbone. Unless otherwise specified, we train with batch sizes in \{64, 256, 1024\}, for 100K steps, using EMA with decay 0.9999 and Adam with $(\beta_1,\beta_2)=(0.9,0.95)$. We consider learning rates in \{2.5$\times$10$^{-5}$, 5$\times$10$^{-5}$, 1$\times$10$^{-4}$, 2$\times$10$^{-4}$\} and weight decay 0, with a 5K-step learning-rate warmup. Unless otherwise specified, we use a constant learning-rate schedule. For the experiment in \autoref{fig:bs64_25k}, we focus on improving generation quality under the small batch size (64) setting by training for 400K steps with a staged schedule: 5$\times$10$^{-5}$ for steps 0-200K, 2.5$\times$10$^{-5}$ for steps 200K-300K, and 1.25$\times$10$^{-5}$ for steps 300K-400K. We use FP16 mixed precision for the main model computation, while computing the kernel $k_1^\epsilon$ in \autoref{eq:forward_euler_loss} and \autoref{eq:second_order_loss} in FP32.

We use the pretrained VAE \texttt{sd-vae-ft-mse} from \cite{rombach2022high}. For the MAE encoder, we follow the ResNet-based architecture in \cite{deng2026generative} with four resolutions $(32^2,16^2,8^2,4^2)$ and a base width of 256. We train the MAE for 250K steps with EMA decay 0.9995 and Adam with $(\beta_1,\beta_2)=(0.9,0.999)$, using a learning rate of 5$\times$10$^{-4}$ without warmup. We mask $2\times2$ patches by zeroing, where each patch is independently masked with probability 0.5. When extracting MAE features, we use all feature types (a-d) from Section A.5 of \cite{deng2026generative}, and combine computations under three temperatures (0.02, 0.05, and 0.2) within the same feature space. Moreover, instead of taking features every two blocks, we take the output features at each resolution, resulting in several thousand feature channels in total.

\subsection{Results}
\subsubsection{2D Examples for Likelihood Learning}
In \autoref{fig:2D_example}, we present the ground-truth distributions, generated samples, and the learned likelihoods under both the Eulerian and Lagrangian views for the Spiral, Checkerboard, and Two Moons datasets.

\begin{figure}[t]
  \centering
  \begin{subfigure}[t]{0.24\textwidth}
    \centering
    \includegraphics[width=\linewidth]{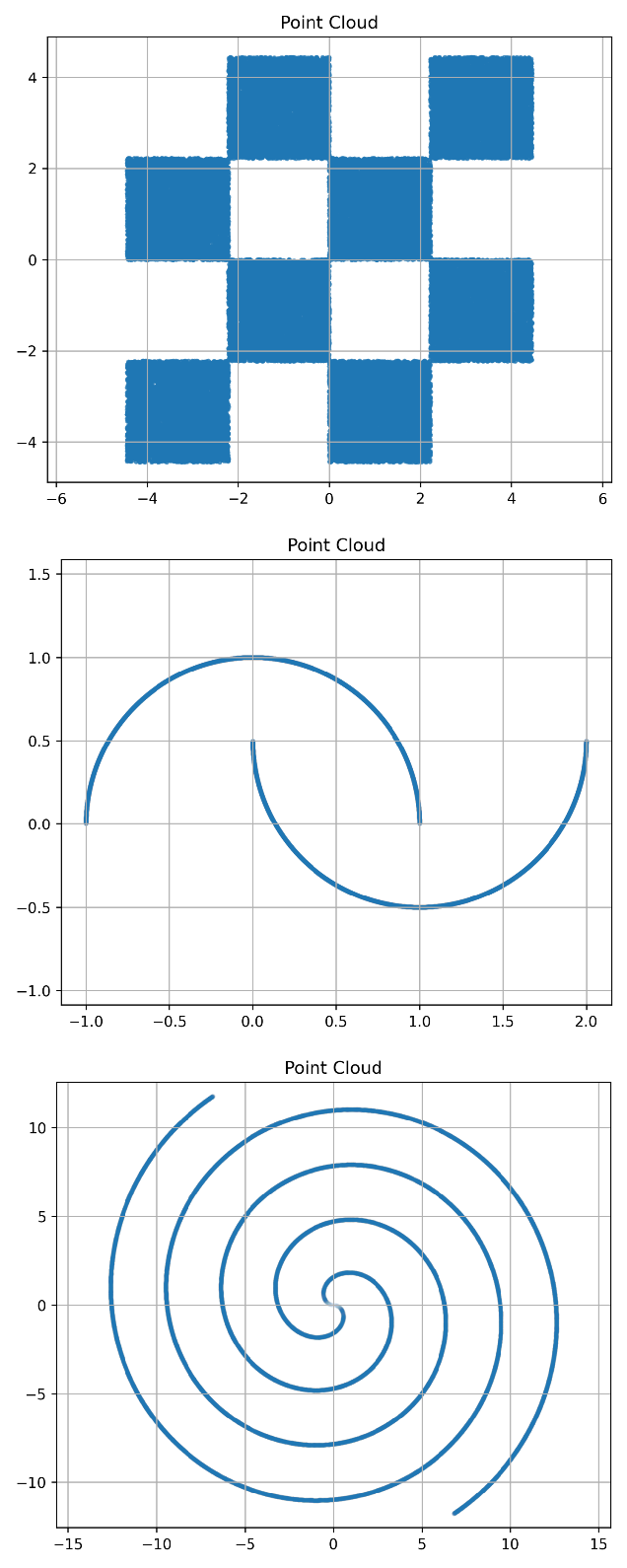}
    \caption{Ground-truth samples}\label{fig:2D_example_1}
  \end{subfigure}\hfill
  \begin{subfigure}[t]{0.24\textwidth}
    \centering
    \includegraphics[width=\linewidth]{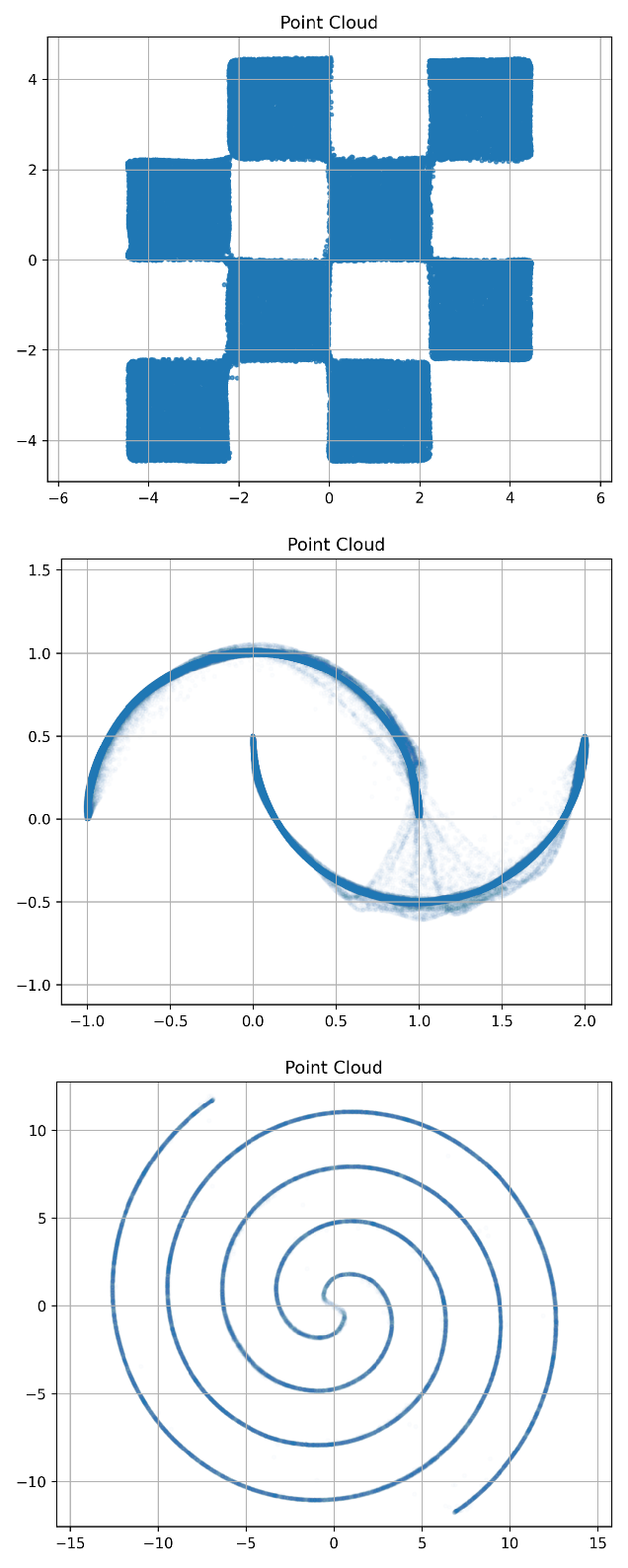}
    \caption{Generated samples}\label{fig:2D_example_2}
  \end{subfigure}\hfill
  \begin{subfigure}[t]{0.24\textwidth}
    \centering
    \includegraphics[width=\linewidth]{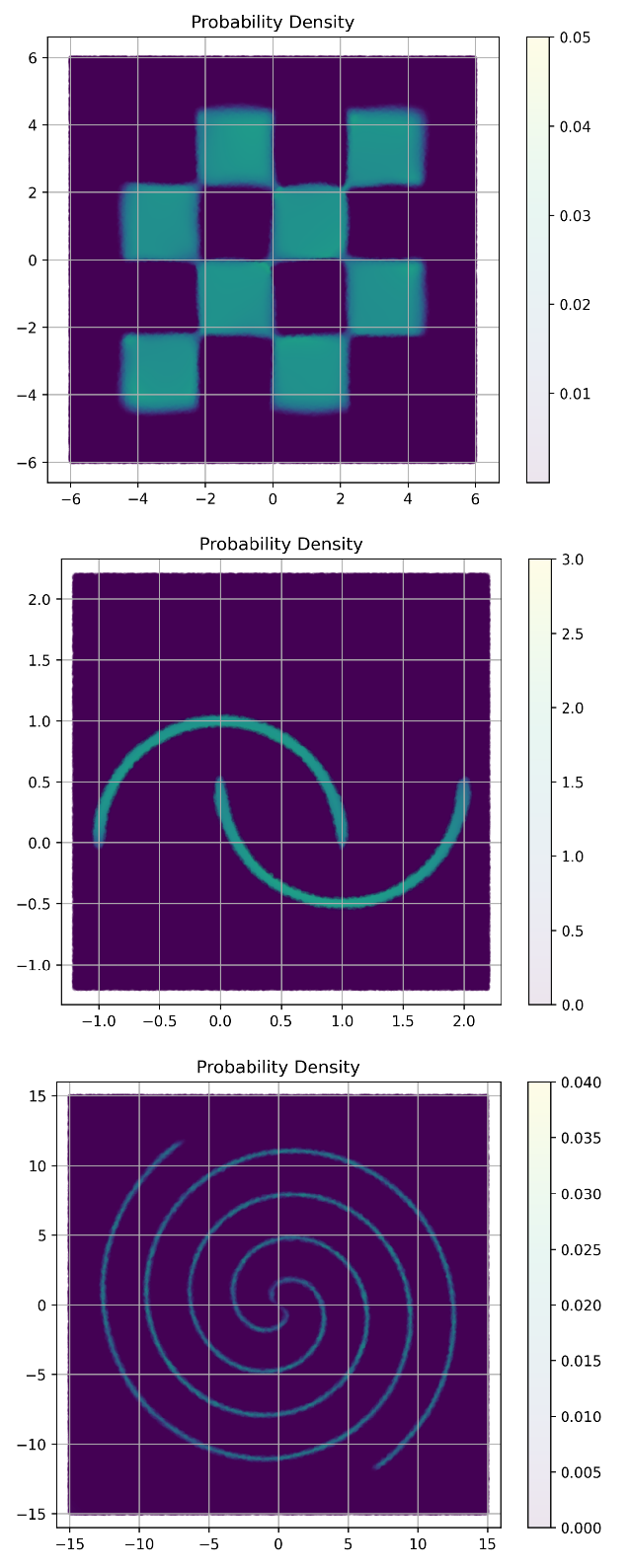}
    \caption{Eulerian likelihood}\label{fig:2D_example_3}
  \end{subfigure}\hfill
  \begin{subfigure}[t]{0.24\textwidth}
    \centering
    \includegraphics[width=\linewidth]{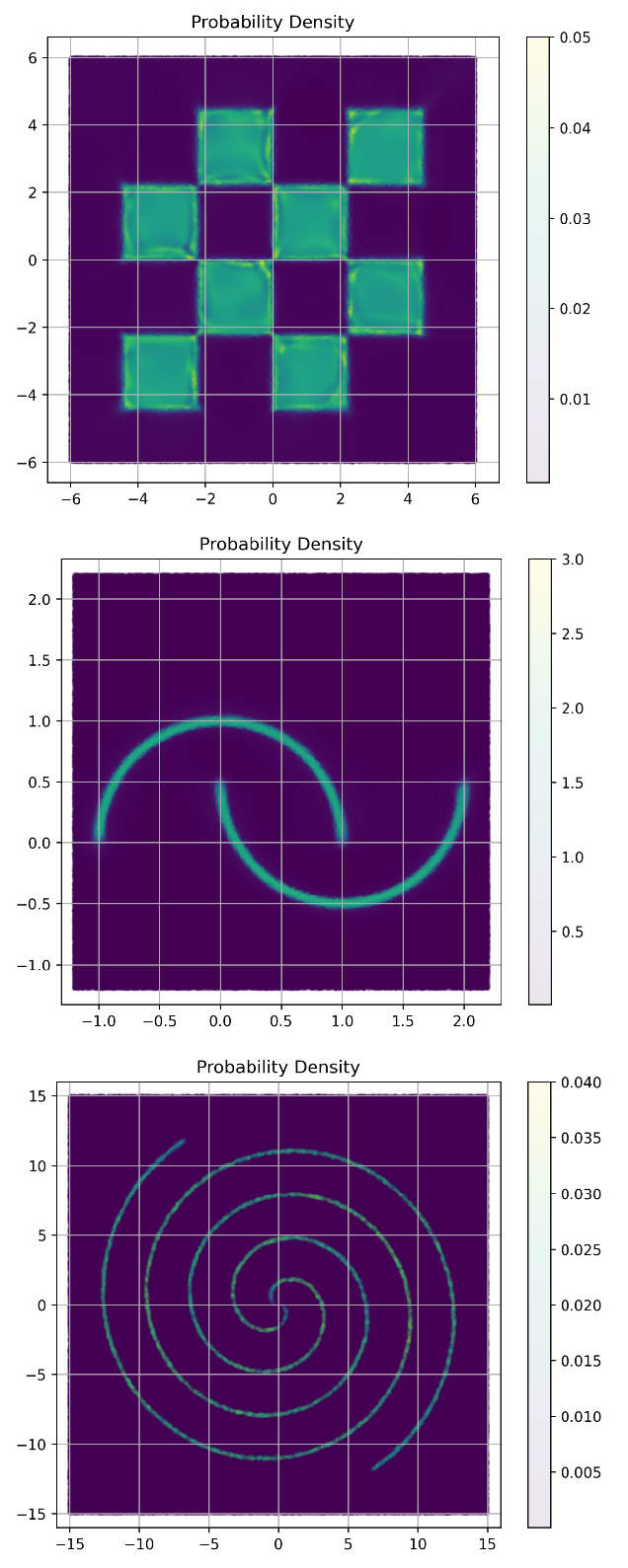}
    \caption{Lagrangian likelihood}\label{fig:2D_example_4}
  \end{subfigure}
  \caption{\textbf{2D Examples.} We show the ground-truth samples, generated samples, and learned likelihoods from the Eulerian and Lagrangian views on Spiral, Checkerboard, and Two Moons.}
  \label{fig:2D_example}
\end{figure}

\subsubsection{Image Generation}\label{sec:image_generation_result}
In \autoref{fig:image_generation1}, \ref{fig:image_generation2}, \ref{fig:image_generation3}, \ref{fig:image_generation4} and \ref{fig:image_generation5}, we evaluate the Long-Short Flow-Map method/the Drifting Model under various learning rates and batch sizes, reporting results at 100K training steps. We consider both the Laplacian kernel used in the original Drifting Model and the Gaussian kernel derived in our analysis, and compute FID \cite{heusel2017gans} by generating 50K images. In \autoref{fig:image_generation6} and \autoref{fig:drifting_ablation}, we conduct ablation studies on feature-space optimization. Specifically, \autoref{fig:image_generation6} reports the quantitative results, while \autoref{fig:drifting_ablation} provides a qualitative comparison of optimization in the default feature setting used in \cite{deng2026generative} (with thousands of feature channels), a reduced feature setting with only a few features, and the original space. In \autoref{fig:bs64_25k}, we show results from the Long-Short Flow-Map method/Drifting Model trained for 400K iterations with batch size 64 (with tuned learning rate), demonstrating that strong sample quality is achievable without extremely large batch sizes. Finally, in \autoref{tab:bs_steps_fid}, we compare our method against other one-step generative baselines.

\begin{figure}[t]
  \centering

  % ---- Row 1: Laplacian kernel ----
  \begin{subfigure}[t]{0.32\textwidth}
    \centering
    \includegraphics[width=\linewidth]{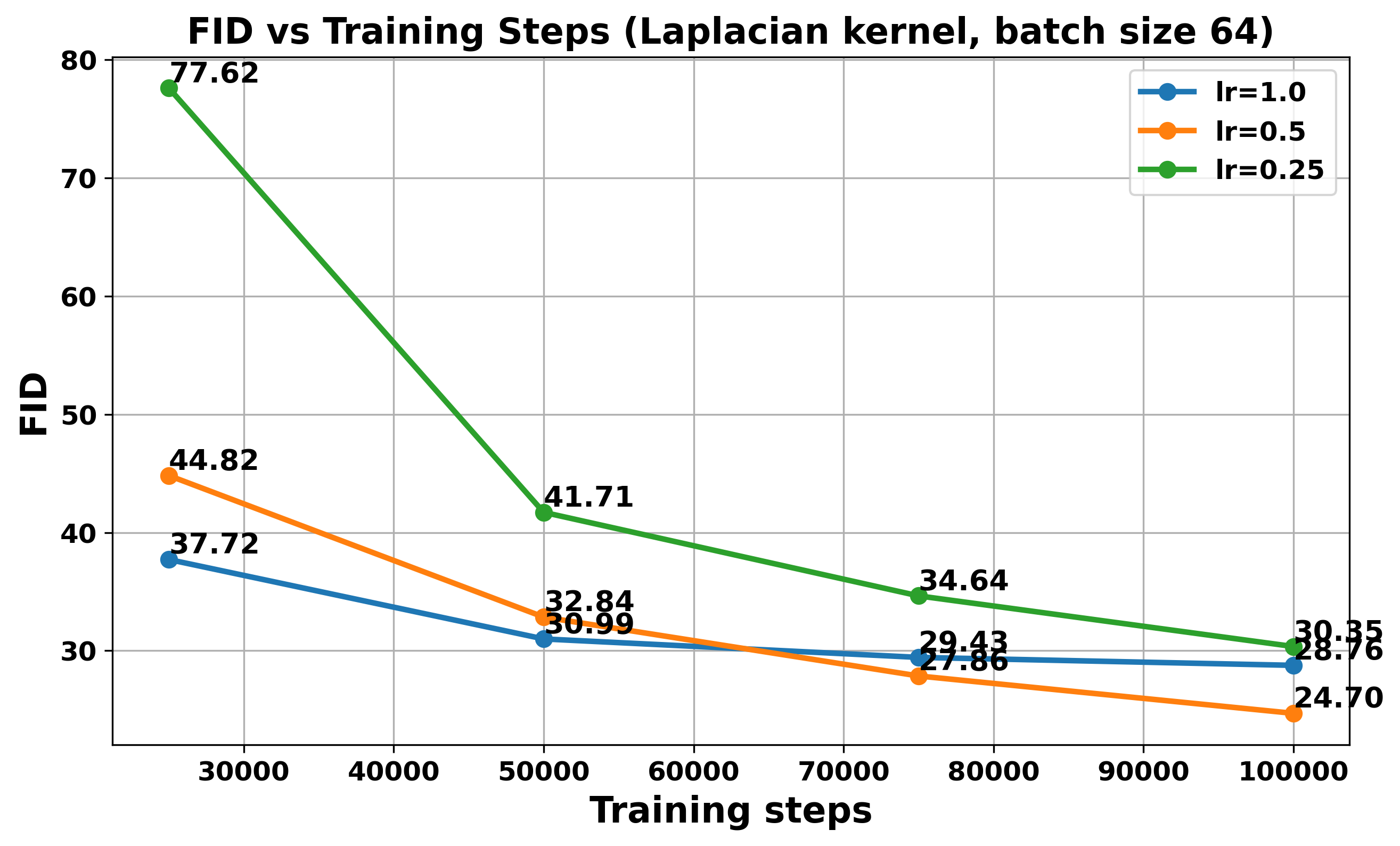}
    \caption{Laplacian kernel, $B{=}64$}\label{fig:image_generation1}
  \end{subfigure}\hfill
  \begin{subfigure}[t]{0.32\textwidth}
    \centering
    \includegraphics[width=\linewidth]{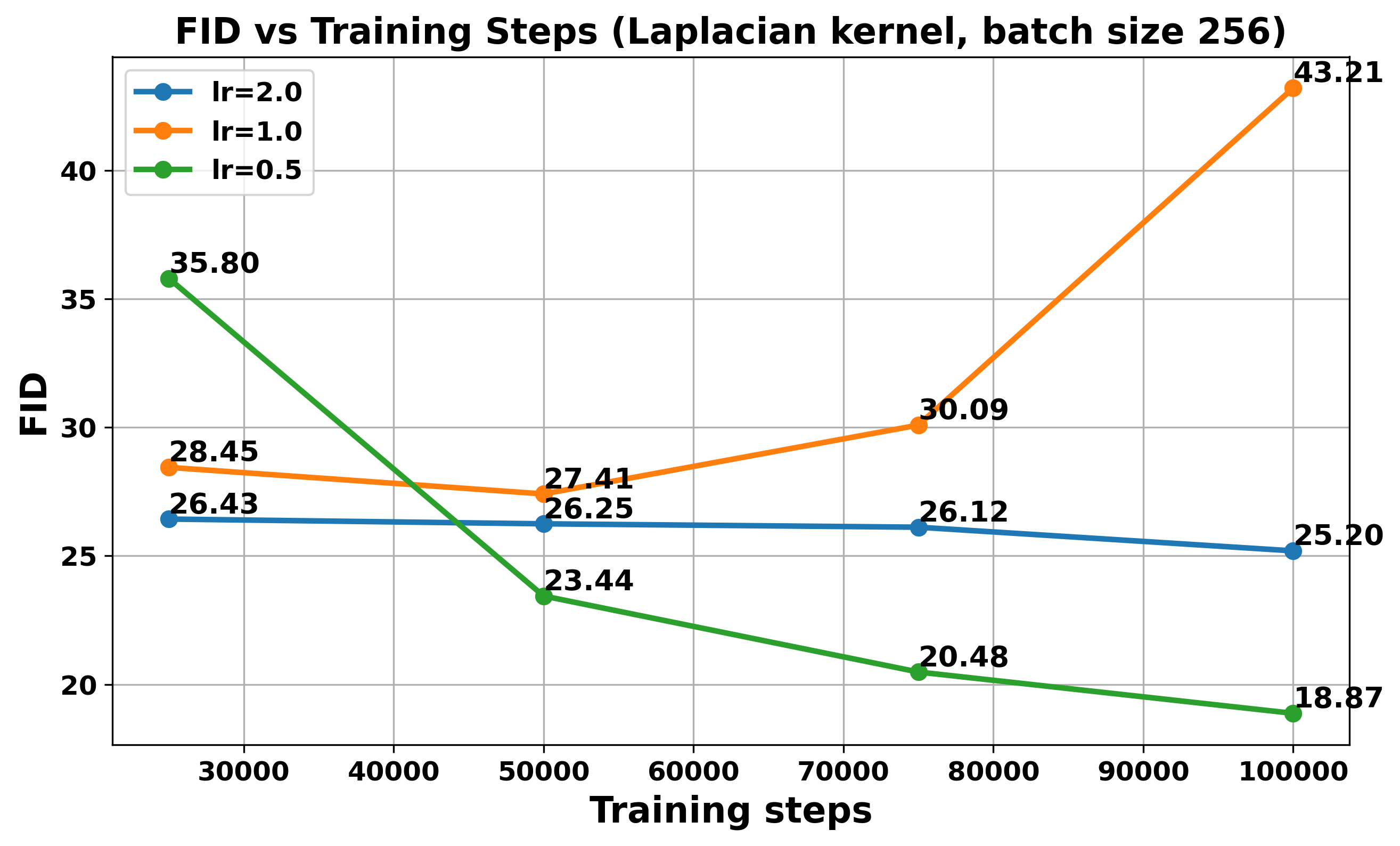}
    \caption{Laplacian kernel, $B{=}256$}\label{fig:image_generation2}
  \end{subfigure}\hfill
  \begin{subfigure}[t]{0.32\textwidth}
    \centering
    \includegraphics[width=\linewidth]{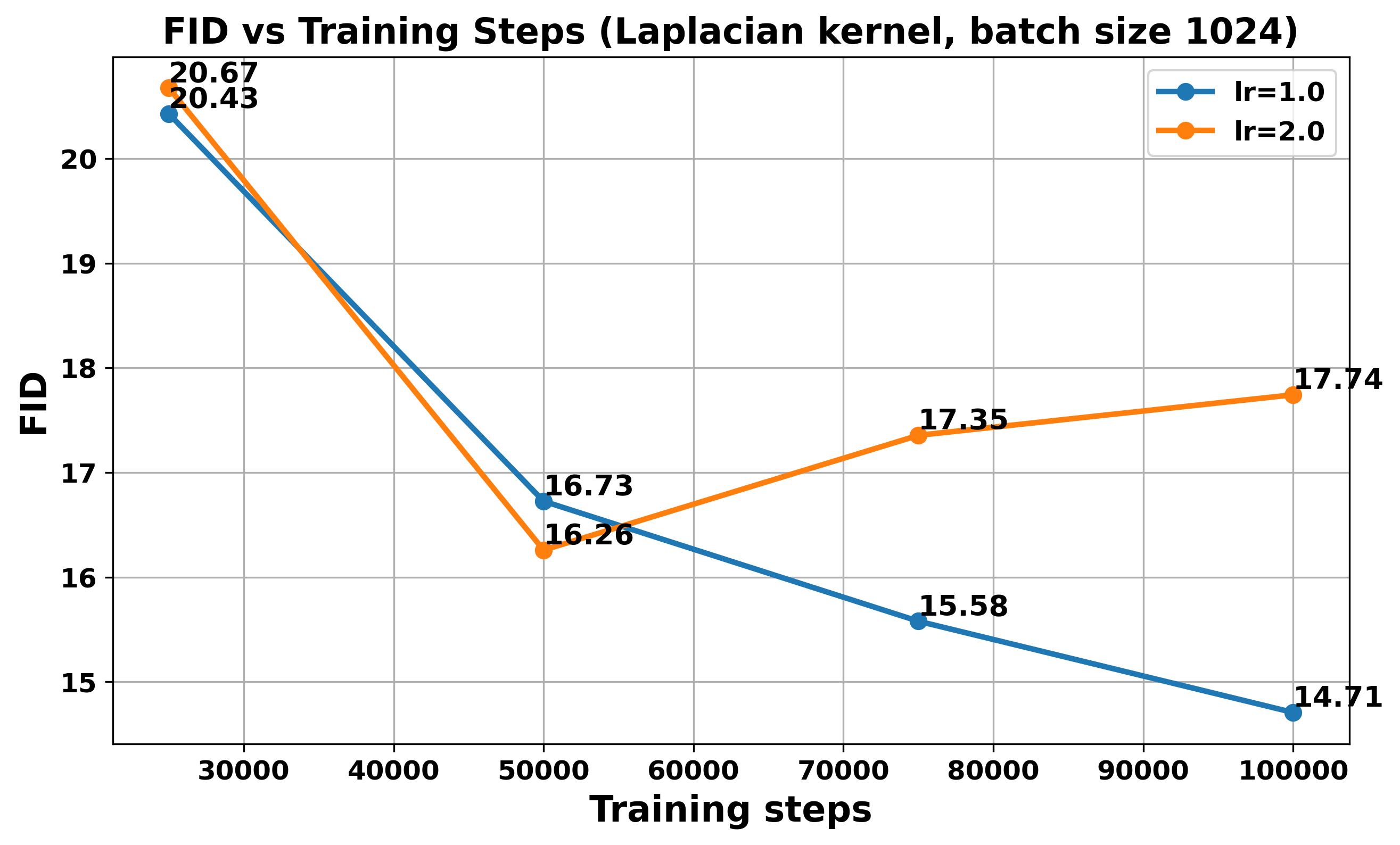}
    \caption{Laplacian kernel, $B{=}1024$}\label{fig:image_generation3}
  \end{subfigure}

  \vspace{0.5em}

  % ---- Row 2: Gaussian kernel + ablations ----
  \begin{subfigure}[t]{0.32\textwidth}
    \centering
    \includegraphics[width=\linewidth]{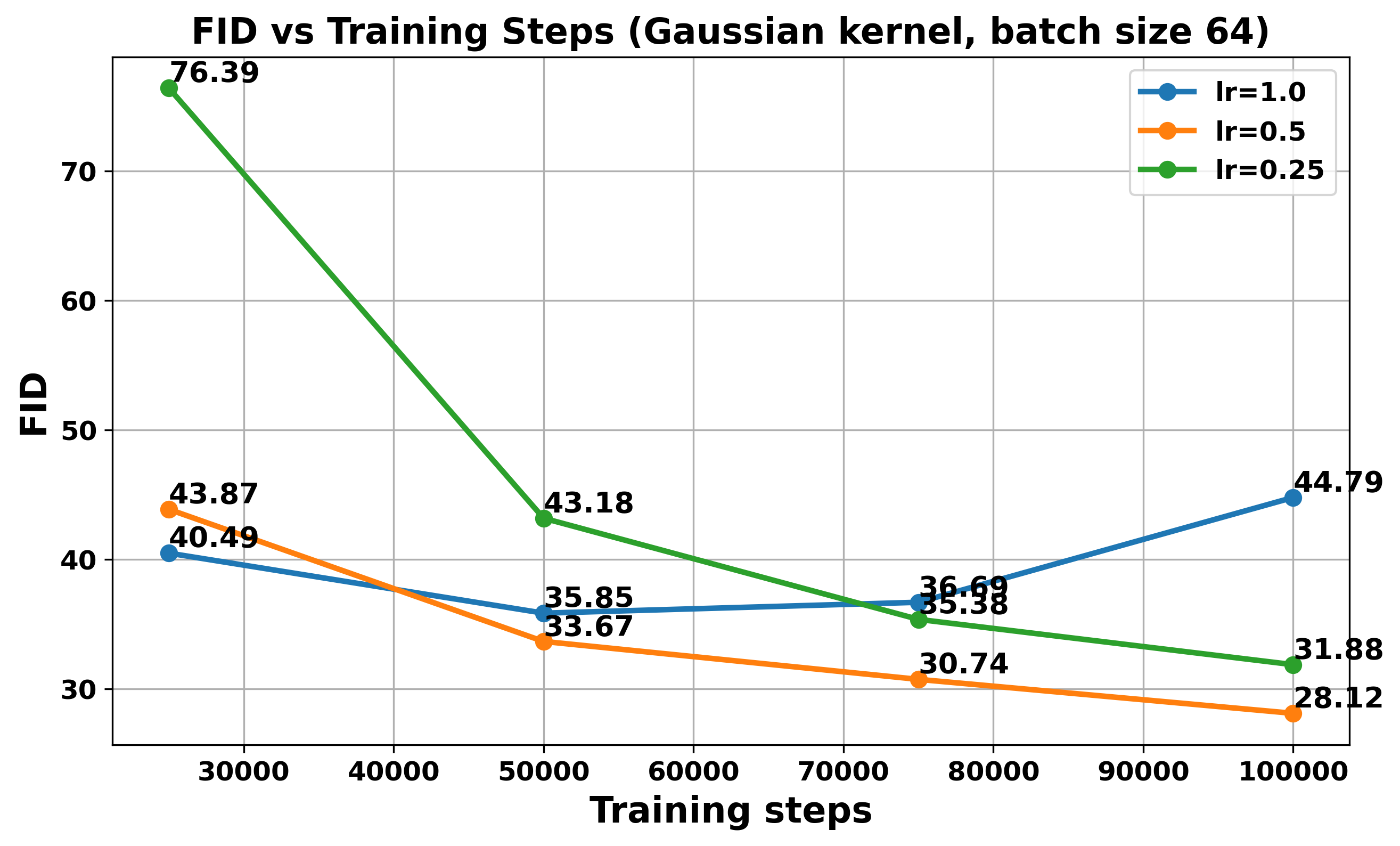}
    \caption{Gaussian kernel, $B{=}64$}\label{fig:image_generation4}
  \end{subfigure}\hfill
  \begin{subfigure}[t]{0.32\textwidth}
    \centering
    \includegraphics[width=\linewidth]{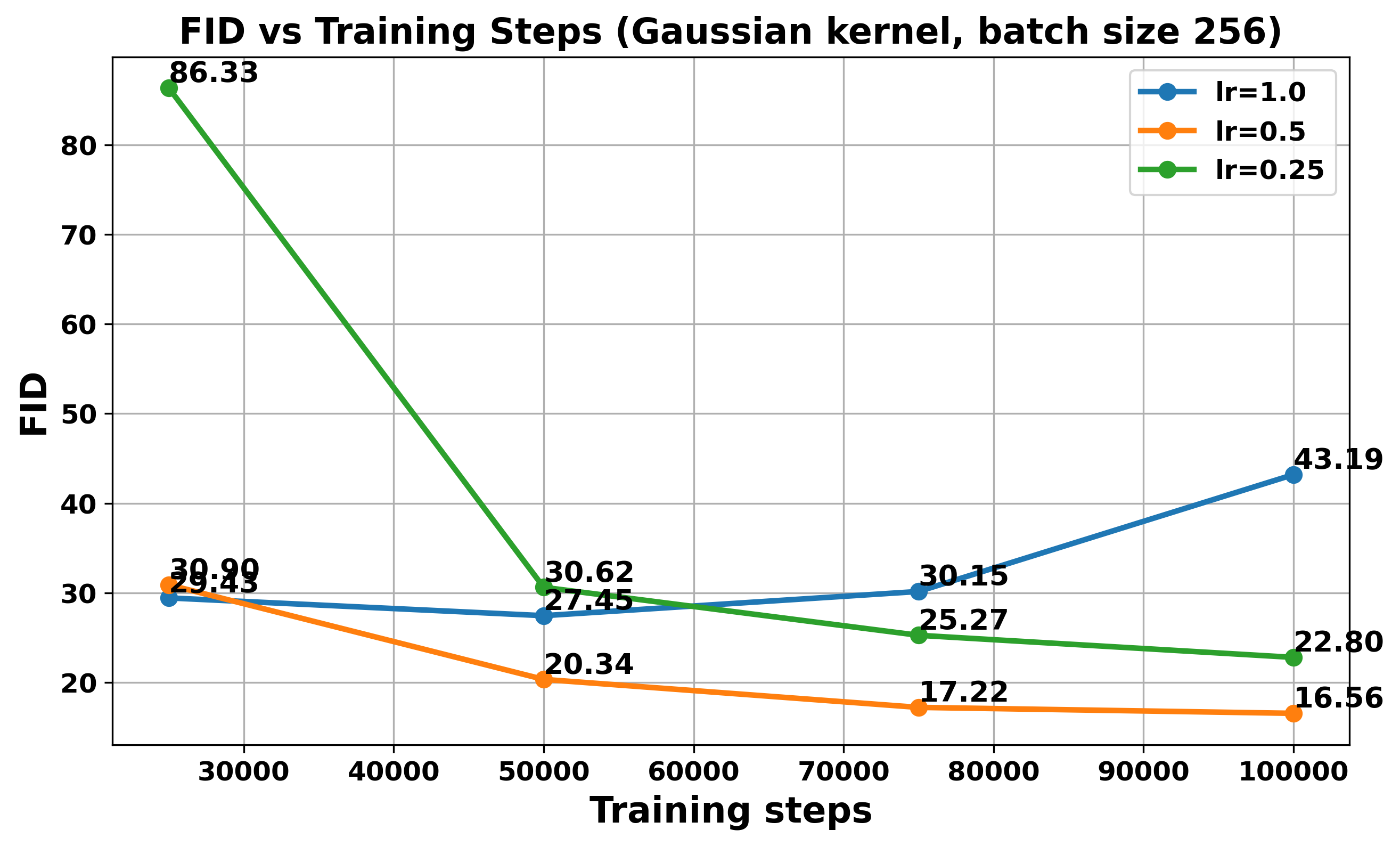}
    \caption{Gaussian kernel, $B{=}256$}\label{fig:image_generation5}
  \end{subfigure}\hfill
  \begin{subfigure}[t]{0.32\textwidth}
    \centering
    \includegraphics[width=\linewidth]{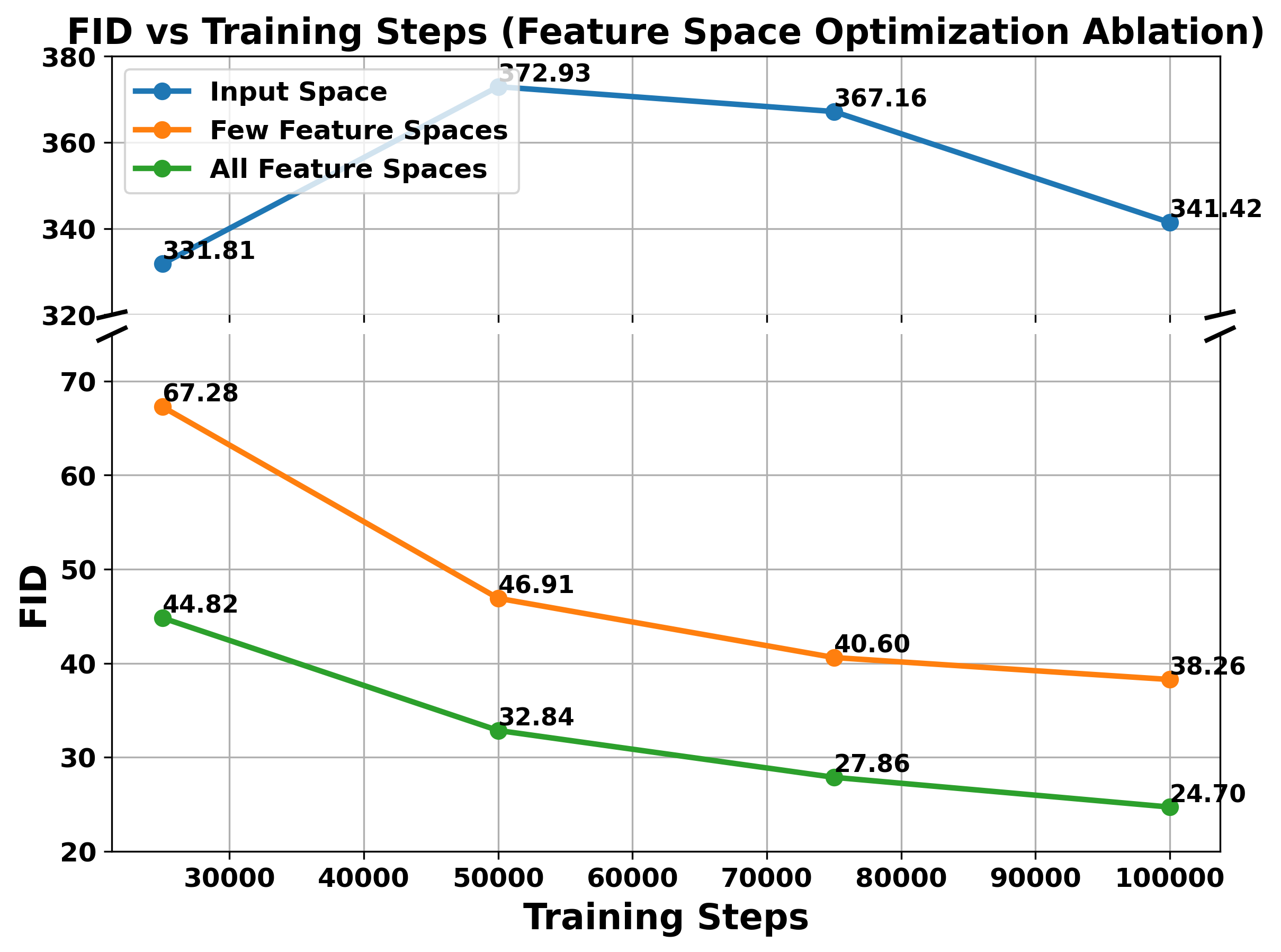}
    \caption{Ablation on multiple feature spaces}\label{fig:image_generation6}
  \end{subfigure}

    \caption{\textbf{Image generation.} FID trajectories under different kernels and batch sizes, together with an ablation on using multiple feature spaces. The Laplacian kernel corresponds to the first-order kernel used in Drifting Model \cite{deng2026generative}, whereas the Gaussian kernel is the second-order kernel derived in our framework. Overall, the Gaussian kernel performs comparably to the Laplacian kernel, and can be better in some regimes. In the ablation (batch size $B{=}64$, lr$ = 5\times 10^{-5}$), we compare using the default feature set with thousands of channels, using only four features (the outputs at the four encoder resolutions), and operating directly in the original space; The latter two settings fail to produce high-quality results.}
  \label{fig:image_generation}
\end{figure}

\begin{figure}[t]
  \centering
  \includegraphics[width=0.33\linewidth]{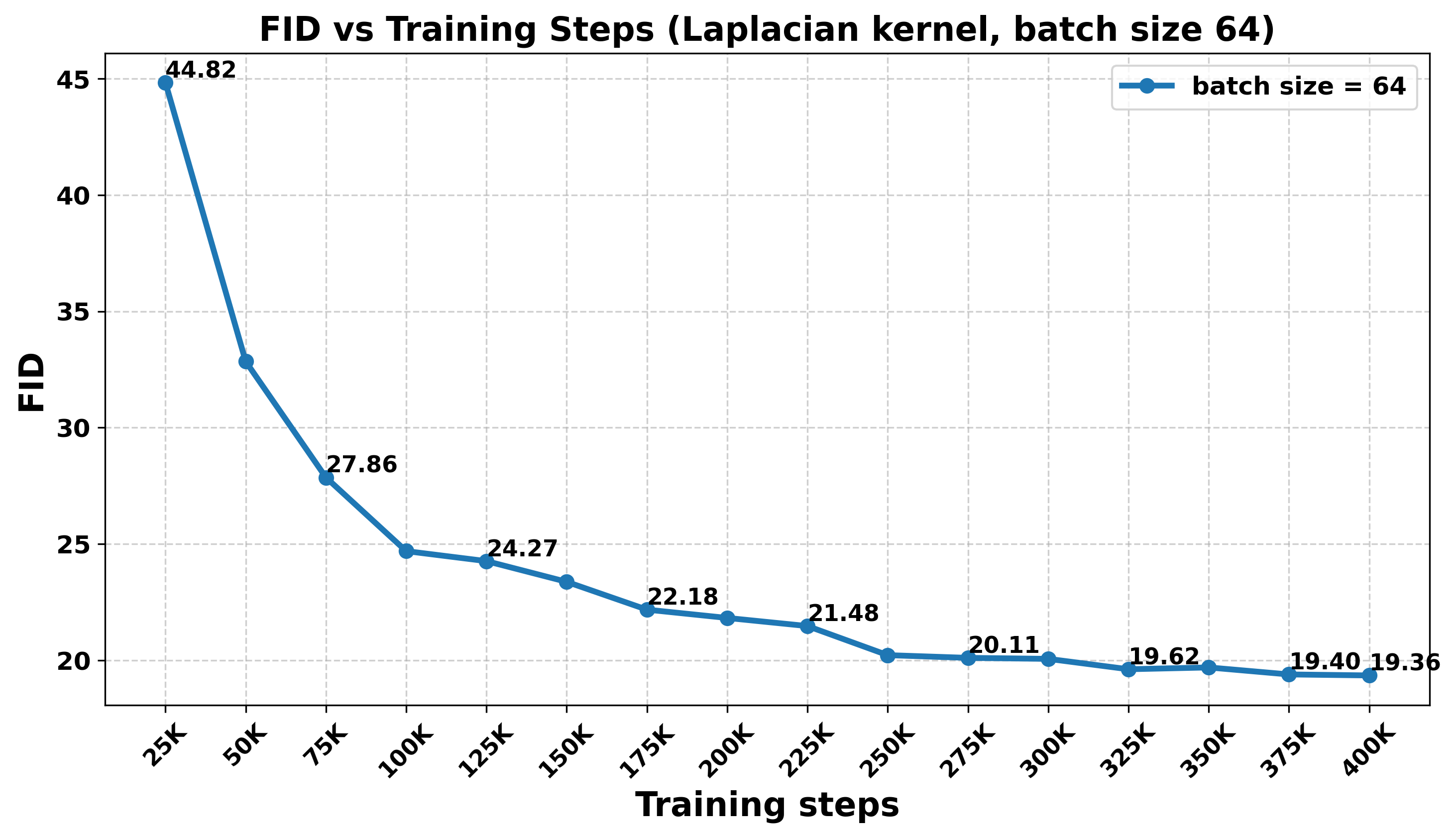}
  \caption{\textbf{FID over training with a standard batch size.} 
  FID curves for the Long-Short Flow-Map method / Drifting Model over training, using batch size 64 (with a tuned learning rate). The results show that strong generative performance can be achieved without extremely large batch sizes.}
  \label{fig:bs64_25k}
\end{figure}

\begin{table}[t]
  \centering
    \caption{\textbf{Comparison under different training budgets.} We report the batch size, training steps, and FID for our method and prior baselines using a DiT-B/2 backbone for one-step latent-space generation.}
  \label{tab:bs_steps_fid}
  \vspace{0.25em}
  \begin{tabular}{lccc}
    \toprule
    \textbf{Method} & \textbf{Batch size} & \textbf{Training steps} & \textbf{FID} \\
    \midrule
    Consistency Models \cite{song2023consistency} & 64 & 400K & 33.2 \\
    Shortcut Models \cite{frans2025shortcut}    & 64 & 400K & 20.5 \\
    MeanFlow \cite{geng2025mean}           & 64 & 400K & 12.4\\

    \midrule
    \multicolumn{4}{l}{\textbf{Long-Short Flow Map / Drifting (ours)}}\\
    \midrule
    \quad Laplacian kernel ($B{=}64$, mixed lr)   & 64 & 400K & 19.36 \\
    \quad Laplacian kernel ($B{=}64$)   & 64 & 100K & 28.12 \\
    \quad Laplacian kernel ($B{=}256$)  & 256 & 100K & 18.87 \\
    \quad Laplacian kernel ($B{=}1024$) & 1024 & 100K & 14.71 \\
    \quad Gaussian kernel  ($B{=}64$)   & 256 & 100K & 24.70 \\
    \quad Gaussian kernel  ($B{=}256$)  & 1024 & 100K & 16.56 \\
    \bottomrule
  \end{tabular}
\end{table}

\begin{figure}[t]
    \centering
    \includegraphics[width=\textwidth]{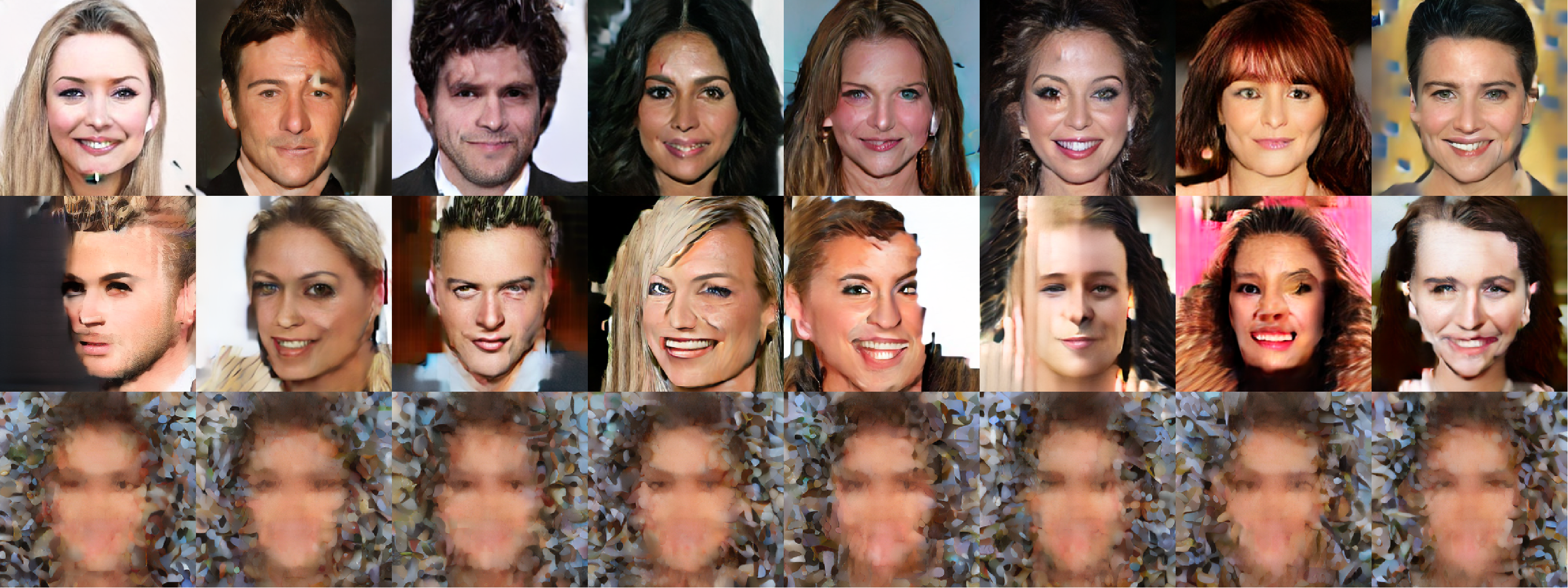}
    \caption{\textbf{Ablation study on feature-space optimization.} We compare three settings: using the default feature set with thousands of features (top), using only four features (middle), and operating directly in the original space (bottom).}
    \label{fig:drifting_ablation}
    \vspace{-1.5ex}
\end{figure}

\section{Discussion on Classifier-Free Guidance}\label{sec:CFG}
Since our computation of the short flow-map segment $\psi_{1-\Delta t\to 1}$ is essentially based on numerically integrating the velocity field, and the velocity is obtained from the closed-form solution in flow matching, it is natural to incorporate CFG when evaluating the velocity on this short segment. Below we outline some possible choices, and exploring a broader design space together with empirical validation is left for future work.
\begin{enumerate}
    \item \textbf{CFG on the velocity field $u_t$.} 
    In \autoref{eq:forward_euler_approx} and \autoref{eq:trapezoidal_rule}, estimating $\psi_{1-\Delta t\to 1}$ introduces the velocity $u_{1-\Delta t}$. With CFG, we can replace the conditional velocity by $u_{1-\Delta t}(x| y)\ \leftarrow\ w\,u_{1-\Delta t}(x| y) + (1-w)\,u_{1-\Delta t}(x| \varnothing)$, where $y$ denotes the class label, $\varnothing$ denotes the null condition, and $w$ is the guidance scale. In the limiting case, this modification turns the $\frac{\mathbb{E}_{x_1\sim p_1}[(x_1-\psi^{\theta}_{0\to 1}(x_0))k_1(x_1,\psi^{\theta}_{0\to 1}(x_0))]}{\mathbb{E}_{x_1\sim p_1}[k_1(x_1,\psi^{\theta}_{0\to 1}(x_0))]}$ term in \autoref{eq:forward_euler_loss} and \autoref{eq:second_order_loss} into $w\frac{\mathbb{E}_{x_1\sim p_1}[(x_1-\psi^{\theta}_{0\to 1}(x_0|y))k_1(x_1,\psi^{\theta}_{0\to 1}(x_0|y))]}{\mathbb{E}_{x_1\sim p_1}[k_1(x_1,\psi^{\theta}_{0\to 1}(x_0|y))]} +(1-w)\frac{\mathbb{E}_{x_1\sim p_1}[(x_1-\psi^{\theta}_{0\to 1}(x_0| \varnothing))k_1(x_1,\psi^{\theta}_{0\to 1}(x_0| \varnothing))]}{\mathbb{E}_{x_1\sim p_1}[k_1(x_1,\psi^{\theta}_{0\to 1}(x_0| \varnothing))]}$.

    \item \textbf{CFG on the target distribution $p_1$.}
    Beyond modifying the velocity field, CFG can also be interpreted as modifying the underlying target distribution \cite{ho2022classifier}. Concretely, one may replace $p_1(x|y)$ by its CFG-guided counterpart $p_1^{\mathrm{cfg}} = p_1^w(x|y)p_1^{1-w}(x|\varnothing)$, which correspondingly transforms the $\frac{\mathbb{E}_{x_1\sim p_1(x_1|y)}[(x_1-\psi^{\theta}_{0\to 1}(x_0))k_1(x_1,\psi^{\theta}_{0\to 1}(x_0))]}{\mathbb{E}_{x_1\sim p_1(x_1|y)}[k_1(x_1,\psi^{\theta}_{0\to 1}(x_0))]}$ term in \autoref{eq:forward_euler_loss} and \autoref{eq:second_order_loss} into $\frac{\mathbb{E}_{x_1\sim p_1^w(x|y)p_1^{1-w}(x|\varnothing)}[(x_1-\psi^{\theta}_{0\to 1}(x_0))k_1(x_1,\psi^{\theta}_{0\to 1}(x_0))]}{\mathbb{E}_{x_1\sim p_1^w(x|y)p_1^{1-w}(x|\varnothing)}[k_1(x_1,\psi^{\theta}_{0\to 1}(x_0))]}$.

    \item \textbf{Other possible directions.}
    The above two options primarily modify the attraction component. Similar to \cite{deng2026generative}, one may also consider applying guidance to the impulse term $\frac{\mathbb{E}_{x^\theta_{1} = \psi_{0\to 1}^\theta(x'_0),x'_0\sim p_0}[(\psi_{0 \to 1}^\theta(x_0)-x_{1}^\theta)k_{1}(\psi_{0 \to 1}^\theta(x_0),x_{1}^\theta)]}{\mathbb{E}_{x^\theta_{1} = \psi_{0\to 1}^\theta(x'_0),x'_0\sim p_0}[k_{1}(\psi_{0 \to 1}^\theta(x_0),x_{1}^\theta)]}$ in \autoref{eq:second_order_loss}.
\end{enumerate}

\end{document}